\documentclass{article}

\usepackage{arxiv}

\usepackage{microtype}
\usepackage{graphicx}
\usepackage{subfigure}
\usepackage{booktabs} 

\usepackage[utf8]{inputenc}
\usepackage[T1]{fontenc}
\usepackage[colorlinks=true,linkcolor=blue, citecolor=blue]{hyperref}
\usepackage{url} 
\usepackage{booktabs}
\usepackage{amsfonts} 
\usepackage[export]{adjustbox}
\usepackage{amsmath}
\usepackage{amsthm}
\usepackage{nicefrac} 
\usepackage{microtype} 
\usepackage[flushleft]{threeparttable}
\usepackage{array, multirow}
\usepackage{placeins}
\usepackage{mathtools}
\usepackage{algorithm, algorithmic}
\usepackage{enumerate}
\usepackage[square, sort]{natbib}
\usepackage{bm}

\setcitestyle{numbers,square,citesep={,},aysep={,},yysep={,}}

\begin{document}

\title{Solving NP-Hard Problems on Graphs with Extended AlphaGo Zero}

\makeatletter
\newcommand{\printfnsymbol}[1]{%
  \textsuperscript{\@fnsymbol{#1}}%
}
\makeatother

\author{
    Kenshin Abe \textsuperscript{\rm{1}} \thanks{equal contribution}, 
    Zijian Xu \textsuperscript{\rm{1}} \printfnsymbol{1},
    Issei Sato \textsuperscript{\rm{1, 2}},
    Masashi Sugiyama \textsuperscript{\rm{2, 1}} \\
    \textsuperscript{\rm{1}} The University of Tokyo, \textsuperscript{\rm{2}} RIKEN \\
    \texttt{abe@ms.k.u-tokyo.ac.jp}, \texttt{xuzijian@ms.k.u-tokyo.ac.jp}, \\
    \texttt{sato@k.u-tokyo.ac.jp}, \texttt{sugi@k.u-tokyo.ac.jp} \\
}

\date{}
\maketitle

\begin{abstract}
There have been increasing challenges to solve combinatorial optimization problems by machine learning.
\citet{S2V-DQN} proposed an end-to-end reinforcement learning framework, S2V-DQN, which automatically learns graph embeddings to construct solutions to a wide range of problems.
To improve the generalization ability of their Q-learning method, we propose a novel learning strategy based on AlphaGo Zero \cite{alphago-zero} which is a Go engine that achieved a superhuman level without the domain knowledge of the game.
Our framework is redesigned for combinatorial problems, where the final reward might take any real number instead of a binary response, win/lose.
In experiments conducted for five kinds of NP-hard problems including {\sc MinimumVertexCover} and {\sc MaxCut}, our method is shown to generalize better to various graphs than S2V-DQN.
Furthermore, our method can be combined with recently-developed graph neural network (GNN) models such as the \emph{Graph Isomorphism Network} \cite{GIN}, resulting in even better performance.
This experiment also gives an interesting insight into a suitable choice of GNN models for each task.
\end{abstract}

\section{Introduction}
There is no polynomial-time algorithm found for NP-hard problems \cite{Cook71}, but they often arise in many real-world optimization tasks.
Therefore, a variety of algorithms have been developed, including approximation algorithms \cite{fvs-2opt, maxcut-878}, meta-heuristics based on local searches such as simulated annealing and evolutionary computation \cite{MetaHeuristics, maxcut-heuristic}, general-purpose exact solvers such as CPLEX \footnote{www.cplex.com} and Gurobi \cite{gurobi}, and problem-specific exact solvers.
Some problem-specific exact solvers are fast and practical for certain well-studied combinatorial problems such as {\sc MinimumVertexCover} \citep{iwata, mis-sugoi}.
These algorithms can handle sparse graphs of millions of nodes.
However, they usually require problem-specific highly sophisticated reduction rules and heavy implementation and are hard to generalize for other combinatorial problems. Moreover, real-world NP-hard problems are usually much more complex than these problems and thus it is difficult to construct ad-hoc efficient algorithms for them.

Recently, machine learning approaches have been actively investigated to solve combinatorial optimization, with the expectation that the combinatorial structure of the problem can be automatically learned without complicated hand-crafted heuristics.
In the early stage, many of these approaches focused on solving specific problems \cite{NeuralComputation, NeuralCombinatorial} such as the traveling salesperson problem ({\sc TSP}).
Recently, \citet{S2V-DQN} proposed a framework to solve combinatorial problems by a combination of reinforcement learning and graph embedding, which attracted attention for the following two reasons:
It does not require any knowledge on graph algorithms other than greedy selection based on network outputs.
Furthermore, it learns algorithms without any training dataset.
Thanks to these advantages, the framework can be applied to a diverse range of problems over graphs and it also performs much better than previous learning-based approaches.
However, we observed poor empirical performance on some graphs having different characteristics (e.g., synthetic graphs and real-world graphs) than random graphs that were used for training, possibly because of the limited exploration space of their Q-learning method.

In this paper, to overcome this weakness, we propose to replace Q-learning with our novel learning strategy named CombOpt Zero.
CombOpt Zero is inspired by AlphaGo Zero \citep{alphago-fan}, a superhuman engine of Go, which conducts Monte Carlo Tree Search (MCTS) to train deep neural networks.
AlphaGo Zero was later generalized to AlphaZero \cite{alphazero} so that it can handle other games; however, its range of applications is limited to two-player games whose state is win/lose (or possibly draw).
We extend AlphaGo Zero to a bunch of combinatorial problems by a simple normalization technique based on random sampling and show that it successfully works in experiments.
More specifically, we train our networks for five kinds of NP-hard tasks and test on different instances including standard random graphs (e.g., the Erd\H{o}s-Renyi model \cite{ER-model} and the Barab\'{a}si-Albert model \cite{BA-model}) and many real-world graphs.
We show that CombOpt Zero has a better generalization to a variety of graphs than the existing method, which indicates the MCTS based training strengthens the exploration of various actions.
Also, using the MCTS at test time significantly improves performance for a certain problem and guarantees the solution quality.
Furthermore, we propose to combine our framework with several Graph Neural
Network models \cite{GCNsKipf, GIN, PGNN}, and experimentally demonstrate that an appropriate choice of models contributes to improving the performance with a significant margin.

\section{Preliminary}
In this section, we introduce the main ingredients which our work is based on.

\subsection{Notations}
In this paper, we use $G = (V, E)$ to denote an undirected and unlabelled graph, where $V$ is the set of vertices and $E$ is the set of edges. Since $v$ is used to represent a state value in AlphaGo Zero, we often use $x, y, ...$ to denote a node of a graph. $V(G)$ indicates the set of vertices of graph $G$. $\mathcal{N}(x)$ means the set of $1$-hop neighbors of node $x$ and $\mathcal{N}(S) = \bigcup_{x \in S}\mathcal{N}(x)$. We usually use $a, b, \ldots$ for actions.
Bold variables such as $\bm{p}$ and $\bm{\pi}$ are used for vectors.

\newcommand{\argmax}{\mathop{\rm argmax}\limits}
\newcommand{\argmin}{\mathop{\rm argmin}\limits}

\subsection{Machine Learning for Combinatorial Optimization}
Machine learning approaches for combinatorial optimization problems have been studied in the literature, starting from \citet{NeuralComputation}, who applied a variant of neural networks to small instances of Traveling Salesperson Problem ({\sc TSP}).
With the success of deep learning, more and more studies were conducted including \citet{NeuralCombinatorial, AttentionTSP} for {\sc TSP} and \citet{satnet} for {\sc MaxSat}.

\citet{S2V-DQN} proposed an end-to-end reinforcement learning framework S2V-DQN, which attracted attention because of promising results in a wide range of problems over graphs such as {\sc MinimumVertexCover} and {\sc MaxCut}.
It optimizes a deep Q-network (DQN) where the Q-function is approximated by a graph embedding network, called \texttt{structure2vec} (S2V) \cite{structure2vec}.
The DQN is based on their reinforcement learning formulation, where each action is picking up a node and each state represents the ``sequence of actions''.
In each step, a partial solution $S \subset V$, i.e., the current state, is expanded by the selected vertex $v^* = \argmax_{v \in V(h(S))}Q(h(S), v)$ to $(S, v^*)$, where $h(\cdot)$ is a fixed function determined by the problem that maps a state to a certain graph, so that the selection of $v$ will not violate the problem constraint.
For example, in {\sc MaximumIndependentSet}, $h(S)$ corresponds to the induced subgraph of the input graph $G = (V, E)$, where vertices are limited to $V \backslash (S \cup \mathcal{N}(S))$.
The immediate reward is the change in the objective function.
The Q-network, i.e, S2V learns a fixed dimensional embedding for each node.

In this work, we mitigate the issue of S2V-DQN's generalization ability.
We follow the idea of their reinforcement learning setting, with a different formulation, and replace their Q-learning by a novel learning strategy inspired by AlphaGo Zero.
Note that although some studies combine classic heuristic algorithms and learning-based approaches (using dataset) to achieve the state-of-the-art performance \cite{zhuwen}, we stick to learning without domain knowledge and dataset in the same way as S2V-DQN.

\subsection{AlphaGo Zero} \label{prelim-alphago-zero}
AlphaGo Zero \citep{alphago-zero} is a well-known superhuman engine designed for use with the game of Go.
It trains a deep neural network $f_\theta$ with parameter $\theta$ by reinforcement learning.
Given a state (game board), the network outputs $f_\theta(s) = (\bm{p}, v)$, where $\bm{p}$ is the probability vector of each move and $v \in [-1, 1]$ is a scalar denoting the value of the state.
If $v$ is close to $1$, the player who takes a corresponding action from state $s$ is very likely to win.

The fundamental idea of AlphaGo Zero is to enhance its own networks by self-play.
For this self-play, a special version of Monte Carlo Tree Search (MCTS) \cite{MCTS-first}, which we describe later, is used.
The network is trained in such a way that the policy imitates the enhanced policy by MCTS $\bm{\pi}$, and the value imitates the actual reward from self play $z$ (i.e. $z = 1$ if the player wins and $z = -1$ otherwise).
More formally, it learns to minimize the loss
\begin{equation}
\mathcal{L} = (z - v)^2 + \mathrm{CrossEntropy}(\bm{p}, \bm{\pi}) + c_{\mathrm{reg}}\|\theta\|_2^2, \label{alphagoloss}
\end{equation}
where $c_{\mathrm{reg}}$ is a nonnegative constant for $L_2$ regularization.

MCTS is a heuristic search for huge tree-structured data.
In AlphaGo Zero, the search tree is a rooted tree, where each node corresponds to a state and the root is the initial state.
Each edge $(s, a)$ denotes action $a$ at state $s$ and stores a tuple
\begin{equation} \label{alphago-zero-edge-store}
    (N(s, a), W(s, a), Q(s, a), P(s, a)),
\end{equation}
where $N(s, a)$ is the visit count, $W(s, a)$ and $Q(s, a)$ are the total and mean action value respectively, and $P(s, a)$ is the prior probability.
One iteration of MCTS consists of three parts: \emph{select}, \emph{expand}, and \emph{backup}.
First, from the root node, we keep choosing an action that maximizes an upper confidence bound
\begin{equation} \label{alphagoucb}
    Q(s, a) + c_{\mathrm{puct}} P(s, a)\frac{\sqrt{\sum_{a'}N(s, a')}}{1 + N(s, a)},
\end{equation}
where $c_{\mathrm{puct}}$ is a non-negative constant (\emph{select}).
Once it reaches to unexplored node $s$, then the edge values (\ref{alphago-zero-edge-store}) are initialized using the network prediction $(\bm{p}, v) = f_{\theta}(s)$ (\emph{expand}).
After expanding a new node, each visited edge is traversed and its edge values are updated (\emph{backup}) so that $Q$ maintain the mean of state evaluations over simulations:
\begin{equation} \label{alphago-Q-update}
    Q(s,a) = \frac{1}{N(s,a)} \sum_{s' \mid s,a \rightarrow s'} v_{s'},
\end{equation}
where the sum is taken over those states reached from $s$ after taking action $a$.
After some iterations, the probability vector $\pi$ is calculated by
\begin{equation} \label{alphago-zero-pi}
    \bm{\pi}_a = \frac{N(s_0, a)^{1 / \tau}}{\sum_b N(s_0, b)^{1 / \tau}}
\end{equation}
for each $a \in A_{s_0}$, where $\tau$ is a temperature parameter.

AlphaGo Zero defeated its previous engine AlphaGo \citep{alphago-fan} with $100$-$0$ score without human knowledge, i.e., the records of the games of professional players and some known techniques in the history of Go.
We are motivated to take advantage of the AlphaGo Zero technique in our problem setting since we also aim at training deep neural networks for discrete optimization without domain knowledge.
However, we cannot directly apply AlphaGo Zero, which was designed for two-player games, to combinatorial optimization problems.
Section \ref{ext} and \ref{algo} explains how we manage to resolve this issue.

\subsection{Graph Neural Network} \label{GNN}
A Graph Neural Network (GNN) is a neural network that takes graphs as input.
\citet{GCNsKipf} proposed the \emph{Graph Convolutional Network} (GCN) inspired by spectral graph convolutions.
Because of its scalability, many variants of spatial based GNN were proposed.
Many of them can be described as a Message Passing Neural Network (MPNN) \cite{MpNNs}.
They recursively aggregate neighboring feature vectors to obtain node embeddings that capture the structural information of the input graph.
The \emph{Graph Isomorphism Network} (GIN) \cite{GIN} is one of the most expressive MPNNs in terms of graph isomorphism.
Although they have a good empirical performance, some studies point out the limitation of the representation power of MPNNs \cite{GIN, WLNeural}.

\citet{InvariantEquivariant} proposed an Invariant Graph Network (IGN) using tensor representations of a graph and was shown to be universal \cite{InvarintUniversality, KerivenUniversality}.
Since it requires a high-order tensor in middle layers, which is impractical, \citet{PGNN} proposed \emph{2-IGN+}, a scalable and powerful model. 

All of these models, as well as S2V \cite{structure2vec} used in S2V-DQN, are compared in the experiments to test the difference in the performance for combinatorial optimization.
The detail of each model is described in Appendix \ref{detail-GNN}.

\subsection{NP-hard Problems on Graphs} \label{NPhard-problems}
Here, we introduce three problems out of the five NP-hard problems we conducted experiments on.
The other two problems are described in Appendix \ref{Other-NPhard}.

\paragraph{Minimum Vertex Cover}
A subset of nodes $V' \subset V$ is called a vertex cover if all edges are covered by $V'$;
for all $(x, y) \in E$, $x \in V'$ or $y \in V'$ holds.
{\sc MinimumVertexCover} asks a vertex cover whose size is minimum.

\paragraph{Max Cut}
Let $C \subset E$ be a cut set of between $V' \subset V$ and $V \backslash V'$, i.e.,
$C = \{(u, v) \in E \mid u \in V', v \in V \backslash V'\}$.
{\sc MaxCut} asks for a subset $V'$ that maximizes the size of cut set $C$.

\paragraph{Maximum Clique}
A subset of nodes $V' \subset V$ is called a clique if any two nodes in $V'$ are adjacent in the original graph;
for all $x, y \in V'$ ($x \neq y$), $(x, y) \in E$.
{\sc MaximumClique} asks for a largest clique.

Note that for all problems in the experiments, we are focusing on the unweighted case.
In Section \ref{MDP}, we show that all of these problems can be formulated in our framework.

\section{Method}
In this section, we give a detailed explanation of our algorithm to solve combinatorial optimization problems over graphs.
First, we introduce a reinforcement learning formulation of NP-hard problems over graphs and show that the problems introduced in Section \ref{NPhard-problems} can be accommodated under this formulation.
Next, we explain the basic ideas of CombOpt Zero in light of the difference between 2-player games and combinatorial optimization.
Finally, we propose the whole algorithm along with pseudocodes.

\subsection{Reduction to MDP}\label{MDP}
To apply the AlphaGo Zero method to combinatorial optimization, we reduce graph problems into a Markov Decision Process (MDP) \cite{MDP}.
A deterministic MDP is defined as
$(S, A_s, T, R)$, where $S$ is a set of states, $A_s$ is a set of actions from the state $s \in S$, $T: S \times A_s \rightarrow S$ is a deterministic state transition function, and $R: S \times A_s \rightarrow \mathbb{R}$ is an immediate reward function.
In our problem setting on graphs, each state $s \in S$ is represented as a \emph{labeled} graph, a tuple $s = (G, d)$.
$G = (V, E)$ is a graph and $d : V \rightarrow L$ is a node-labeling function, where $L$ is a label space.

For each problem, we have a set of terminal states $S_{\mathrm{end}}$.
Given a state $s$, we repeat selecting an action $a$ from $A_s$ and transiting to the next state $T(s, a)$, until $s \in S_{\mathrm{end}}$ holds.
By this process, we get a sequence of states and actions $[s_0, a_0, s_1, a_1, ..., a_{N-1}, s_N]$ of $N$ steps where $s_N \in S_{\mathrm{end}}$, which we call a trajectory.
For this trajectory, we can calculate the simple sum of immediate rewards \(\sum_{n = 0}^{N - 1} R(s_n, a_n)\), and we define $r^* (s)$ be the maximum possible sum of immediate rewards out of all trajectories from state $s$.
Let $\mathrm{Init}$ be a function that maps the input graph to the initial state.
Our goal is to, given graph $G_0$, obtain the maximum sum of rewards $r^* (\mathrm{Init}(G_0))$.
 
Below, we show that the problems described in Section \ref{NPhard-problems} can be naturally accommodated in this framework, by defining $L$, $A_s$, $T$, $R$, $\mathrm{Init}$, and $S_{\mathrm{end}}$ as follows:

\paragraph{Minimum Vertex Cover}
Since we do not need a label of the graph for this problem, we set $d$ to a constant function on any set $L$ (e.g., $L = \mathbb{R}$, $d(s) = 1$).
Actions are represented by selecting one node ($A_s = V$).
$\mathrm{Init}$ uses the same graph as $G_0$ and $d$ defined above.
$T(s, x)$ returns the next state, corresponding to the graph where edges covered by $x$ and isolated nodes are deleted.
$S_{\mathrm{end}}$ is the states with the empty graphs.
$R(s, x) = -1$ for all $s$ and $x$ because we want to minimize the number of transition steps in the MDP.

\paragraph{Max Cut}
In each action, we color a node by $0$ or $1$, and remove it while each node keeps track of how many adjacent nodes have been colored with each color.
$A_s = \{(x, c) \mid x \in V, c \in \{1, 2\}\}$ denotes a set of possible coloring of a node, where $(x, c)$ means coloring node $x$ with color $c$.
$L = \mathbb{N}^2$, representing the number of colored (and removed) nodes in each color (i.e., $l_0$ is the number of (previously) adjacent nodes of $x$ colored with $0$, and same for $l_1$, where $l = d(x)$).
$\mathrm{Init}$ uses the same graph as $G_0$ and sets both of $d(x)$ be $0$ for all $x \in V$.
$T(s, (x,c))$ increases the $c$-th value of $d(x')$ by one for $x' \in N(x)$ and removes $x$ and neighboring edges from the graph.
$S_{\mathrm{end}}$ is the states with the empty graphs.
$R(s, (x,c))$ is the $(3-c)$-th (i.e., $2$ if $c=1$ and $1$ if $c=2$) value of $d(x)$, meaning the number of edges in the original graph which have turned out to be included in the cut set (i.e., colors of the two nodes are different).

\paragraph{Maximum Clique}
$d$, $A_s$, $\mathrm{Init}$, and $S_{\mathrm{end}}$ are the same as {\sc MinimumVertexCover}.
$T(s, x)$ return the next state whose corresponding graphs is the induced subgraph of $\mathcal{N}(x)$; 1-hop neighbors of $x$.
$R(s, x) = 1$ because we want to maximize the number of transition steps of the MDP.

\begin{figure}[t]
    \centering
    \includegraphics[width=13cm]{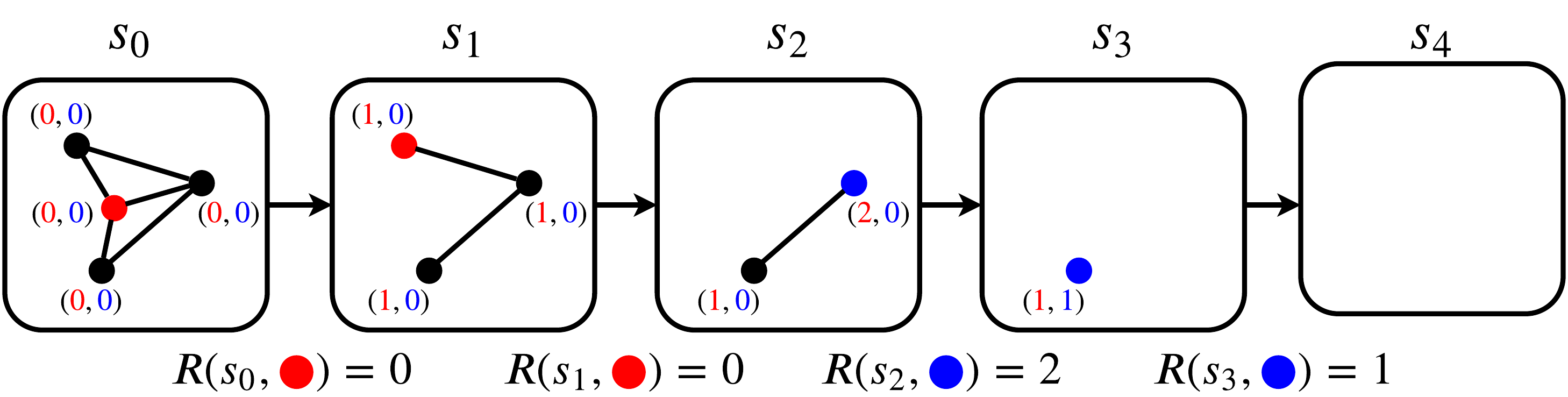}
    \caption{\textbf{Example of {\sc MaxCut} MDP trainsitions.} In the transition sequence, the upper left and center node are colored in {\color{red} $\bullet$}, the right and lower left node are colored in {\color{blue} $\bullet$}, resulting in the cumulative reward $3$.}
    \label{fig:maxcut-mdp}
\end{figure}

Figure \ref{fig:maxcut-mdp} shows some possible MDP transitions for {\sc MaxCut}.
It is easy to check that, for all reduction rules described above, finding a sequence that maximizes the sum of rewards is equivalent to solving the original problems.
One of the differences from the formulation of \citet{S2V-DQN} is that we do not limit the action space to a set of nodes.
This enables more flexibility and results in different reduction rules in some problems.
For example, in our formulation of {\sc MaxCut}, actions represent \emph{a node coloring}.
See Appendix \ref{compare-reduction-rules} for more examples of the differences.

\subsection{Ideas for CombOpt Zero} \label{ext}
Here, we explain the basic idea behind our extension of AlphaGo Zero into combinatorial optimization problems.

\paragraph{Combinatorial Optimization vs. Go}
When we try to apply the AlphaGo Zero method to our MDP formulation of combinatorial optimization, we need to consider the following two differences from Go.
First, in our problem setting, the states are represented by (labeled) graphs of different sizes, while the boards of Go can be represented by $19 \times 19$ fixed-size matrix.
This can be addressed easily by adopting GNN models instead of convolutional neural networks in AlphaGo Zero, just like S2V-DQN.
The whole state $s = (G, d)$ can be given to GNN, where the coloring $d$ is used as the node feature.
The second difference is the possible answers for a state:
While the final result of Go is only win or lose, the answer to combinatorial problems can take any integers or even real numbers.
This makes it easy to model the state value of Go by ``how likely the player is going to win'', with the range of $[-1, 1]$, meaning that the larger the value is, the more likely the player is to win.
Imitating this, in our problem setting, one can evaluate the state value in such a way that it predicts the maximum possible sum of rewards from the current state.
However, since the answer might grow infinitely usually depending on the graph size, this naive approximation strategy makes it difficult to balance the scale in \eqref{alphagoloss} and \eqref{alphagoucb}.
For example, in \eqref{alphagoucb}, the first term could be too large when the answer is large, causing less focus on $P(s, a)$ and $N(s, a)$.
To mitigate this problem, we propose a reward normalization technique.

Given a state $s$, the network outputs $(\bm{p}, \bm{v}) = f_\theta(s)$.
While $\bm{p}$ is the same as AlphaGo Zero (action probabilities), $\bm{v}$ is a \emph{vector} instead of a scalar value.
$v_a$ predicts the \emph{normalized reward} of taking action $a$ from state $s$, meaning ``how good the reward is compared to the return obtained by random actions''.
Formally, we train the network so that $\bm{v_a}$ predicts 
   $(R(s, a) + r^* (T(s, a)) - \mu_s) / \sigma_s$,
where $\mu_s$ and $\sigma_s$ are the mean and the standard deviation of the cumulative rewards by random plays from state $s$.
When we estimate the state value of $s$, we consider taking action $a$ which maximizes $\bm{v}_a$ and restoring the unnormalized value by $\mu_s$ and $\sigma_s$:
\begin{equation} \label{restore-state-value}
    r_{\mathrm{estim}}(s) = 
    \begin{cases}
        0 & (s \in S_{\mathrm{end}})  \\
        \mu_s + \sigma_s \cdot (\max_{a \in A_s} \bm{v}_a) & (otherwise) .
    \end{cases}
\end{equation}
Similarly, we also let $W(s, a)$ and $Q(s, a)$ hold the sum and mean of \emph{normalized} action value estimations over the MCTS iterations.
Figure \ref{fig:backup} illustrates the idea of how we update these values efficiently in the \emph{backup} phase.
See Section \ref{algo} for the details.

\paragraph{Reward Normalization Technique}
\begin{figure}[t]
    \centering
    \includegraphics[width=6cm]{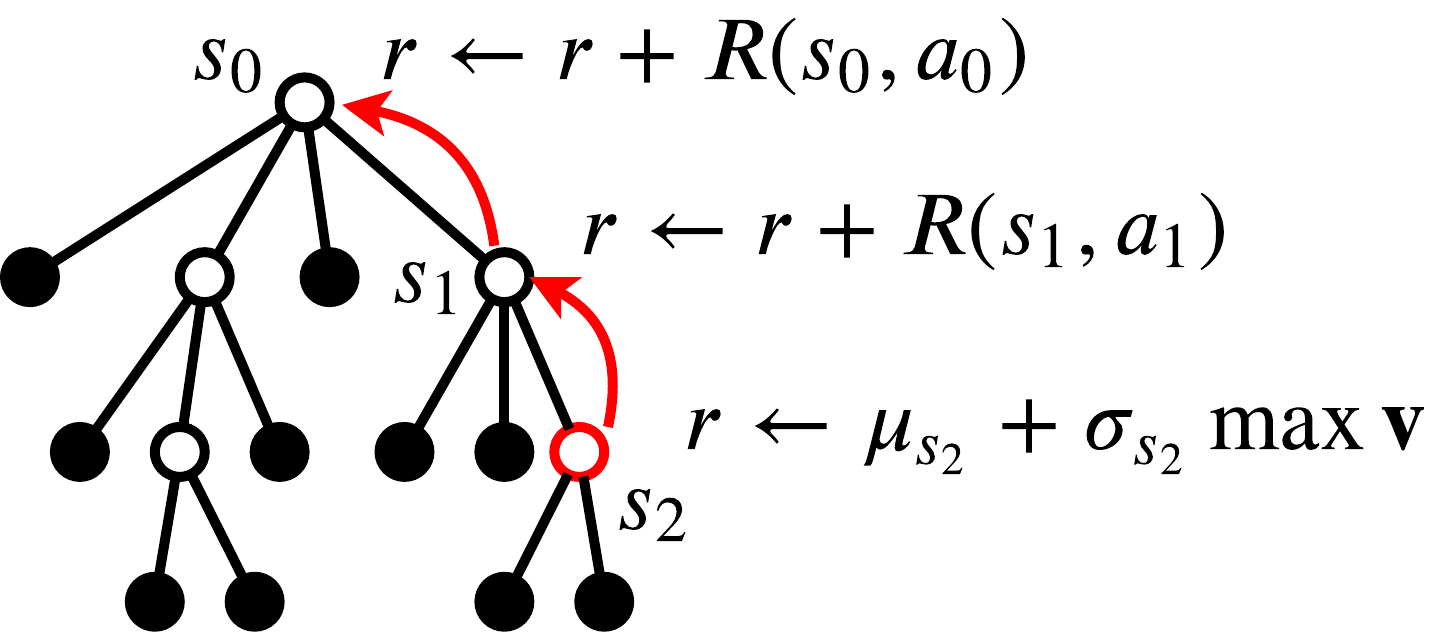}
    \caption{\textbf{Backup.} After estimating the reward of the expanded node, we iteratively update the rewards of its ancestors.}
    \label{fig:backup}
\end{figure}

By virtue of this normalization, we no longer need to care about the difference of scales in \eqref{alphagoloss} and \eqref{alphagoucb}. 
Moreover, it frees the algorithm from problem specification, that is, when we are to maximize some criterion, we always evaluate the action by ``how good it is compared to random actions''.

\subsection{Algorithms} \label{algo}
Based on the discussion so far, now we are ready to introduce the whole algorithm.

\paragraph{MCTS Tree Structure}
Same as AlphaGo Zero, each edge stores values in \eqref{alphago-zero-edge-store}.
Additionally, each node $s$ stores a tuple
$(\mu_s, \sigma_s)$,
the mean and the standard deviation of the results by random plays.

\paragraph{MCTS}
The pseudocode is available in Algorithm \ref{code-MCTS}.
Given an initial state $s_0$, we repeat the iterations of the three parts: \emph{select}, \emph{expand}, and \emph{backup}.
\emph{Select} is same as AlphaGo Zero (keep selecting a child node that maximizes \eqref{alphagoloss}).
Once we reach unexpanded node $s$, we \emph{expand} the node.
$(\bm{p}, \bm{v}) = f_\theta(s)$ is evaluated and each edge value is updated as $(N(s,a) = 0, W(s,a) = 0, Q(s,a) = 0, P(s,a) = \bm{p}_a)$ for $a \in A_s$.
At the same time, $(\mu_s, \sigma_s)$ is calculated by random sampling from $s$.
In \emph{backup}, for each node $s'$ and the corresponding action $a'$ in a backward pass, $N(s',a')$ is incremented by one in the same way as AlphaGo Zero.
The difference is that $Q(s',a')$ is updated to hold the estimation of the \emph{normalized} mean reward from $s'$.
We approximate the non-normalized state value by \eqref{restore-state-value} and calculate the estimated cumulative reward from $s'$ by adding the immediate reward each time we move back to the parent node (Figure \ref{fig:backup} illustrates this process).
$\bm{\pi}$ is calculated in the same way as AlphaGo Zero \eqref{alphago-zero-pi} after sufficient iterations.
While the number of the iterations is fixed in AlphaGo Zero, in CombOpt Zero, since the size of action space differs by states, we make it proportional to the number of actions: $c_{\mathrm{iter}} |A_s|$.
Following AlphaGo Zero, we add Dirichlet noise to the prior probabilities only for the initial state $s_0$ to explore the various initial actions.

\begin{algorithm}[t]
\centering
\small
  \caption{MCTS}
  \label{code-MCTS}
  \begin{algorithmic}
    \REQUIRE Network $f_{\theta}$, root node $s_0$
    \ENSURE Return $\bm{\pi}$, enhanced policy
    
    \WHILE{$\sum_{a \in A_{s_0}}N(s_0, a) \le c_\mathrm{iter} |A_{s_0}|$}
        \STATE $s = s_0$
        \STATE \COMMENT{\emph{select}}
        \WHILE{$s$ is expanded before and $s \not \in S_\mathrm{end}$}
          \STATE $a = \argmax_{b \in A_s} \Bigl(Q(s, b) + c_{\mathrm{puct}} P(s, b)\frac{\sqrt{\sum_{b'}N(s, b')}}{1 + N(s, b)}\Bigr)$
          \STATE $s \leftarrow T(s, a)$
        \ENDWHILE
        
        \STATE \COMMENT{\emph{expand}}
        \IF{$s \not \in S_\mathrm{end}$}
            \STATE $(\bm{p}, \bm{v}) = f_{\theta}(s)$
            \STATE initialize $(N(s, a) = 0, W(s, a) = 0, Q(s, a) = 0, P(s, a) = \bm{p_a})$ for each $a \in A_s$
            \STATE Calculate $(\mu_s, \sigma_s)$ by random sampling
        \ENDIF
        
        \STATE \COMMENT{\emph{backup}}
        \STATE $r = r_{\mathrm{estim}}(s)$
        \WHILE{$s$ is not $s_0$}
          \STATE $a =$ previous action
          \STATE $s \leftarrow $ parent of $s$
          \STATE $r \leftarrow r + R(s, a)$
          \STATE $r' = (r - \mu_s) / \sigma_s$
          \STATE $W(s, a) \leftarrow W(s, a) + r'$
          \STATE $N(s,a) \leftarrow N(s,a) + 1$
          \STATE $Q(s,a) \leftarrow \frac{W(s,a)} {N(s, a)}$
        \ENDWHILE
    \ENDWHILE
    \STATE Compute $\Bigl(\bm{\pi}_a = \frac{N(s_0, a)^{1 / \tau}}{\sum_b N(s_0, b)^{1 / \tau}} \Bigr)$ for each $a$
    \STATE return $\bm{\pi}$
  \end{algorithmic}
\end{algorithm}

\paragraph{Training}
Following AlphaGo Zero, the training of CombOpt Zero is composed of three roles:
\emph{data generators}, \emph{learners}, and \emph{model evaluators}.
The best model that has performed best so far is shared among all of these three components.
The \emph{data generator} repeats generating self-play records for a randomly generated graph input, by the MCTS based on the current best model.
The records are the sequence of
$(s, a, \bm{\pi}, z')$,
which means action $a$ was taken from $s$ depending on the MCTS-enhanced policy $\bm{\pi}$, and $z' = (z - \mu_R(s)) / \sigma_R(s)$ where $z$ is the cumulative reward from $s$ to the terminal state.
The \emph{learner} randomly sample mini-batches from the generator's data and update the parameter of the best model so that it minimizes the loss
\begin{equation} \label{comboptzero-loss}
    \mathcal{L} = (z' - \bm{v}_a)^2 + \mathrm{CrossEntropy}(\bm{p}, \bm{\pi}) + c_{\mathrm{reg}}\|\theta\|_2^2 .
\end{equation}
There are two modification from \eqref{alphagoloss}: since $\bm{v}$ is a vector, $\bm{v}_a$ is used for the loss; since we aim at learning normalized action value function $\bm{v}$, we use the normalized cumulative reward $z'$ instead of $z$.
The \emph{model evaluator} compares the updated model with the best one and stores the better one.
In CombOpt Zero, where we cannot compare the winning rate of two players, the evaluator generates random graph instances each time and compare the mean performance on them.

In CombOpt Zero, the data generator needs to calculate $z'$ for the broader reward function $R$ defined in Section \ref{MDP}.
This can be achieved in the same way as \emph{backup} in MCTS:
after generating the self-play trajectory, it first calculates the cumulative sum of immediate rewards in reverse order and normalizes them with $\mu_s$ and $\sigma_s$.
In this way, we can compute the normalized rewards from the trajectory effectively.
The pseudocode of the data generator is available in Algorithm \ref{code-data-generation}.

\begin{algorithm}[t]
\small
  \caption{Self-play Data Generation}
  \label{code-data-generation}
  \begin{algorithmic}
    \REQUIRE Network $f_{\theta}$, Initial graph $G_0$
    \ENSURE Return self-play data records
    \STATE \COMMENT{self-play}
    \STATE $s = \mathrm{Init}(G_0)$
    \STATE $\mathrm{records} = \{\}$
    \WHILE{$s \not\in S_{\mathrm{end}}$}
      \STATE $\bm{\pi} = \mathtt{MCTS}(s)$
      \STATE $a =$ sampled action according to probability $\bm{\pi}$
      \STATE $s \leftarrow T(s, a)$
      \STATE Add $(s, a, \bm{\pi})$ to $\mathrm{records}$
    \ENDWHILE
    \STATE \COMMENT{calculate $z'$}
    \STATE $z = 0$
    \FORALL{$(s, a, \bm{\pi})$ in $\mathrm{records}$ (reversed order)}
      \STATE $z \leftarrow z + R(s, a)$
      \STATE $z' = (z - \mu_R(s)) / \sigma_R(s)$
      \STATE Replace $(s, a, \bm{\pi})$ with $(s, a, \bm{\pi}, z')$
    \ENDFOR
    \STATE return $\mathrm{records}$
  \end{algorithmic}
\end{algorithm}
\FloatBarrier

\section{Experiments}\label{sec:main-experiment}
In this section, we show some brief results of our experiments on {\sc MinimumVertexCover}, {\sc MaxCut}, and {\sc MaximumClique}.
Refer to Appendix \ref{additional-results} for the full results, including the other two problems and further analyses.

\paragraph{Competitors}
Since we aim at solving combinatorial optimization problems without domain knowledge or training dataset, S2V-DQN is our main competitor.
Additionally, for each problem, we prepared some known heuristics or approximation algorithms to compare with.
For {\sc MinimumVertexCover}, we implemented a simple randomized $2$-approximation algorithm ($2$-approx) with a reduction rule that looks for degree-$1$ nodes. The detailed algorithm is explained in Appendix \ref{appendix:mvc-full}.
For {\sc MaxCut}, we used a randomized algorithm by semidefinite programming \citep{maxcut-878} and two heuristics, non-linear optimization with local search \cite{maxcut-burer} and cross-entropy method with local search \cite{maxcut-laguna} from MQLib \cite{maxcut-heuristic}.

\paragraph{Training and Test}
We trained the models for two hours to make the training of both CombOpt Zero and S2V-DQN converge (Only $2$-IGN+ was trained for four hours due to its slow inference).
Note that CombOpt Zero was trained on $4$ GPUs while S2V-DQN uses a single GPU because of its implementation, which we discuss in Section \ref{section:tradeoff}.
For the hyperparameters of the MCTS, we referred to the original AlphaGo Zero and its reimplementation, ELF OpenGo \cite{ELF-OpenGo}.
See Appendix \ref{experiment-settings} for the detailed environment and hyperparameters.
Since we sometimes observed extremely poor performance for a few models both in S2V-DQN and CombOpt Zero when applied to large graphs, we trained five different models from random parameters and took the best value among them at the test time. 
Throughout the experiments, all models were trained on randomly genrated Erd\H{o}s-Renyi (ER) \cite{ER-model} graphs with $80 \le n \le 100$ nodes and edge probability $p = 0.15$ except for {\sc MaxCut}, where nodes were $40 \le n \le 50$ and for {\sc MaximumClique}, where edge probability was $p=0.5$.
As the input node feature of {\sc MaxCut}, we used a two-dimensional feature vector that stores the number of adjacent nodes of color $1$ and color $2$.
For the other problems, we used a vector of ones.
In tests, to keep the fairness, CombOpt Zero conducted a greedy selection on network policy output $\bm{p}$, which is the same way as S2V-DQN works (except for Section \ref{test-time-MCTS} where we compared the greedy selection and MCTS).

\paragraph{Dataset}
We generated ER and Barab\'{a}si-Albert (BA) graphs \cite{BA-model} of different sizes for testing.
ER100\_15 denotes an ER graph with $100$ nodes and edge probability $p = 0.15$ and BA100\_5 denotes a BA graph with $100$ nodes and $5$ edges addition per node.
Also, we used $10$ real-world graphs from Network Repository \cite{NetworkRepository}, including citation networks, web graphs, bio graphs, and road map graphs, all of which were handled as unlabeled and undirected graphs.
To see all the results for these $10$ graphs, please refer to Appendix \ref{additional-results}.
For {\sc MinimumVertexCover} and {\sc MaximumIndependentSet}, we additionally tested on DIMACS \footnote{https://turing.cs.hbg.psu.edu/txn131/vertex\_cover.html}, difficult artificial instances.
We generated other synthetic instances in Section \ref{section:compare-generalization}.

\begin{table}[t]
\centering
\begin{threeparttable}
\caption{\textbf{Generalization Comparision between CombOpt Zero and S2V-DQN.} COZ and DQN are short for CombOpt Zero and S2V-DQN respectively. Smaller is better for {\sc MinimumVertexCover} and larger is better for {\sc MaxCut}.}
\label{table:comp-coz-dqn} 
\small
\begin{tabular}{lrr|rr}
\toprule
 & \multicolumn{2}{c|}{\sc MVC} & \multicolumn{2}{c}{\sc MaxCut} \\
 & COZ & DQN & COZ & DQN \\
\midrule
er100\_15 & $76$ & $76$ & $494$ & $\mathbf{527}$\\
er1000\_5 & $900$ & $\mathbf{898}$ & $12561$ & $\mathbf{14424}$\\
er5000\_1 & $4484$ & $\mathbf{4482}$ & $63389$ & $\mathbf{71909}$\\
\midrule
ba100\_5 & $63$ & $63$ & $337$ & $\mathbf{343}$\\
ba1000\_5 & $\mathbf{592}$ & $594$ & $\mathbf{3492}$ & $3465$\\
ba5000\_5 & $\mathbf{2920}$ & $2927$ & $\mathbf{17381}$ & $16870$\\
ws100\_k2\_p10 & $49$ & $49$ & $\mathbf{98}$ & $97$\\
ws100\_k10\_p10 & $78$ & $78$ & $335$ & $335$\\
ws1000\_k2\_p10 & $\mathbf{492}$ & $496$ & $\mathbf{999}$ & $973$\\
ws1000\_k4\_p10 & $\mathbf{635}$ & $636$ & $\mathbf{1536}$ & $1327$\\
ws1000\_k10\_p10 & $787$ & $\mathbf{784}$ & $\mathbf{3312}$ & $3287$\\
regular\_100\_d5 & $\mathbf{63}$ & $64$ & $\mathbf{207}$ & $205$\\
regular\_1000\_d5 & $\mathbf{632}$ & $634$ & $\mathbf{2051}$ & $2046$\\
tree100 & $44$ & $44$ & $\mathbf{99}$ & $98$\\
tree1000 & $\mathbf{439}$ & $440$ & $\mathbf{999}$ & $982$\\
\midrule
cora & $1258$ & $1258$ & $\mathbf{4260}$ & $4243$\\
citeseer & $1462$ & $\mathbf{1461}$ & $\mathbf{3933}$ & $3893$\\
web-edu & $1451$ & $1451$ & $\mathbf{4712}$ & $4289$\\
web-spam & $\mathbf{2299}$ & $2319$ & $20645$ & $\mathbf{21027}$\\
road-minnesota & $\mathbf{1324}$ & $1329$ & $\mathbf{3080}$ & $3015$\\
bio-yeast & $\mathbf{456}$ & $457$ & $\mathbf{1769}$ & $1751$\\
bio-SC-LC & $\mathbf{1039}$ & $1053$ & $10893$ & $\mathbf{11890}$\\
rt\_damascus & $369$ & $369$ & $\mathbf{3698}$ & $3667$\\
soc-wiki-vote & $407$ & $\mathbf{406}$ & $\mathbf{2119}$ & $2064$\\
socfb-bowdoin47 & $1796$ & $\mathbf{1793}$ & $\mathbf{42063}$ & $37140$\\
\bottomrule
\end{tabular}
\end{threeparttable}
\end{table}

\begin{table*}[t!]
\centering
\begin{threeparttable}
\caption{\textbf{GNN comparison for {\sc MinimumVertexCover}.} Smaller is better. Since the inference takes $\Theta(n^3)$ per action for $2$-IGN+, it did not finish within $2$ hours for some test instances.}
\label{table:main-mvc} 
\small
\begin{tabular}{lrr|rrrr|r|r}
\toprule
 & & & \multicolumn{4}{c|}{CombOpt Zero} & &\\
 & $|V|$ & $|E|$ & 2-IGN+ & GIN & GCN & S2V & S2V-DQN & $2$-approx \\
\midrule
er100\_15 & $100$ & $783$ & $77$ & $77$ & $\mathbf{76}$ & $\mathbf{76}$ & $\mathbf{76}$ & $\mathit{83}$\\
ba100\_5 & $100$ & $475$ & $63$ & $63$ & $63$ & $63$ & $63$ & $\mathit{69}$\\
cora & $2708$ & $5429$ & - & $\mathbf{1257}$ & $1258$ & $1258$ & $1258$ & $\mathit{1274}$\\
citeseer & $3327$ & $4552$ & - & $\mathbf{1460}$ & $1462$ & $1462$ & $1461$ & $\mathit{1475}$\\
web-edu & $3031$ & $6474$ & - & $1451$ & $1451$ & $1451$ & $1451$ & $\mathit{1451}$\\
web-spam & $4767$ & $37375$ & - & $2305$ & $\mathbf{2298}$ & $2299$ & $2319$ & $\mathit{2420}$\\
soc-wiki-vote & $889$ & $2914$ & $413$ & $\mathbf{406}$ & $\mathbf{406}$ & $407$ & $\mathbf{406}$ & $\mathit{406}$\\
socfb-bowdoin47 & $2252$ & $84387$ & $2187$ & $1793$ & $\mathbf{1792}$ & $1796$ & $1793$ & $\mathit{2052}$\\
dimacs-frb30-15-1 & $450$ & $17827$ & $436$ & $427$ & $\mathbf{426}$ & $\mathbf{426}$ & $427$ & $\mathit{436}$\\
dimacs-frb50-23-1 & $1150$ & $80072$ & $1132$ & $1115$ & $\mathbf{1111}$ & $1116$ & $1114$ & $\mathit{1130}$\\
\bottomrule
\end{tabular}
\end{threeparttable}
\end{table*}

\begin{table*}[t!]
\centering
\begin{threeparttable}
\caption{\textbf{GNN comparison for {\sc MaxCut}.} Larger is better. CombOpt Zero outperformed the SOTA heuristic solvers on several real-world instances.}
\label{table:main-maxcut} 
\small
\begin{tabular}{lrr|rrrr|r|r|rr}
\toprule
 & & & \multicolumn{4}{c|}{CombOpt Zero} & & & \multicolumn{2}{c}{Heuristics}\\
 & $|V|$ & $|E|$ & 2-IGN+ & GIN & GCN & S2V & S2V-DQN & SDP & Burer & Laguna \\
\midrule
er100\_15 & $100$ & $783$ & $516$ & $526$ & $390$ & $494$ & $\mathbf{527}$ & $\mathit{521}$ & $\mathit{528}$ & $\mathit{528}$\\
ba100\_5 & $100$ & $475$ & $324$ & $\mathbf{343}$ & $282$ & $337$ & $\mathbf{343}$ & $341$ & $\mathit{344}$ & $\mathit{344}$\\
cora & $2708$ & $5429$ & - & $\mathbf{4268}$ & $3945$ & $4260$ & $4243$ & - & $\mathit{4394}$ & $\mathit{4383}$\\
citeseer & $3327$ & $4552$ & - & $3929$ & $3477$ & $\mathbf{3933}$ & $3893$ & - & $\mathit{3927}$ & $\mathit{3927}$\\
web-edu & $3031$ & $6474$ & - & $4705$ & $4243$ & $\mathbf{4712}$ & $4289$ & - & $\mathit{4679}$ & $\mathit{4714}$\\
web-spam & $4767$ & $37375$ & - & $\mathbf{23882}$ & $20498$ & $20645$ & $21027$ & - & $\mathit{25001}$ & $\mathit{25070}$\\
soc-wiki-vote & $889$ & $2914$ & $1645$ & $2116$ & $1730$ & $\mathbf{2119}$ & $2064$ & - & $\mathit{2175}$ & $\mathit{2163}$\\
socfb-bowdoin47 & $2252$ & $84387$ & $41741$ & $\mathbf{47426}$ & $20002$ & $42063$ & $37140$ & - & $\mathit{48639}$ & $\mathit{48636}$\\
\bottomrule
\end{tabular}
\end{threeparttable}
\end{table*}

\begin{table*}[t!]
\centering
\small
\begin{threeparttable}
\caption{\textbf{Improvement by test-time MCTS for {\sc MaximumClique}}. Larger is better. Results with test-time MCTS are shown in the parentheses. We broke the lower bounds of maximum clique size for some instances. Stability was also improved.}
\label{table:test-time-mcts} 
\begin{tabular}{lrr|rr|rr|rr|rr|r|r}
\toprule
 & & & \multicolumn{8}{c|}{CombOpt Zero} & &\\
 & $|V|$ & $|E|$ & \multicolumn{2}{c}{2-IGN+} & \multicolumn{2}{c}{GIN} & \multicolumn{2}{c}{GCN} & \multicolumn{2}{c|}{S2V} & S2V-DQN & Best known\\
\midrule
cora & $2708$ & $5429$ & $4$ & $(\mathbf{5})$ & $4$ & $(\mathbf{5})$ & $3$ & $(\mathbf{5})$ & $4$ & $(\mathbf{5})$ & $4$ & $\mathit{5}$\\
citeseer & $3327$ & $4552$ & $4$ & $(6)$ & $5$ & $(6)$ & $4$ & $(6)$ & $4$ & $(6)$ & $4$ & $\mathit{9}$\\
web-edu & $3031$ & $6474$ & $16$ & $(\mathbf{30})$ & $16$ & $(\mathbf{30})$ & $16$ & $(16)$ & $16$ & $(\mathbf{30})$ & $16$ & $\mathit{16}$\\
web-spam & $4767$ & $37375$ & $10$ & $(\mathbf{20})$ & $16$ & $(17)$ & $7$ & $(17)$ & $16$ & $(17)$ & $16$ & $\mathit{14}$\\
soc-wiki-vote & $889$ & $2914$ & $5$ & $(\mathbf{7})$ & $6$ & $(\mathbf{7})$ & $6$ & $(\mathbf{7})$ & $6$ & $(\mathbf{7})$ & $6$ & $\mathit{6}$\\
socfb-bowdoin47 & $2252$ & $84387$ & $14$ & $(\mathbf{23})$ & $15$ & $(\mathbf{23})$ & $7$ & $(22)$ & $13$ & $(\mathbf{23})$ & $14$ & $\mathit{9}$\\
\bottomrule
\end{tabular}
\end{threeparttable}
\end{table*}

\subsection{Comparison of Generalization Ability} \label{section:compare-generalization}
We compared the generalization ability of CombOpt Zero and S2V-DQN to various kinds of graphs.
To see the pure contribution of CombOpt Zero, here CombOpt Zero incorporated the same graph representation model as S2V-DQN, namely S2V.
Table \ref{table:comp-coz-dqn} shows the performance for {\sc MinimumVertexCover} and {\sc MaxCut} on various graph instances (see Appendix \ref{experiment-settings} for the explanation of each graph).
While S2V-DQN had a better performance on ER graphs, which were used for training, CombOpt Zero showed a better generalization ability to the other synthetic graphs such as BA graphs, Watts-Strogatz graphs \cite{WA-model}, and sparse regular graphs (i.e., graphs with the same degree of nodes).
It was interesting that CombOpt Zero successfully learned the optimal solution of {\sc MaxCut} on trees (two-coloring of a tree puts all the edges into the cut set), while S2V-DQN does not.
Appendix \ref{visualization} visualizes how CombOpt Zero achieves the optimal solution on trees.
CombOpt Zero also generalized better to real-world graphs although S2V-DQN performed better on a few instances.

\subsection{Combination with Several GNN Models} \label{section:compare-gnn}
Tables \ref{table:main-mvc} and \ref{table:main-maxcut} show the comparison of performance among CombOpt Zero with four different GNN models ($2$-IGN+, GIN, GCN, and S2V) and S2V-DQN.
In both {\sc MinimumVertexCover} and {\sc MaxCut}, the use of recent GNN models enhanced the performance, but the best models were different across the problems and test instances.
While GIN had the best performance in {\sc MaxCut}, GCN performed slightly better in {\sc MinimumVertexCover}.
See Appendix \ref{additional-results} for all the results of the $5$ problems.
One interesting insight was that GCN performed significantly worse in {\sc MaxCut}, while it did almost the best for the other four NP-hard problems.
Overall, CombOpt Zero outperformed S2V-DQN by properly selecting a GNN model.

\subsection{Test-time MCTS} \label{test-time-MCTS}
The greedy selection in Sections \ref{section:compare-generalization} and \ref{section:compare-gnn} does not make full use of CombOpt Zero.
When more computational time is allowed in test-time, CombOpt Zero can explore better solutions using the MCTS.
Since the MCTS on large graphs takes a long time, we chose {\sc MaximumClique} as a case study because solution sizes and the MCTS depths are much smaller than the other problems.
We used the same algorithm as in the training (Algorithm \ref{code-MCTS}) with the same iteration coefficient ($c_{\mathrm{iter}} = 4$) and $\tau = 0$, and selected an action based on the enhanced policy $\bm{\pi}$.
In all the instances, the MCTS finished in a few minutes on a single process and single GPU, thanks to the small depth of the MCTS tree.
The improvements are shown in Table \ref{table:test-time-mcts}.
The test-time MCTS strictly improved performance on $7$ instances out of $10$ and got at least the same on all of the instances.
Surprisingly, our results surpassed the best known solution reported in Network Repository on $6$ out of $10$ instances. For example, we found a clique of size $30$ on web-edu, whose previous bound was $16$.

\subsection{Tradeoffs between CombOpt Zero and S2V-DQN} \label{section:tradeoff}
Here, we summarize some characteristics of CombOpt Zero and S2V-DQN.
During the training, CombOpt Zero used $32$ processes and four GPUs as described in Appendix \ref{experiment-settings}, while S2V-DQN used a single process and GPU because of its implementation.
CombOpt Zero takes a longer time to generate self-play data than S2V-DQN due to the MCTS process.
For this reason, CombOpt Zero needs a more powerful environment to obtain stable training.
On the other hand, since CombOpt Zero is much more sample-efficient than S2V-DQN (see Appendix \ref{appendix:training-convergence}), the bottleneck of the CombOpt Zero training is the data generation by the MCTS.
This means that it can be highly optimized with an enormous GPU or TPU environment as in \citet{alphago-zero}, \citet{alphazero}, and \citet{ELF-OpenGo}.

By its nature, S2V-DQN can be also combined with other GNN models than S2V.
However, S2V-DQN is directly implemented with GPU programming, it is practically laborious to combine various GNN models.
On the other hand, CombOpt Zero is based on PyTorch framework \cite{PyTorch} and it is relatively easy to implement different GNN models.
\section{Conclusion}
In this paper, we presented a general framework, CombOpt Zero, to solve combinatorial optimization on graphs without domain knowledge.
The Monte Carlo Tree Search (MCTS) in training time successfully helped the wider exploration than the existing method and enhanced the generalization ability to various graphs.
Combined with the recently-designed powerful GNN models, CombOpt Zero achieved even better performance.
We also observed that the test-time MCTS significantly enhanced its performance and stability.

\section*{Acknowledgements}
MS was supported by the International Research Center for Neurointelligence (WPI-IRCN) at The University of Tokyo Institutes for Advanced Study.

\bibliography{combopt_rl}
\bibliographystyle{plainnat}

\clearpage
\appendix
\section{Other NP-hard Problems} \label{Other-NPhard}
Here, we introduce two more NP-hard problems we used in the experiments.

\subsection{Maximum Independent Set}
A subset of nodes $V' \subset V$ is called an independent set if no two nodes in $V'$ are adjacent;
for all $(x, y) \in E$, $x \notin V'$ or $y \notin V'$ holds.
{\sc MaximumIndependentSet} asks for an independent set whose size is maximum.

\paragraph{MDP Formulation}
$d$, $A_s$, $\mathrm{Init}$, and $S_{\mathrm{end}}$ are the same as {\sc MinimumVertexCover}.
$T(s, x)$ returns the next state, corresponding to the graph where $x$ and its adjacent nodes are deleted.
$R(s, x) = 1$ because we want to maximize the number of transition steps of the MDP.

\subsection{Minimum Feedback Vertex Set}
A subset of nodes $V' \subset V$ is called a feedback vertex set if the induced graph $G[V \backslash V']$ is cycle-free.
{\sc MinimumFeedbackVertexSet} asks a feedback vertex set whose size is minimum.

\paragraph{MDP Formulation}
$d$, $A_s$, $T(s, x)$, $R(s, x)$ and $\mathrm{Init}$ are the same as {\sc MinimumVertexCover}.
$S_{\mathrm{end}}$ is the states with cycle-free graphs.

\section{Graph Neural Networks} \label{detail-GNN}
In this section, we review several Graph Neural Network (GNN) models and explain some modifications for our problem setting.
Given a graph and input node feature $H^{(0)} \in \mathbb{R}^{n \times C_0}$, we aim at obtaining $y \in \mathbb{R} ^ {n \times C_{\mathrm{out}}}$ which represents node feature as an output.
$L$ denotes a number of layers.

\subsection{structure2vec}\label{detail-GNN-S2V}
In \texttt{structure2vec} (S2V) \cite{structure2vec}, the feature is propagated by
\begin{align*}
    H^{(l + 1)}_v = \mathrm{relu}\Bigl(\theta_1 H^{(0)}_v + \theta_2 \sum_{u \in \mathcal{N}(v)}H_u^{(l)} \bigr),
\end{align*}
where $\theta_1 \in \mathbb{R}^{p \times C_0}$, $\theta_2 \in \mathbb{R}^{p \times p}$ for some fixed integer $p$.
The edge propagation term is ignored since we don't handle weighted edges in this work.
In the last layer, the features of every node are aggregated to encode the whole graph:
\begin{align*}
    y_v = \theta_3^T \mathrm{relu}\bigl(\bigl[\theta_4\sum_{u \in V} H_u^{(L)}, \theta_5 H_v^{(L)}\bigr]\bigr),
\end{align*}
where $\theta_3 \in \mathbb{R}^{C_{\mathrm{out}} \times 2p}$, $\theta_4, \theta_5 \in \mathbb{R}^{p \times p}$ and $[\cdot, \cdot]$ is the concatenation operator.

\subsection{Graph Convolutional Network}\label{detail-GCN}
Graph Convolutional Network (GCN) \cite{GCNsKipf} follows the layer-wise propagation rule:
\begin{equation}
    H^{(l + 1)} = \sigma(\tilde{D}^{-1/2} \tilde{A} \tilde{D}^{-1/2} H^{(l)} \theta^{(l)}) .
\end{equation}
$\tilde{A} = A + I_n$ where $I_n$ is the identity matrix of the size $n$ and expresses an adjacency matrix with self-connections.
Multiplying $H^{(l)}$ by $\tilde{A}$ means passing each node's feature vector to its neighbors' feature vectors in the next layer.
$\tilde{D}$ is the degree matrix of $\tilde{A}$, such that
$\tilde{D_{ii}} = \sum_j \tilde{A_{ij}}$ for diagonal elements and $0$ for the other elements.
$\tilde{D}^{-1/2}$ is multiplied to normalize $\tilde{A}$.
$\theta^l$ is a trainable weight matrix in $l$-th layer.

\subsection{Graph Isomorphism Network}\label{detail-GIN}
The propagation rule of Graph Isomorphism Network (GIN) \cite{GIN} is as follows:
\begin{equation*}
  H^{(l + 1)} = \mathrm{MLP}^{(l)} (\tilde{A} H^{(l)}),
\end{equation*}
where $\mathrm{MLP}^l$ refers to the multi-layer perceptron in the $l$-th layer.
It takes the simple sum of features among the neighbors and itself by multiplying $\tilde{A}$.
We adopt a similar suffix as the original paper:
\begin{equation}
    y_v = \mathrm{MLP}(\mathrm{CONCAT}(H^{(l)}_v \mid l = 0, 1, \cdots , L))
\end{equation}

\subsection{2-IGN+}\label{detail-PGNN}
Different from message passing GNNs, each block of 2-IGN+ \cite{PGNN} takes a tensor as an input.
It follows the propagation rule:
\begin{equation}
    {\bf X}^{(l + 1)} = B_l({\bf X}^{(l)}).
\end{equation}
${\bf X}^{(l)} \in \mathbb{R}^{n \times n \times C_l}$ is a hidden tensor of the $l$-th block.
${\bf X}^{(0)} \in \mathbb{R}^{n \times n \times (C_0 + 1)}$ is initialized as the concatenation of the adjacency matrix and a tensor with node feature in its diagonal elements (other elements are $0$).
$B_l$ is the $l$-th block of 2-IGN+ and consists of three MLPs: $m_1^{(l)}, m_2^{(l)} \in \mathbb{R}^{C_l} \rightarrow \mathbb{R}^{C'_l}$ and $m_3^{(l)} \in \mathbb{R}^{C_l + C'_l} \rightarrow \mathbb{R}^{C'_l}$.
After applying the two MLPs independently, we perform feature-wise matrix multiplication:
${\bf W}^{(l)}_{:,:,j} = m_1^{(l)}({\bf X}^{(l)})_{:,:,j} \cdot m_2^{(l)}({\bf X}^{(l)})_{:,:,j}$.
The output of the block is the last MLP over the concatenation with ${\bm X}^{(l)}$: ${\bm X}^{(l+1)} = m_3^{(l)}([{\bf X}^{(l)}, {\bf W}^{(l)}])$.
In our problem setting, we adopt equivariant linears layer instead of invariant linear layers to obtain the output tensor $y \in \mathbb{R} ^ {n \times C_{\mathrm{out}}}$.
We also adopt the suffix mentioned in the original paper:
\begin{equation}
    y = \sum_{l = 1}^L h^{(l)}(X^{(l)}),
\end{equation}
where $h^{(l)} \in \mathbb{R}^{n \times C_l} \rightarrow \mathbb{R}^{n \times C_{\mathrm{out}}}$ is an equivariant linear layer of $l$-th block.

\section{Experiment Settings}\label{experiment-settings}
In this section, we explain the detailed settings of experiments.

\subsection{Environment}
The experiments were run on Intel Xeon E5-2695 v4, with four NVIDIA Tesla P100 GPUs. Libraries were compiled with GCC 5.4.0 and CUDA 9.2.148.

\subsection{Competitors}
Our main competitor was S2V-DQN, the state-of-the-art reinforcement learning solver of NP-hard problems.
We also tested some heuristics and approximation algorithms for specific problems.
Note that a fair setting of the running environment is difficult since they do not use GPUs nor training time.
Therefore, they were referred to just for checking how successfully CombOpt Zero learned combinatorial structure and algorithms for each problem.
For {\sc MinimumVertexCover}, {\sc MaximumIndependentSet}, {\sc MinimumFeedbackVertexSet} and {\sc MaxCut}, we compared our algorithm to randomized algorithms.
The results of the randomized algorithm were the best objective among $100$ runs.
For {\sc MaxCut}, we tried two heuristics solvers from MQLib \cite{maxcut-heuristic}.
We set the time limit of $10$ minutes for these algorithms and used the best found solution as the results.
We also used CPLEX to solve the integer programming formulation of {\sc MinimumVertexCover} and {\sc MaximumIndependentSet}.
Only CPLEX was run on MacBook Pro 2.4 GHz Quad-Core Intel Core i5.
It was executed on $8$ threads with the time limit of $10$ minutes.

\subsection{Hyperparameters}
Some of the hyperparameters are summarized in Table \ref{table:hyperparameters}.

\paragraph{Graph Neural Networks}
For S2V, we used the same hyperparameters as the ones used in S2V-DQN \citep{S2V-DQN}:
we set embedding dimension $p=64$ and the number of iterations $L=5$.
For GCN, we used a $5$-layer network with a hidden dimension of size $32$.
For GIN, the network consisted of $5$ layers of hidden dimension of size $32$.
Each MLP consisted of $5$ layers also and the size of a hidden dimension was $16$.
Lastly, for $2$-IGN+, we used a $2$-block network with a hidden dimension of size $8$. Each MLP had $2$ layers with a hidden dimension of size $8$.

\paragraph{MCTS}
We followed the implementation of AlphaGo Zero \citep{alphago-zero} and ELF \citep{ELF-OpenGo}. We set $c_\mathrm{puct} = 1.5$. In training phase and test-time MCTS, we added Dirichlet noise $\bm{\eta} \sim \mathrm{Dir}(\bm{0.03})$ to the first move to explore a variety of actions. More specifically, the policy $\bm{p}$ for the first action is modified to $\bm{p} \leftarrow (1 - \varepsilon) \bm{p} + \varepsilon \bm{\eta}$, where $\varepsilon = 0.25$. We set $c_\mathrm{iter} = 4$, i.e., ran MCTS for $c_\mathrm{iter} |A_{s_0}|$ times, for {\sc MaxCut}, {\sc MaximumClique}, {\sc MaximumIndependentSet}. We set $c_\mathrm{iter} = 3$ for {\sc MinimumVertexCover} and {\sc MinimumFeedbackVertexSet} because they took relatively longer time due to larger solution sizes. We set the temperature as $\tau = 1$ during the training phase, and $\tau = 0$ in the test-time MCTS. $\tau = 0$ means to take an action whose visit count is the maximum (if there are multiple possible actions, choose one at random). When a new node is visited in MCTS, we calculated the mean and standard deviation of the reward from its corresponding state. We approximated these value from $20$ random plays.

\paragraph{Training}
Each \emph{learner} sampled $20$ trajectories from the self-play records and ran stochastic gradient descent by Adam, where the learning rate is $0.001$, weight decay is $0.0001$, and batch size is $16$.
After repeating this routine for $15$ times, the \emph{learner} saved the new model.
The number of \emph{data generators} was $24$ except when $2$-IGN+ was used.
In that case, the number was $20$.
The number of \emph{learners} and \emph{model evaluators} were set to $6$ and $2$, respectively.
Each \emph{model evaluator} generated $50$ test ER graphs with $n=100$ and $p=0.15$ ($n=50$ for {\sc MaxCut} and $p=0.5$ for {\sc MaximumClique}) each time. It compared the performance of the best model and a newly generated model by the cummulative objective and managed the best model.
Each trajectory was removed after $5$ minutes ($10$ minutes for {\sc MinimumVertexCover} and {\sc MinimumFeedbackVertexSet}) since it was generated.

\begin{table}[t!]
\centering
\begin{threeparttable}
\caption{\textbf{Part of hyperparameters.} For each of the five problems, some important hyperparameters are summarized. $\mathrm{min}\ n$, $\mathrm{max}\ n$, $p$ regard ER graphs, $c_\mathrm{iter}$ is for MCTS, and $\mathrm{keep}$ denotes the duration (minutes) to keep trajectories.}
\label{table:hyperparameters} 
\small
\begin{tabular}{lccccc}
\toprule
 & $\mathrm{min}\ n$ & $\mathrm{max}\ n$ & $p$ & $c_\mathrm{iter}$ & $\mathrm{keep}$\\
\midrule
{\sc MVC} & $80$ & $100$ & $0.15$ & $3$ & $10$\\
{\sc MaxCut} & $40$ & $50$ & $0.15$ & $4$ & $5$\\
{\sc MaxClique} & $80$ & $100$ & $0.5$ & $4$ & $5$\\
{\sc MIS} & $80$ & $100$ & $0.15$ & $4$ & $5$\\
{\sc FVS} & $80$ & $100$ & $0.15$ & $3$ & $10$\\
\bottomrule
\end{tabular}
\end{threeparttable}
\end{table}

\section{Performance Comparison of CombOpt Zero and Other Approaches} \label{additional-results}
In this section, we show the full results of our experiments. We compared CombOpt Zero to some simple randomized algorithms and state-of-the-art solvers, in addition to S2V-DQN. For each of the five problems, we first explain the characteristics of the problem and some famous approaches, then we show the comparison of the performances.

\subsection{Minimum Vertex Cover}\label{appendix:mvc-full}
\paragraph{Approximability}
There is a simple 2-approximation algorithm for {\sc MinimumVertexCover}. It greedily obtains a maximal (not necessarily maximum) matching and outputs the nodes in the matching as a solution.
Under Unique Games Conjecture \citep{ugc}, {\sc MinimumVertexCover} cannot be approximated better than this.

\paragraph{Randomized Heuristics}
Regardless of the hardness of approximation of {\sc MinimumVertexCover}, there are some practical algorithms to solve this (and equivalently, {\sc MaximumIndependentSet}) \citep{mis-sugoi, iwata}. These state-of-the-art algorithms usually iteratively \emph{kernelize} the graph, i.e., reduce the size of the problem, and search better solutions, either in an exact way or in an approximated way.
We adopted the easiest reduction rule to design a simple randomized algorithm (Algorithm \ref{mvc-randomized}) for {\sc MinimumVertexCover}; if the input graph has a degree-1 vertex $v$, there is always a vertex cover that does not contain $v$.
Although theoretically, this reduction rule does not improve the approximation ratio, i.e., the approximation is still $2$, it is effective because we can cut off trivial solutions and make the size of the problem smaller.

\begin{algorithm}[t!]
  \small
  \caption{A Simple Randomized Algorithm for {\sc MinimumVertexCover}}
  \label{mvc-randomized}
  \begin{algorithmic}
    \REQUIRE Graph $G = (V, E)$
    \ENSURE Size of a vertex cover $r$
    
    \STATE $r \leftarrow 0$
    \WHILE{$G$ is not empty}
        \IF{$G$ has a degree-1 vertex}
            \STATE $v \leftarrow$ uniformly randomly chosen degree-1 vertex
        \ELSE
            \STATE $v \leftarrow$ uniformly randomly chosen vertex
        \ENDIF
        \STATE $r \leftarrow r + 1$
        \STATE $G \leftarrow$ delete $v$ and its neighbor(s)
    \ENDWHILE
  \end{algorithmic}
\end{algorithm}

\paragraph{Full results}
Table \ref{table:mvc-full} shows the full results for {\sc MinimumVertexCover}. Although both CombOpt Zero and S2V-DQN failed to find an optimal solution for the ER graph of $200$ nodes (er200\_10), both of them found near-optimal solutions for large cases even though they were trained on small random graphs. Remarkably, some models of CombOpt Zero found better solutions than CPLEX for large random graphs. However, for some large sparse real-world networks, they performed worse than a simple $2$-approximation algorithm. A hybrid approach that mixes the reduction rules and machine learning, as done in \cite{zhuwen}, may be effective in such sparse networks, but our work focuses on a method without domain knowledge. Theoretically, $2$-IGN+ is stronger than GIN or GCN in terms of discriminative power \citep{PGNN}, its performance on large instances was not good although they successfully learned er100\_15. One reason could be because we reduced the size of the network to fasten the training. Also, since $2$-IGN+ is not a message passing neural network, its tendency of learning could be different from other GNNs. We leave the empirical and theoretical analyses on the characteristics of learning by $2$-IGN+ as future work.

\begin{table*}[t]
\centering
\begin{threeparttable}
\caption{\textbf{Performance comparison on {\sc MinimumVertexCover}}. Smaller is better. Bold values are the best values among reinforcement learning approaches (CombOpt Zero and S2V-DQN). Since the inference takes $\Theta(n^3)$ per action for $2$-IGN+, it did not finish testing within $2$ hours for some instances.}
\label{table:mvc-full}
\small
\begin{tabular}{lrr|rrrr|r|r|r}
\toprule
 & & & \multicolumn{4}{c|}{CombOpt Zero} & &\\
 & $|V|$ & $|E|$ & 2-IGN+ & GIN & GCN & S2V & S2V-DQN & $2$-approx & CPLEX \\
\midrule
er100\_15 & $100$ & $783$ & $77$ & $77$ & $\mathbf{76}$ & $\mathbf{76}$ & $\mathbf{76}$ & $\mathit{83}$ & $\mathit{76}$ \\
er200\_10 & $200$ & $1957$ & $\mathbf{160}$ & $\mathbf{160}$ & $161$ & $\mathbf{160}$ & $\mathbf{160}$ & $\mathit{174}$ & $\mathit{159}$\\
er1000\_5 & $1000$ & $25091$ & $907$ & $900$ & $\mathbf{893}$ & $900$ & $898$ & $\mathit{954}$ & $\mathit{894}$ \\
er5000\_1 & $5000$ & $124804$ & - & $4479$ & $\mathbf{4448}$ & $4484$ & $4482$ & $\mathit{4793}$ & $\mathit{4480}$\\
ba100\_5 & $100$ & $475$ & $63$ & $63$ & $63$ & $63$ & $63$ & $\mathit{69}$ & $\mathit{63}$\\
ba200\_5 & $200$ & $975$ & $121$ & $120$ & $120$ & $120$ & $120$ & $\mathit{134}$ & $\mathit{118}$ \\
ba1000\_5 & $1000$ & $4975$ & $745$ & $592$ & $\mathbf{591}$ & $592$ & $594$ & $\mathit{670}$ & $\mathit{589}$\\
ba5000\_5 & $5000$ & $24975$ & - & $2920$ & $\mathbf{2905}$ & $2920$ & $2927$ & $\mathit{3374}$ & $\mathit{2932}$\\
cora & $2708$ & $5429$ & - & $\mathbf{1257}$ & $1258$ & $1258$ & $1258$ & $\mathit{1274}$ & $\mathit{1257}$\\
citeseer & $3327$ & $4552$ & - & $\mathbf{1460}$ & $1462$ & $1462$ & $1461$ & $\mathit{1475}$ & $\mathit{1460}$\\
web-edu & $3031$ & $6474$ & - & $1451$ & $1451$ & $1451$ & $1451$ & $\mathit{1451}$ & $\mathit{1451}$\\
web-spam & $4767$ & $37375$ & - & $2305$ & $\mathbf{2298}$ & $2299$ & $2319$ & $\mathit{2420}$ & $\mathit{2297}$\\
road-minnesota & $2642$ & $3303$ & - & $\mathbf{1322}$ & $1323$ & $1324$ & $1329$ & $\mathit{1330}$ & $\mathit{1319}$\\
bio-yeast & $1458$ & $1948$ & $702$ & $\mathbf{456}$ & $\mathbf{456}$ & $\mathbf{456}$ & $457$ & $\mathit{456}$ & $\mathit{456}$\\
bio-SC-LC & $2004$ & $20452$ & $1471$ & $1041$ & $1046$ & $\mathbf{1039}$ & $1053$ & $\mathit{1245}$ & $\mathit{1036}$\\
rt\_damascus & $3052$ & $3881$ & $370$ & $369$ & $369$ & $369$ & $369$ & $\mathit{369}$ & $\mathit{369}$\\
soc-wiki-vote & $889$ & $2914$ & $413$ & $\mathbf{406}$ & $\mathbf{406}$ & $407$ & $\mathbf{406}$ & $\mathit{406}$ & $\mathit{406}$\\
socfb-bowdoin47 & $2252$ & $84387$ & $2187$ & $1793$ & $\mathbf{1792}$ & $1796$ & $1793$ & $\mathit{2052}$ & $\mathit{1792}$\\
dimacs-frb30-15-1 & $450$ & $17827$ & $436$ & $427$ & $\mathbf{426}$ & $\mathbf{426}$ & $427$ & $\mathit{436}$ & $\mathit{421}$\\
dimacs-frb50-23-1 & $1150$ & $80072$ & $1132$ & $1115$ & $\mathbf{1111}$ & $1116$ & $1114$ & $\mathit{1130}$ & $\mathit{1107}$\\
\bottomrule
\end{tabular}
\end{threeparttable}
\end{table*}

\subsection{Max Cut}
\paragraph{Approximability}
{\sc MaxCut} has a famous $0.878$-approximation randomized algorithm by SDP (semidefinite programming) \cite{maxcut-878}. Similarly to {\sc MinimumVertexCover}, this approximation ratio is best under Unique Games Conjecture \citep{ugc}.

\paragraph{Competing Algorithms}
Although the randomized algorithm mentioned above has a polynomial-time complexity and the best approximation ratio, they are rarely applied for graphs of thousands of nodes due to the large size of the SDP. We only applied this algorithm to graphs that have no more than $200$ nodes. Instead, we compared the performance with a {\sc MaxCut} solver by \citet{maxcut-heuristic}.

\paragraph{Full results}
Table \ref{table:maxcut-full} shows the full results for {\sc MaxCut}. We compared CombOpt Zero, S2V-DQN, the SDP approximation algorithm and heuristics.
Unlike {\sc MinimumVertexCover}, CombOpt Zero with GCN had poor performance.
CombOpt Zero with GIN had the best performance and it overperformed S2V-DQN. CombOpt Zero even overperformed state-of-the-art heuristics for some instances such as citeseer and rt\_damascus. It is also remarkable that all of the results by CombOpt Zero with GIN overperformed $0.878$-approximation SDP algorithm (note that SDP did not finish for large instances).

\begin{table*}[t!]
\centering
\begin{threeparttable}
\caption{\textbf{Performance comparison on {\sc MaxCut}.} Larger is better. Bold values are the best values among reinforcement learning approaches. Empty cells mean time limit exceeded.}
\label{table:maxcut-full} 
\small
\begin{tabular}{lrr|rrrr|r|r|rr}
\toprule
 & & & \multicolumn{4}{c|}{CombOpt Zero} & & & \multicolumn{2}{c}{Heuristics}\\
 & $|V|$ & $|E|$ & 2-IGN+ & GIN & GCN & S2V & S2V-DQN & SDP & Burer & Laguna \\
\midrule
er100\_15 & $100$ & $783$ & $516$ & $526$ & $390$ & $494$ & $\mathbf{527}$ & $\mathit{521}$ & $\mathit{528}$ & $\mathit{528}$\\
er200\_10 & $200$ & $1957$ & $1194$ & $\mathbf{1269}$ & $974$ & $1221$ & $1262$ & $\mathit{1266}$ & $\mathit{1289}$ & $\mathit{1289}$\\
er1000\_5 & $1000$ & $25091$ & $12278$ & $\mathbf{14787}$ & $8596$ & $12561$ & $14424$ & - & $\mathit{15164}$ & $\mathit{15140}$\\
er5000\_1 & $5000$ & $124804$ & - & $\mathbf{73275}$ & $42860$ & $63389$ & $71909$ & - & $\mathit{75601}$ & $\mathit{75406}$\\
ba100\_5 & $100$ & $475$ & $324$ & $\mathbf{343}$ & $282$ & $337$ & $\mathbf{343}$ & $341$ & $\mathit{344}$ & $\mathit{344}$\\
ba200\_5 & $200$ & $975$ & $639$ & $694$ & $576$ & $687$ & $\mathbf{698}$ & $\mathit{693}$ & $\mathit{703}$ & $\mathit{703}$\\
ba1000\_5 & $1000$ & $4975$ & $2595$ & $3485$ & $2941$ & $\mathbf{3492}$ & $3465$ & - & $\mathit{3589}$ & $\mathit{3580}$\\
ba5000\_5 & $5000$ & $24975$ & - & $\mathbf{17643}$ & $15015$ & $17381$ & $16870$ & - & $\mathit{17997}$ & $\mathit{17911}$\\
cora & $2708$ & $5429$ & - & $\mathbf{4268}$ & $3945$ & $4260$ & $4243$ & - & $\mathit{4394}$ & $\mathit{4383}$\\
citeseer & $3327$ & $4552$ & - & $3929$ & $3477$ & $\mathbf{3933}$ & $3893$ & - & $\mathit{3927}$ & $\mathit{3927}$\\
web-edu & $3031$ & $6474$ & - & $4705$ & $4243$ & $\mathbf{4712}$ & $4289$ & - & $\mathit{4679}$ & $\mathit{4714}$\\
web-spam & $4767$ & $37375$ & - & $\mathbf{23882}$ & $20498$ & $20645$ & $21027$ & - & $\mathit{25001}$ & $\mathit{25070}$\\
road-minnesota & $2642$ & $3303$ & - & $3079$ & $2856$ & $\mathbf{3080}$ & $3015$ & - & $\mathit{3091}$ & $\mathit{3024}$\\
bio-yeast & $1458$ & $1948$ & $714$ & $\mathbf{1769}$ & $1582$ & $\mathbf{1769}$ & $1751$ & - & $\mathit{1770}$ & $\mathit{1761}$\\
bio-SC-LC & $2004$ & $20452$ & $7967$ & $\mathbf{14358}$ & $11589$ & $10893$ & $11890$ & - & $\mathit{14586}$ & $\mathit{14583}$\\
rt\_damascus & $3052$ & $3881$ & - & $3694$ & $3439$ & $\mathbf{3698}$ & $3667$ & - & $\mathit{3617}$ & $\mathit{3683}$\\
soc-wiki-vote & $889$ & $2914$ & $1645$ & $2116$ & $1730$ & $\mathbf{2119}$ & $2064$ & - & $\mathit{2175}$ & $\mathit{2163}$\\
socfb-bowdoin47 & $2252$ & $84387$ & $41741$ & $\mathbf{47426}$ & $20002$ & $42063$ & $37140$ & - & $\mathit{48639}$ & $\mathit{48636}$\\
\bottomrule
\end{tabular}
\end{threeparttable}
\end{table*}

\subsection{Maximum Clique}
For {\sc MaximumClique}, we compared CombOpt Zero and S2V-DQN. For CombOpt Zero, since the solution sizes were relatively small, we also tested test-time MCTS.

Although {\sc MaximumClique} is equivalent to {\sc MaximumIndependentSet} on complement graphs, since the number of edges in the complement graphs of sparse networks becomes too large, it is usually difficult to solve {\sc MaximumClique} by algorithms designed for {\sc MinimumVertexCover}.

\paragraph{Full results}
Full results for {\sc MaximumClique} is given in Table \ref{table:maxclique-full}.
CombOpt Zero with GIN worked the best among four models and outperformed S2V-DQN especially on random graphs.
We can also observe that the performance is strongly enhanced by test-time MCTS.
For example, although the performance of CombOpt Zero with GCN or 2-IGN+ was relatively poor than other GNNs, by test-time MCTS, its performance became much stabler and solutions were also improved. Since test-time MCTS requires $n$ times larger time complexity, where $n$ is the number of nodes, it is hard to apply test-time MCTS on large instances when solution size is too large or time limit is too short.

\begin{table*}[t!]
\centering
\begin{threeparttable}
\caption{\textbf{Performance comparison on {\sc MaximumClique}.} Larger is better. Results with test-time MCTS are shown in the parentheses. Best known values were obtained from Network Repository \citep{NetworkRepository}. Test-time MCTS broke sove lower bounds of maximum clique size for some instances.}
\label{table:maxclique-full} 
\small
\begin{tabular}{lrr|rr|rr|rr|rr|r|r}
\toprule
 & & & \multicolumn{8}{c|}{CombOpt Zero} & &\\
 & $|V|$ & $|E|$ & \multicolumn{2}{c}{2-IGN+} & \multicolumn{2}{c}{GIN} & \multicolumn{2}{c}{GCN} & \multicolumn{2}{c|}{S2V} & S2V-DQN & Best known\\
\midrule
er100\_15 & $100$ & $783$ & $\mathbf{4}$ & $(\mathbf{4})$ & $\mathbf{4}$ & $(\mathbf{4})$ & $\mathbf{4}$ & $(\mathbf{4})$ & $\mathbf{4}$ & $(\mathbf{4})$ & $3$ & -\\
er200\_10 & $200$ & $1957$ & $\mathbf{4}$ & $(\mathbf{4})$ & $\mathbf{4}$ & $(\mathbf{4})$ & $\mathbf{4}$ & $(\mathbf{4})$ & $\mathbf{4}$ & $(\mathbf{4})$ & $3$ & -\\
er1000\_5 & $1000$ & $25091$ & $\mathbf{4}$ & $(\mathbf{4})$ & $\mathbf{4}$ & $(\mathbf{4})$ & $\mathbf{4}$ & $(\mathbf{4})$ & $\mathbf{4}$ & $(\mathbf{4})$ & $3$ & -\\
er5000\_1 & $5000$ & $124804$ & $3$ & $(\mathbf{4})$ & $3$ & $(\mathbf{4})$ & $3$ & $(\mathbf{4})$ & $3$ & $(\mathbf{4})$ & $3$ & -\\
ba100\_5 & $100$ & $475$ & $\mathbf{5}$ & $(\mathbf{5})$ & $\mathbf{5}$ & $(\mathbf{5})$ & $\mathbf{5}$ & $(\mathbf{5})$ &  $\mathbf{5}$ & $(\mathbf{5})$ & $3$ & -\\
ba200\_5 & $200$ & $975$ & $\mathbf{5}$ & $(\mathbf{5})$ & $\mathbf{5}$ & $(\mathbf{5})$ & $4$ & $(\mathbf{5})$ & $\mathbf{5}$ & $(\mathbf{5})$ & $3$ & -\\
ba1000\_5 & $1000$ & $4975$ & $\mathbf{5}$ & $(\mathbf{5})$ & $\mathbf{5}$ & $(\mathbf{5})$ & $\mathbf{5}$ & $(\mathbf{5})$ & $\mathbf{5}$ & $(\mathbf{5})$ & $3$ & -\\
ba5000\_5 & $5000$ & $24975$ & $4$ & $(\mathbf{5})$ & $4$ & $(\mathbf{5})$ & $4$ & $(\mathbf{5})$ & $4$ & $(\mathbf{5})$ & $3$ & -\\
cora & $2708$ & $5429$ & $4$ & $(\mathbf{5})$ & $4$ & $(\mathbf{5})$ & $3$ & $(\mathbf{5})$ & $4$ & $(\mathbf{5})$ & $4$ & $\mathbf{5}$\\
citeseer & $3327$ & $4552$ & $4$ & $(6)$ & $5$ & $(6)$ & $4$ & $(6)$ & $4$ & $(6)$ & $4$ & $\mathbf{9}$\\
web-edu & $3031$ & $6474$ & $16$ & $(\mathbf{30})$ & $16$ & $(\mathbf{30})$ & $16$ & $(16)$ & $16$ & $(\mathbf{30})$ & $16$ & $16$\\
web-spam & $4767$ & $37375$ & $10$ & $(\mathbf{20})$ & $16$ & $(17)$ & $7$ & $(17)$ & $16$ & $(17)$ & $16$ & $14$\\
road-minnesota & $2642$ & $3303$ & $2$ & $(2)$ & $2$ & $(\mathbf{3})$ & $2$ & $(\mathbf{3})$ & $\mathbf{3}$ & $(\mathbf{3})$ & $\mathbf{3}$ & $\mathbf{3}$\\
bio-yeast & $1458$ & $1948$ & $3$ & $(\mathbf{6})$ & $4$ & $(\mathbf{6})$ & $2$ & $(\mathbf{6})$ & $3$ & $(\mathbf{6})$ & $4$ & $5$\\
bio-SC-LC & $2004$ & $20452$ & $12$ & $(\mathbf{29})$ & $\mathbf{29}$ & $(\mathbf{29})$ & $3$ & $(\mathbf{29})$ & $\mathbf{29}$ & $(\mathbf{29})$ & $\mathbf{29}$ & $\mathbf{29}$\\
rt\_damascus & $3052$ & $3881$ & $3$ & $(\mathbf{4})$ & $3$ & $(\mathbf{4})$ & $3$ & $(\mathbf{4})$ & $\mathbf{4}$ & $(\mathbf{4})$ & $3$ & $\mathbf{4}$\\
soc-wiki-vote & $889$ & $2914$ & $5$ & $(\mathbf{7})$ & $6$ & $(\mathbf{7})$ & $6$ & $(\mathbf{7})$ & $6$ & $(\mathbf{7})$ & $6$ & $6$\\
socfb-bowdoin47 & $2252$ & $84387$ & $14$ & $(\mathbf{23})$ & $15$ & $(\mathbf{23})$ & $7$ & $(22)$ & $13$ & $(\mathbf{23})$ & $14$ & $9$\\
\bottomrule
\end{tabular}
\end{threeparttable}
\end{table*}

\subsection{Maximum Independent Set}

\paragraph{Randomized Algorithm}
We also compared our methods and S2V-DQN to a randomized algorithm for {\sc MaximumIndependentSet}. We adopt the randomized algorithm developed in \ref{mvc-randomized} with a slight change: for a degree-1 vertex $v$, there is always a maximum independent set that includes $v$. Therefore, instead, we include $v$ to the current solution and erase $v$ and its neighbors from the original graph. Note that although {\sc MinimumVertexCover} and {\sc MaximumIndependentSet} are theoretically equivalent, when solved by CombOpt Zero or S2V-DQN, their difficulties are different. For example, in the first stage of the training, the sizes of independent sets obtained are usually much greater than the complements of vertex covers (usually, they tend to select almost all nodes as vertex cover at the beginning).

\paragraph{Full results}
Please refer to Table \ref{table:mis-full} for the full results for {\sc MaximumIndependentSet}. Similarly to {\sc MinimumVertexCover}, even a simple randomized algorithm with recursive reductions could obtain near-optimal solutions for large sparse networks, its performance on ER graphs were much weaker than both CombOpt Zero and S2V-DQN. This is because, in ER graphs, there were only a few trivial vertices to be selected.
Notably, CombOpt Zero with GIN reached to a better solution than CPLEX on one of the DIMACS instances.

\begin{table*}[t]
\centering
\begin{threeparttable}
\caption{\textbf{Performance comparison on {\sc MaximumIndependentSet}.} Larger is better. Bold values are the best values among reinforcement learning approaches. Empty cells mean time limit exceeded.}
\label{table:mis-full}
\small
\begin{tabular}{lrr|rrrr|r|r|r}
\toprule
 & & & \multicolumn{4}{c|}{CombOpt Zero} & &\\
 & $|V|$ & $|E|$ & 2-IGN+ & GIN & GCN & S2V & S2V-DQN & randomized & CPLEX \\
\midrule
er100\_15 & $100$ & $783$ & $\mathbf{24}$ & $\mathbf{24}$ & $23$ & $\mathbf{24}$ & $\mathbf{24}$ & $\mathit{23}$ & $\mathit{24}$\\
er200\_10 & $200$ & $1957$ & $40$ & $40$ & $39$ & $\mathbf{41}$ & $40$ & $\mathit{37}$ & $\mathit{41}$\\
er1000\_5 & $1000$ & $25091$ & $106$ & $107$ & $\mathbf{108}$ & $105$ & $106$ & $\mathit{89}$ & $\mathit{107}$\\
er5000\_1 & $5000$ & $124804$ & - & $544$ & $538$ & $544$ & $\mathbf{551}$ & $\mathit{435}$ & $\mathit{544}$\\
ba100\_5 & $100$ & $475$ & $37$ & $37$ & $37$ & $37$ & $37$ & $\mathit{37}$ & $\mathit{37}$\\
ba200\_5 & $200$ & $975$ & $81$ & $81$ & $80$ & $\mathbf{82}$ & $\mathbf{82}$ & $\mathit{79}$ & $\mathit{82}$\\
ba1000\_5 & $1000$ & $4975$ & $400$ & $407$ & $403$ & $408$ & $\mathbf{409}$ & $\mathit{394}$ & $\mathit{411}$\\
ba5000\_5 & $5000$ & $24975$ & - & $2079$ & $2062$ & $\mathbf{2085}$ & $2078$ & $\mathit{1960}$ & $\mathit{2090}$\\
cora & $2708$ & $5429$ & - & $1450$ & $1448$ & $\mathbf{1451}$ & $1448$ & $\mathit{1439}$ & $\mathit{1451}$\\
citeseer & $3327$ & $4552$ & - & $1818$ & $1817$ & $\mathbf{1819}$ & $1817$ & $\mathit{1860}$ & $\mathit{1867}$\\
web-edu & $3031$ & $6474$ & - & $1580$ & $1580$ & $1580$ & $1580$ & $\mathit{1580}$ & $\mathit{1580}$\\
web-spam & $4767$ & $37375$ & - & $\mathbf{2464}$ & $2456$ & $2463$ & $2441$ & $\mathit{2434}$ & $\mathit{2470}$\\
road-minnesota & $2642$ & $3303$ & - & $1318$ & $1300$ & $1316$ & $\mathbf{1321}$ & $\mathit{1313}$ & $\mathit{1323}$\\
bio-yeast & $1458$ & $1948$ & $990$ & $1000$ & $1001$ & $\mathbf{1002}$ & $\mathbf{1002}$ & $\mathit{1002}$ & $\mathit{1002}$\\
bio-SC-LC & $2004$ & $20452$ & $945$ & $959$ & $953$ & $\mathbf{964}$ & $948$ & $\mathit{936}$ & $\mathit{968}$\\
rt\_damascus & $3052$ & $3881$ & - & $2673$ & $\mathbf{2683}$ & $2679$ & $\mathbf{2683}$ & $\mathit{2683}$ & $\mathit{2683}$\\
soc-wiki-vote & $889$ & $2914$ & $481$ & $\mathbf{483}$ & $482$ & $\mathbf{483}$ & $482$ & $\mathit{483}$ & $\mathit{483}$\\
socfb-bowdoin47 & $2252$ & $84387$ & $445$ & $456$ & $\mathbf{457}$ & $443$ & $426$ & $\mathit{392}$ & $\mathit{461}$\\
dimacs-frb30-15-1 & $450$ & $17827$ & $26$ & $26$ & $26$ & $\mathbf{27}$ & $26$ & $\mathit{24}$ & $\mathit{28}$\\
dimacs-frb50-23-1 & $1150$ & $80072$ & $41$ & $\mathbf{44}$ & $43$ & $40$ & $41$ & $\mathit{39}$ & $\mathit{43}$\\
\bottomrule
\end{tabular}
\end{threeparttable}
\end{table*}

\subsection{Minimum Feedback Vertex Set}\label{appendix:fvs}
\paragraph{Randomized Algorithm}
Similarly to {\sc MinimumVertexCover}, a 2-approximation algorithm is known for {\sc MinimumFeedbackVertexSet} \citep{fvs-2opt}. In this experiment, we implemented a parameterized algorithm that runs in time $4^kn^{O(1)}$ , where $k$ is the size of the solution, instead, which would work well for real-world sparse networks.
The implemented algorithm is based on the following theorem.
\newtheorem{theorem}{Theorem}[section]
\newtheorem{corollary}{Corollary}[section]
\begin{theorem}[\citet{parameterized}, page 101]
Let $G = (V, E)$ be a multigraph whose minimum degree is at least $3$. Then, for any feedback vertex set $V' \subset V$, more than half of edges have at least one endpoint in $V'$.
\end{theorem}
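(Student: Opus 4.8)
The plan is to prove something slightly stronger and more symmetric: writing $F = V \setminus V'$, I will show that $G[F]$ contains strictly fewer edges than the cut between $F$ and $V'$, and the claimed inequality then follows at once. Partition $E$ into $E_F$ (edges with both ends in $F$), $E_{V'}$ (both ends in $V'$), and $E_{FV'}$ (one end in each), so $|E| = |E_F| + |E_{V'}| + |E_{FV'}|$. The assertion ``more than half of the edges meet $V'$'' is exactly $|E_{V'}| + |E_{FV'}| > |E_F|$, and for this it suffices to prove the cleaner inequality $|E_{FV'}| > |E_F|$. If $F = \emptyset$ every edge meets $V'$ and there is nothing to do, so from now on assume $F \neq \emptyset$.

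The argument rests on two ingredients. First, since $V'$ is a feedback vertex set, $G[F]$ is cycle-free; in a multigraph this forbids loops and parallel edges, so $G[F]$ is an ordinary forest and therefore $|E_F| \le |F| - 1 < |F|$. Second, I count the degrees of the vertices of $F$ in $G$: each edge of $E_F$ is incident to two vertices of $F$, each edge of $E_{FV'}$ to exactly one, and — because $G[F]$ has no loops — nothing else contributes, so $\sum_{v \in F} \deg_G(v) = 2|E_F| + |E_{FV'}|$. On the other hand, the hypothesis that every vertex has degree at least $3$ gives $\sum_{v \in F} \deg_G(v) \ge 3|F|$.

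Combining the two ingredients, $2|E_F| + |E_{FV'}| = \sum_{v \in F}\deg_G(v) \ge 3|F| > 3|E_F|$, whence $|E_{FV'}| > |E_F|$. Adding $|E_{V'}| \ge 0$ gives $|E_{V'}| + |E_{FV'}| > |E_F|$, and since these two quantities together with $|E_F|$ sum to $|E|$, we get $|E_{V'}| + |E_{FV'}| > |E|/2$, which is the statement.

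I do not expect a real obstacle here; the only point that needs care is the multigraph bookkeeping. One must state explicitly that ``cycle-free'' rules out parallel edges and self-loops \emph{inside} $F$ — otherwise the forest edge bound $|E_F| \le |F|-1$ could fail, and a loop at a vertex of $F$ would spoil the identity $\sum_{v\in F}\deg_G(v) = 2|E_F| + |E_{FV'}|$ — whereas parallel edges or loops sitting on $V'$ or in the cut are harmless, since they only increase $|E_{FV'}|$, $|E_{V'}|$, or the degree sum restricted to $V'$, and never $|E_F|$.
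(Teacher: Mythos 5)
Your proof is correct. Note that the paper does not prove this statement at all---it is imported verbatim from the cited parameterized-algorithms textbook---and your argument (the forest bound $|E_F|\le |F|-1$ combined with the degree count $\sum_{v\in F}\deg_G(v)=2|E_F|+|E_{FV'}|\ge 3|F|$, giving $|E_{FV'}|>|E_F|$) is exactly the standard counting proof from that source, with the multigraph subtleties (no loops or parallel edges inside $F$, harmlessness of those touching $V'$) handled correctly.
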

\begin{corollary}\label{fvs-select-rule}
Let $G = (V, E)$ be a multigraph whose minimum degree is at least $3$. If we choose a vertex $v$ with probability proportional to its degree, $v$ is included in a minimum feedback vertex set of $G$ with probability greater than $1/4$.
\end{corollary}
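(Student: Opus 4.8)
The plan is to reduce the statement to a counting argument built directly on the quoted Theorem. Fix, once and for all, an arbitrary minimum feedback vertex set $V^* \subseteq V$ of $G$ (one exists since $G$ is finite). First I would apply the Theorem to $V^*$: writing $E^* := \{e \in E : e \text{ has at least one endpoint in } V^*\}$, the Theorem gives $|E^*| > |E|/2$.

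Next I would relate $|E^*|$ to the degrees of vertices in $V^*$. Summing degrees over $V^*$ counts each edge of $E^*$ at least once (exactly twice if both endpoints lie in $V^*$, once otherwise) and counts no edge outside $E^*$, so $\sum_{v \in V^*} \deg(v) \ge |E^*| > |E|/2$. Combining this with the handshake identity $\sum_{v \in V} \deg(v) = 2|E|$, the vertex $v$ drawn with probability proportional to its degree satisfies
\[
\Pr[v \in V^*] \;=\; \frac{\sum_{u \in V^*} \deg(u)}{\sum_{u \in V} \deg(u)} \;=\; \frac{\sum_{u \in V^*}\deg(u)}{2|E|} \;>\; \frac{|E|/2}{2|E|} \;=\; \frac14 .
\]
Finally, since $\{u : u \in V^*\}$ is contained in $\{u : u \text{ lies in some minimum feedback vertex set of } G\}$, the probability that $v$ lies in \emph{some} minimum feedback vertex set is at least $\Pr[v \in V^*] > 1/4$, which is the claim.

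I do not expect a genuine obstacle here: all the combinatorial weight sits in the already-granted Theorem, and the remaining argument is a double-counting bound plus the handshake lemma. The only two points that need a moment of care are (i) getting the direction of the inequality $\sum_{v\in V^*}\deg(v)\ge |E^*|$ right (it is an inequality, not an equality, because of edges internal to $V^*$, but that only helps us), and (ii) noting explicitly that fixing one minimum feedback vertex set is enough, since membership in a fixed such set only under-counts the event ``$v$ belongs to some minimum feedback vertex set.''
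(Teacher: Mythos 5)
Your proof is correct and follows essentially the same route as the paper: the paper's one-line justification (a uniformly random edge meets a minimum feedback vertex set $V^*$ with probability more than $1/2$, and degree-proportional vertex sampling is the same as picking a uniform edge and then a uniform endpoint) is exactly the factor-of-two bookkeeping you carry out explicitly via $\sum_{v\in V^*}\deg(v)\ge|E^*|>|E|/2$ together with $\sum_{v\in V}\deg(v)=2|E|$. Your two points of care (the inequality direction for edges internal to $V^*$, and fixing one minimum feedback vertex set) are handled correctly, so there is no gap.
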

\begin{proof}
This follows from the fact that a uniformly randomly chosen edge in $G$ has at least one endpoint in a minimum feedback vertex set with probability more than $1/2$.
\end{proof}

We iteratively reduce the input graph by following rules so that the minimum degree is at least $3$.

\begin{enumerate}
    \item If multigraph $G$ has a self loop at $v$, remove vertex $v$ and include $v$ to feedback vertex set.
    \item If multigraph $G$ has multiedges whose multiplicity is greater than $2$, reduce the multiplicity to $2$.
    \item If multigraph $G$ has degree $2$ vertex $v$, erase $v$ and connect two neighbors of $v$ by a new edge.
    \item If multigraph $G$ has degree-1 vertex $v$, remove vertex $v$.
\end{enumerate}

\begin{algorithm}[t!]
  \small
  \caption{A Simple Randomized Algorithm for {\sc MinimumFeedbackVertexSet}}
  \label{fvs-randomized}
  \begin{algorithmic}
    \REQUIRE Graph $G = (V, E)$
    \ENSURE Size of a feedback vertex set $r$
    \STATE $r \leftarrow 0$
    \WHILE{$G$ has cycle(s)}
        \STATE $(G, r) \leftarrow$ $\mathrm{reduce}(G, r)$
        \STATE $v \leftarrow$ select a random node by Corollary \ref{fvs-select-rule}
        \STATE $r \leftarrow r + 1$
        \STATE $G \leftarrow$ remove vertex $v$
    \ENDWHILE
  \end{algorithmic}
\end{algorithm}

Therefore, we can construct the following randomized parameterized algorithm (Algorithm \ref{fvs-randomized}) for {\sc MinimumFeedbackVertexSet}.
This algorithm provides a minimum feedback vertex set with a probability greater than $1/4^k$, where $k$ is the size of an optimal solution. Note that we do not have to recursively reduce the graph once the minimum degree becomes at least $3$ to obtain this bound. However, practically, we can get much better solutions through recursive reductions.

\paragraph{Full results}
Full results for {\sc MinimuMFeedbackVertexSet} is given in Table \ref{table:fvs-full}. 
For the randomized parameterized algorithm, we ran it for $100$ times and took the best objective as the score. Although in this problem, we could not observe a significant superiority of CombOpt Zero against S2V-DQN, we can see a significant difference of solution sizes for some of the datasets such as web-edu, bio-SC-LC, and socfb-bowdoin47. This is probably because all of the five S2V-DQN models converged into local optima. Interestingly even the performances were significantly poor on certain instances, they still sometimes overperformed CombOpt Zero on other instances.
The randomized algorithm effectively worked on large real-world networks where the degrees were biased. On the other hand, on ER graphs, since the degrees of nodes were not as variant as social networks, CombOpt Zero and S2V-DQN performed much better than the randomized algorithm.

\begin{table*}[t!]
\centering
\begin{threeparttable}
\caption{\textbf{Performance comparison on {\sc MinimumFeedbackVertexSet}.} Smaller is better. Bold values are the best values among reinforcement learning approaches. Empty cells mean time limit exceeded.}
\label{table:fvs-full}
\small
\begin{tabular}{lrr|rrrr|r|r}
\toprule
 & & & \multicolumn{4}{c|}{CombOpt Zero} & &\\
 & $|V|$ & $|E|$ & 2-IGN+ & GIN & GCN & S2V & S2V-DQN & randomized\\
\midrule
er100\_15 & $100$ & $783$ & $64$ & $\mathbf{63}$ & $69$ & $\mathbf{63}$ & $\mathbf{63}$ & $\mathit{72}$\\
er200\_10 & $200$ & $1957$ & $143$ & $137$ & $150$ & $137$ & $\mathbf{136}$ & $\mathit{156}$\\
er1000\_5 & $1000$ & $25091$ & $917$ & $\mathbf{844}$ & $868$ & $848$ & $874$ & $\mathit{909}$\\
er5000\_1 & $5000$ & $124804$ & - & $\mathbf{4201}$ & $4339$ & $4229$ & $4359$ & $\mathit{4567}$\\
ba100\_5 & $100$ & $475$ & $44$ & $40$ & $44$ & $38$ & $\mathbf{37}$ & $\mathit{46}$\\
ba200\_5 & $200$ & $975$ & $91$ & $70$ & $77$ & $\mathbf{69}$ & $\mathbf{69}$ & $\mathit{88}$\\
ba1000\_5 & $1000$ & $4975$ & $828$ & $341$ & $367$ & $\mathbf{328}$ & $330$ & $\mathit{445}$\\
ba5000\_5 & $5000$ & $24975$ & - & $1753$ & $1879$ & $1692$ & $\mathbf{1678}$ & $\mathit{2265}$\\
cora & $2708$ & $5429$ & - & $527$ & $571$ & $545$ & $\mathbf{509}$ & $\mathit{565}$\\
citeseer & $3327$ & $4552$ & - & $\mathbf{447}$ & $471$ & $457$ & $450$ & $\mathit{430}$\\
web-edu & $3031$ & $6474$ & - & $525$ & $\mathbf{479}$ & $540$ & $859$ & $\mathit{247}$\\
web-spam & $4767$ & $37375$ & - & $1514$ & $1587$ & $1486$ & $\mathbf{1461}$ & $\mathit{1601}$\\
road-minnesota & $2642$ & $3303$ & - & $\mathbf{401}$ & $523$ & $478$ & $415$ & $\mathit{316}$\\
bio-yeast & $1458$ & $1948$ & $794$ & $153$ & $168$ & $186$ & $\mathbf{144}$ & $\mathit{122}$\\
bio-SC-LC & $2004$ & $20452$ & $1596$ & $856$ & $859$ & $\mathbf{796}$ & $1591$ & $\mathit{895}$\\
rt\_damascus & $3052$ & $3881$ & $1424$ & $\mathbf{120}$ & $150$ & $132$ & $286$ & $\mathit{98}$\\
soc-wiki-vote & $889$ & $2914$ & $677$ & $205$ & $204$ & $204$ & $\mathbf{201}$ & $\mathit{227}$\\
socfb-bowdoin47 & $2252$ & $84387$ & $2066$ & $1773$ & $1741$ & $\mathbf{1619}$ & $2175$ & $\mathit{1763}$\\
\bottomrule
\end{tabular}
\end{threeparttable}
\end{table*}

\subsection{Training Convergence} \label{appendix:training-convergence}
Here, we show the learning curves of CombOpt Zero and S2V-DQN for {\sc MaxCut}. For CombOpt Zero, we chose S2V as the network. Figure \ref{fig:coz-learning-curve} and Figure \ref{fig:dqn-learning-curve} are the learning curves for CombOpt Zero and S2V-DQN respectively. As explained in Section \ref{sec:main-experiment}, we trained both CombOpt Zero and S2V-DQN for $2$ hours. The horizontal axes show the number of trajectories generated during this $2$ hours. The vertical axes correspond to the average cut size found on $100$ fixed random ER graphs of size $50$. Since CombOpt Zero starts to log the best models after $15$ minutes from the beginning, and updates the log each $5$ minutes after that, data for the first $15$ minutes is not shown on the chart. Unlike S2V-DQN, if the newly generated models do not perform better than the current one, the best model is not updated and hence the learning curve remains flat for some intervals.

While S2V-DQN generated about $225000$ data in two hours on a single GPU, CombOpt Zero generated only $5000$ data on four GPUs.
This is because the MCTS, which CombOpt Zero executes to generate self-play data, takes time.
On the other hand, S2V-DQN requires about $50000$ data for training to converge, while CombOpt Zero requires only about $2000$ data, meaning that the training of CombOpt Zero is much more sample-efficient.

\begin{figure}[t]
\centering
\includegraphics[width=7cm]{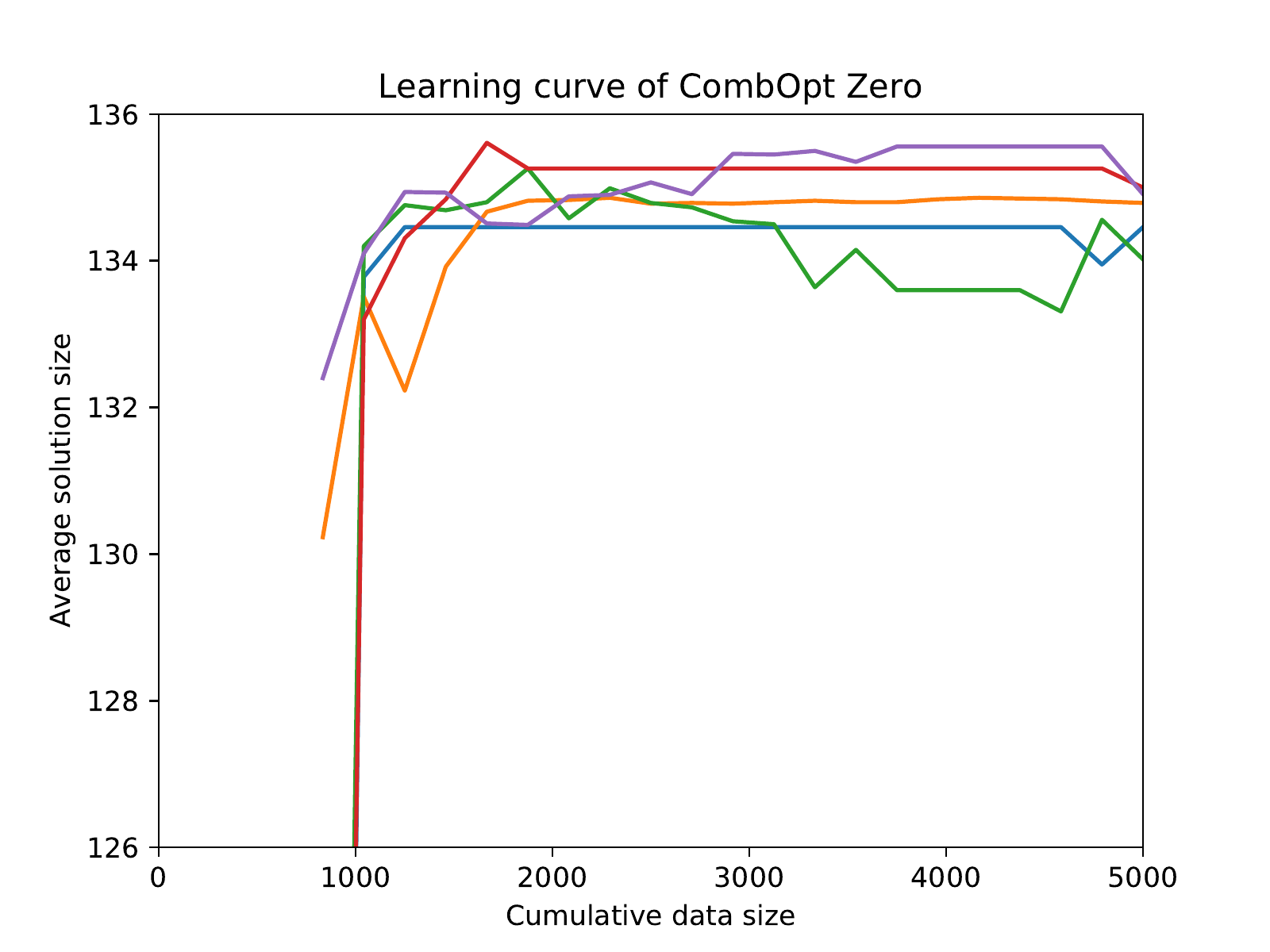}
\caption{\textbf{Training curve for CombOpt Zero on {\sc MaxCut}.}
Data for five models with the same hyperparameters are shown.
The horizontal axis is the cumulative number of generated trajectories during $2$ hours.
Since, CombOpt Zero saves the best models after $15$ minutes from start, its learning curve is not shown for the first $15$ minutes.}
\label{fig:coz-learning-curve}
\end{figure}

\begin{figure}[t]
\centering
\includegraphics[width=7cm]{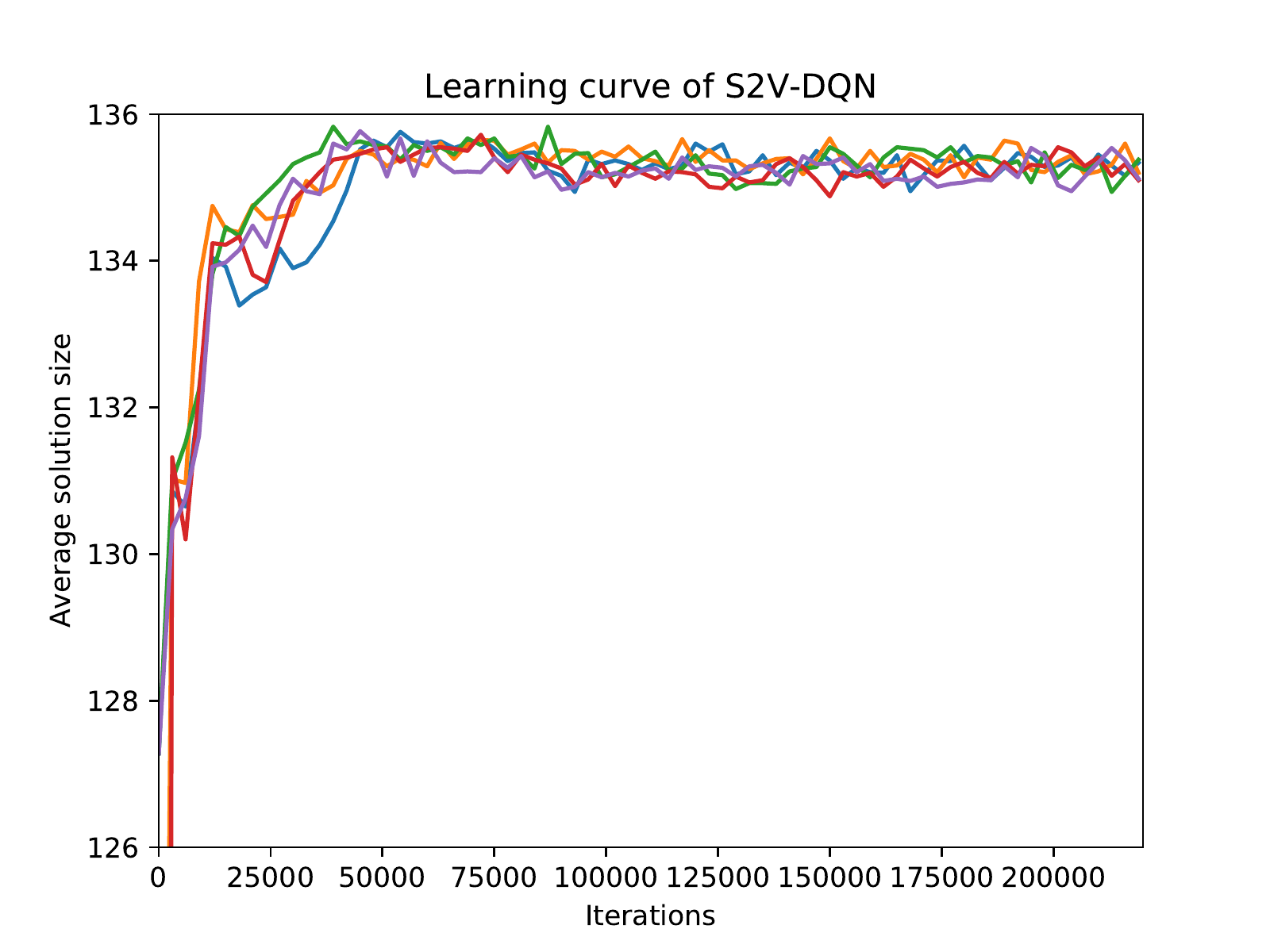}
\caption{\textbf{Training curve for S2V-DQN on {\sc MaxCut}.} Data for five models with the same hyperparameters are shown. The horizontal axis shows the total number of generated trajectories during $2$ hours (on average, S2V-DQN generates one trajectory per iteration).}
\label{fig:dqn-learning-curve}
\end{figure}

\section{Comparison of Reduction Rules} \label{compare-reduction-rules}
Here, we discuss some differences of reinforcement learning formulation between S2V-DQN and CombOpt Zero.
The first difference is their states.
While S2V-DQN's state keeps both the original graph and selected nodes, CombOpt Zero's state is a single current (labeled) graph and its size usually gets smaller as the state proceeds.
Since the GNN inference of smaller graphs is faster, it's more efficient to give small graphs as an input.
Although S2V-DQN implicitly reduces the graph size in most of its implementation, our formulation is more explicit.

The second difference is that while S2V-DQN limits action space to a selection of one vertex, CombOpt Zero does not.
Thanks to this, as stated in \ref{MDP}, {\sc MaxCut} can be formulated as a node coloring by two colors with more intuitive termination criteria: finish coloring all the nodes.
This also allows the application to a wider range of problems.
For example, {\sc K-Coloring} by defining the action space by $\{(x, c) \mid x \in V, c \in \mathbb{N}, 0 \le c \le K \}$, where $(x, c)$ represents the coloring of node $x$ by color $c$.

\section{Visualization} \label{visualization}
In this section, we illustrate how CombOpt Zero finds solutions for {\sc MaxCut}.
As described in Section \ref{sec:main-experiment}, CombOpt Zero found an optimal solution for trees and its overall performance was comparable to the state-of-the-art heuristic solvers.
Figure \ref{fig:maxcut-er-vis} shows the sequence of actions by CombOpt Zero on an ER graph.
Starting from one node, CombOpt Zero colors surrounding nodes one by one with the opposite color.
Figure \ref{fig:maxcut-tree-vis} shows the sequence of actions on a tree.
The order of actions was similar to the order of visiting nodes in the depth-first-search.
Since an optimal two coloring for {\sc MaxCut} on a tree can be obtained by the depth-first-search, we can say that CombOpt Zero successfully learned the effective algorithm for {\sc MaxCut} on a tree from the training on random graphs.

It is also interesting that CombOpt Zero sometimes skips the neighbors and colors a two-hop node by the same color as the current node.
This is possible because the $L$-layer ($5$ in our case) message passing GNN can catch information of the $L$-hop neighbors.
This flexibility possibly affected the good performance on other graph instances than trees.

\begin{figure*}[t]
    \centering
    \begin{tabular}{@{}cccc@{}}
        \includegraphics[width=0.2\textwidth]{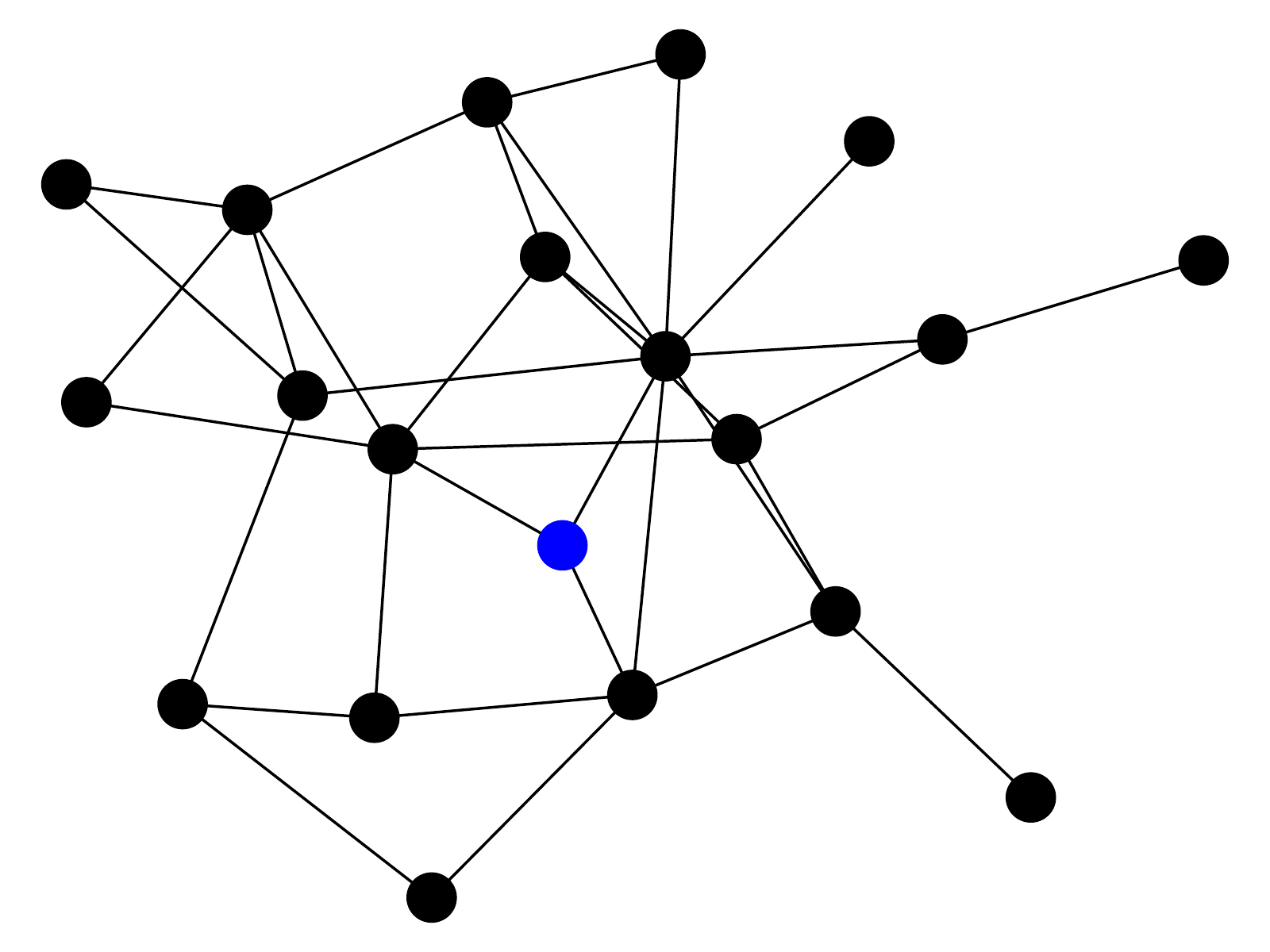} &
        \includegraphics[width=0.2\textwidth]{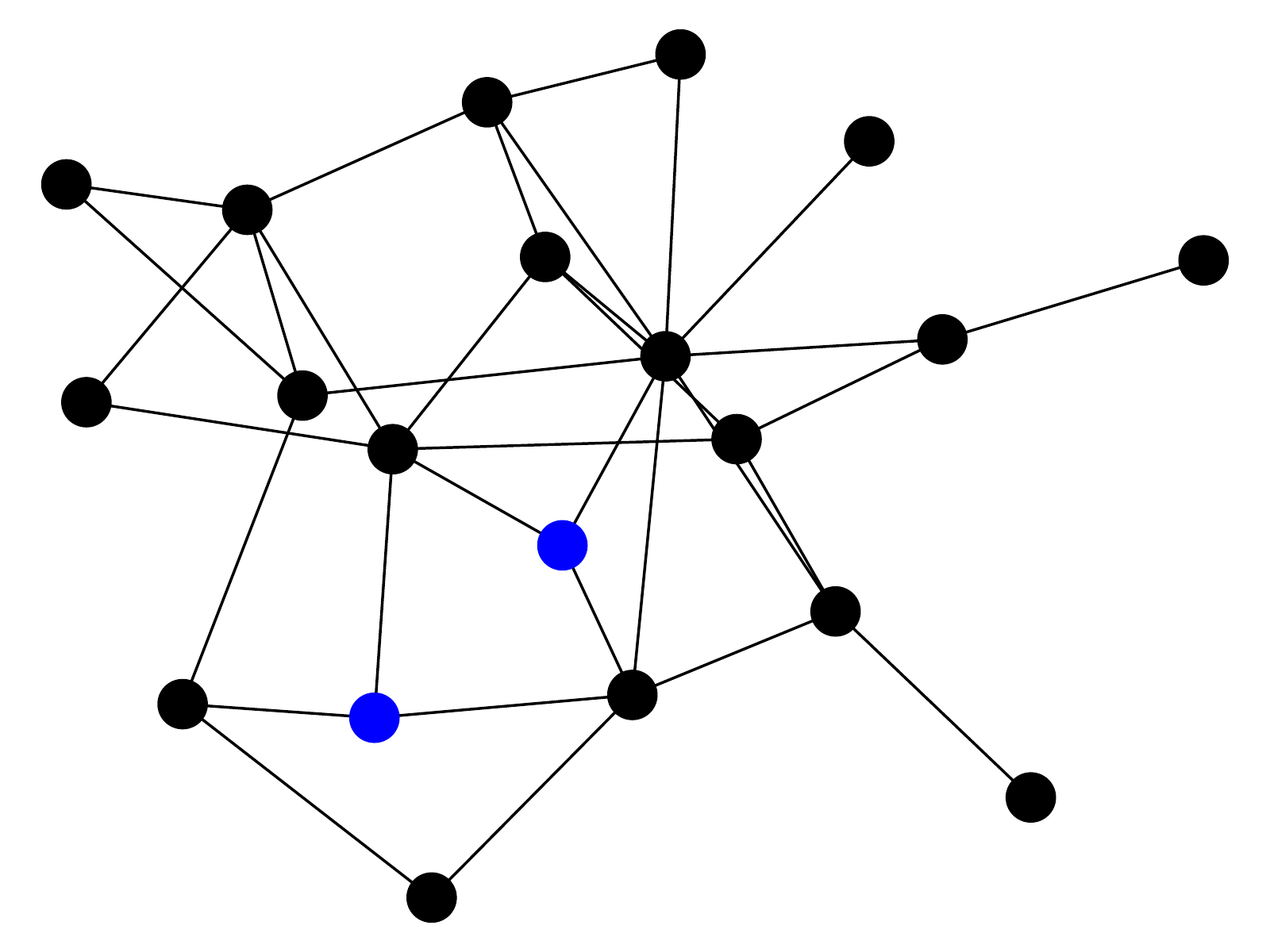} &
        \includegraphics[width=0.2\textwidth]{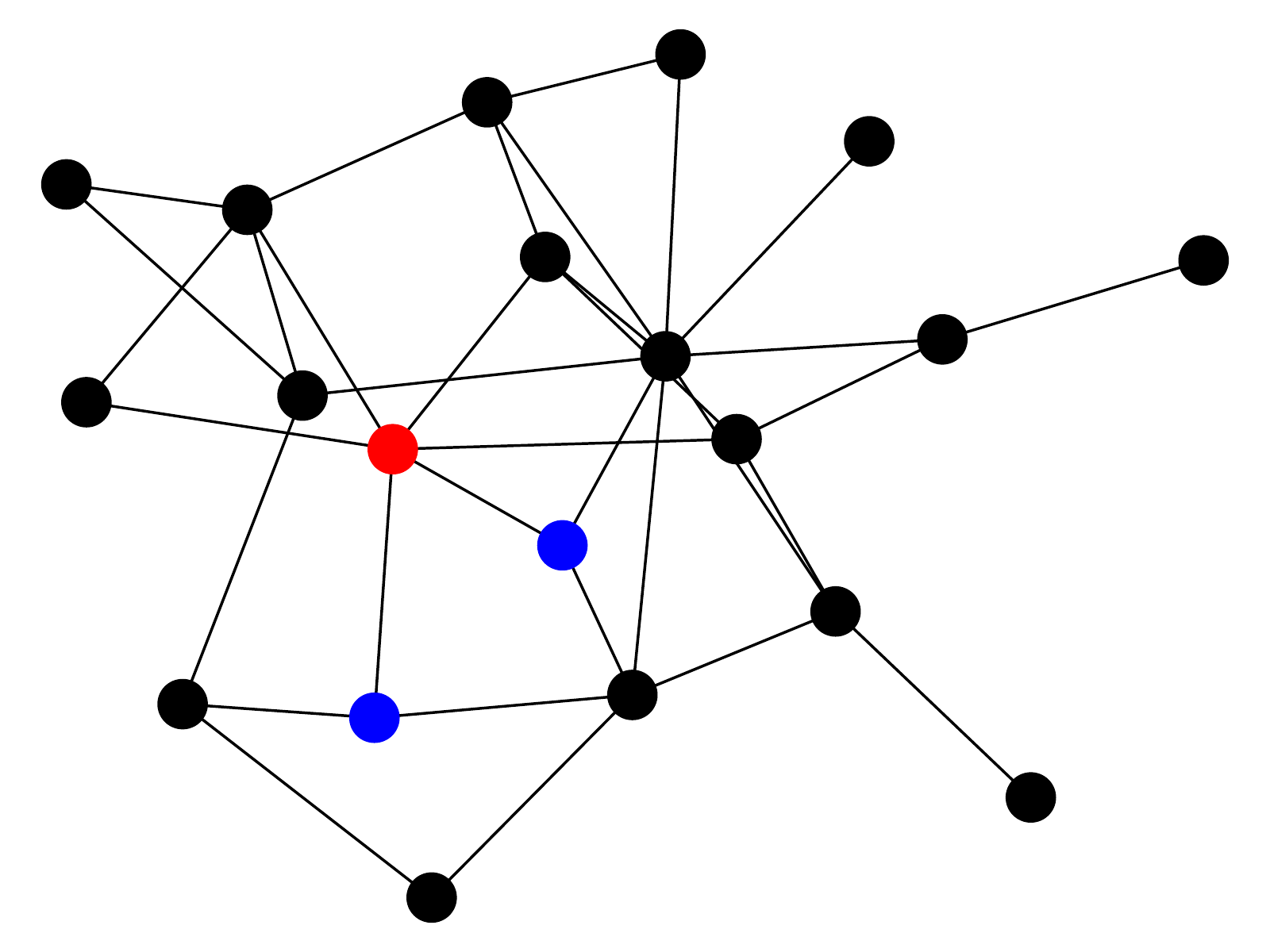} &
        \includegraphics[width=0.2\textwidth]{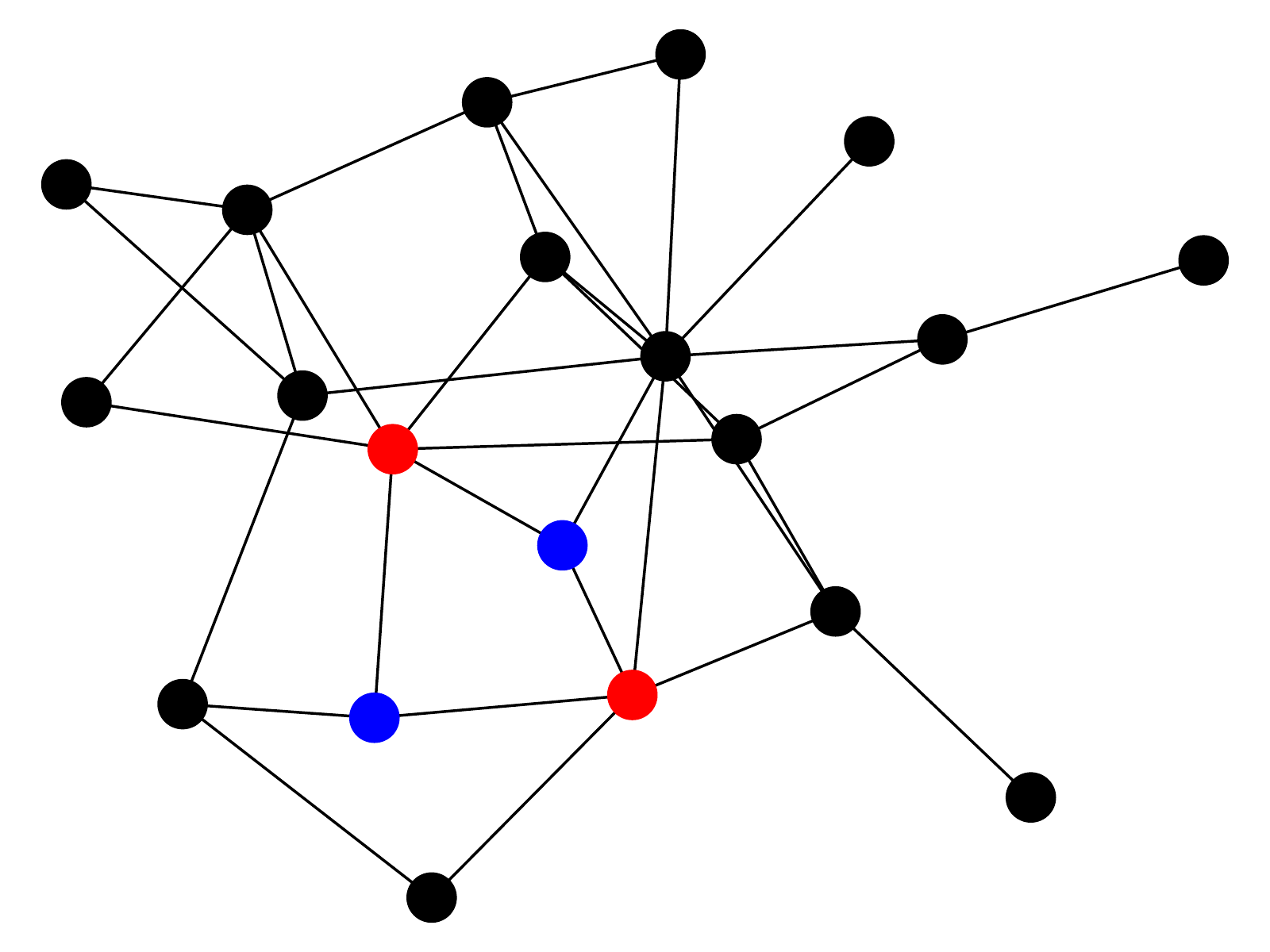} \\
        \includegraphics[width=0.2\textwidth]{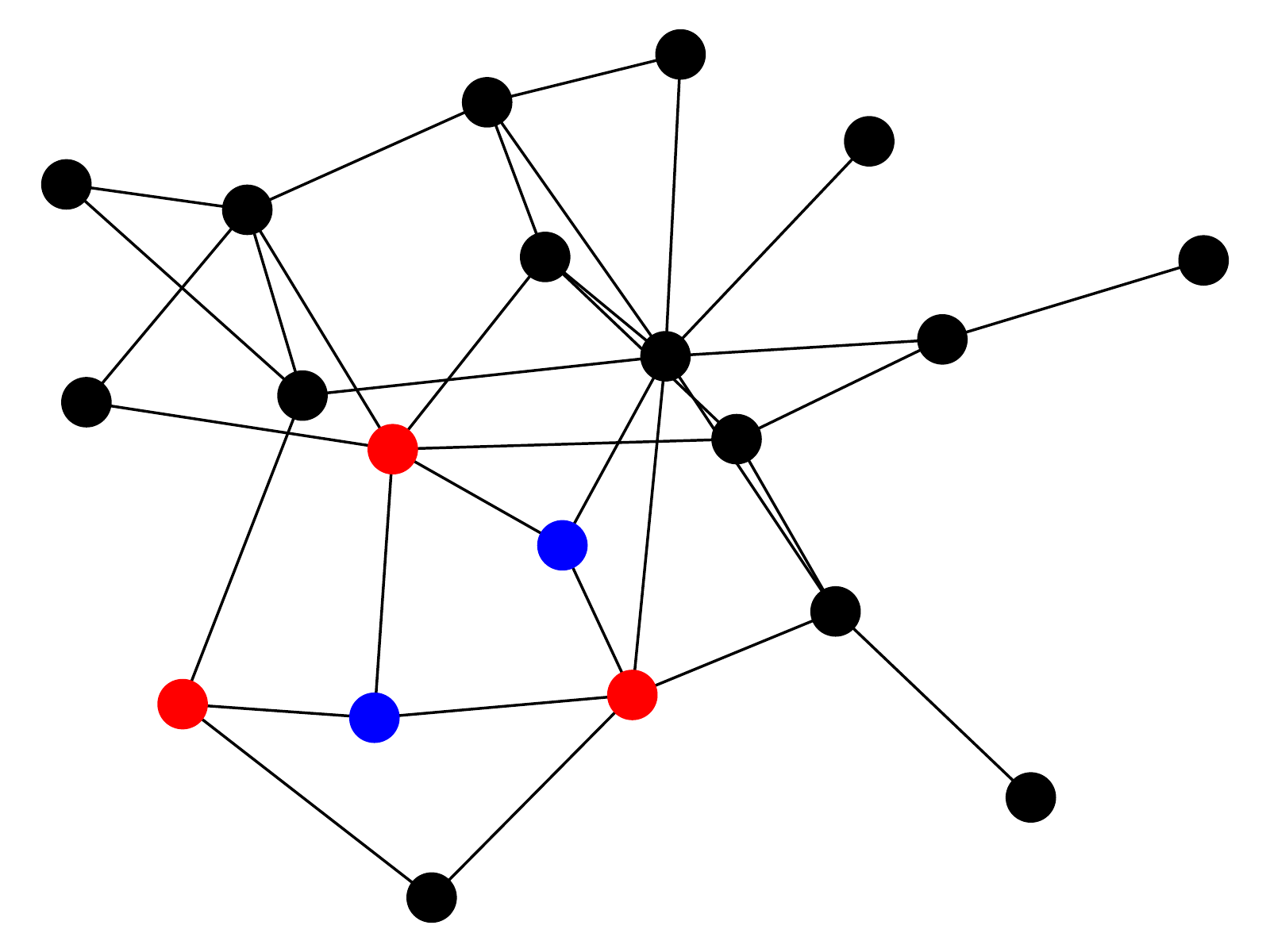} &
        \includegraphics[width=0.2\textwidth]{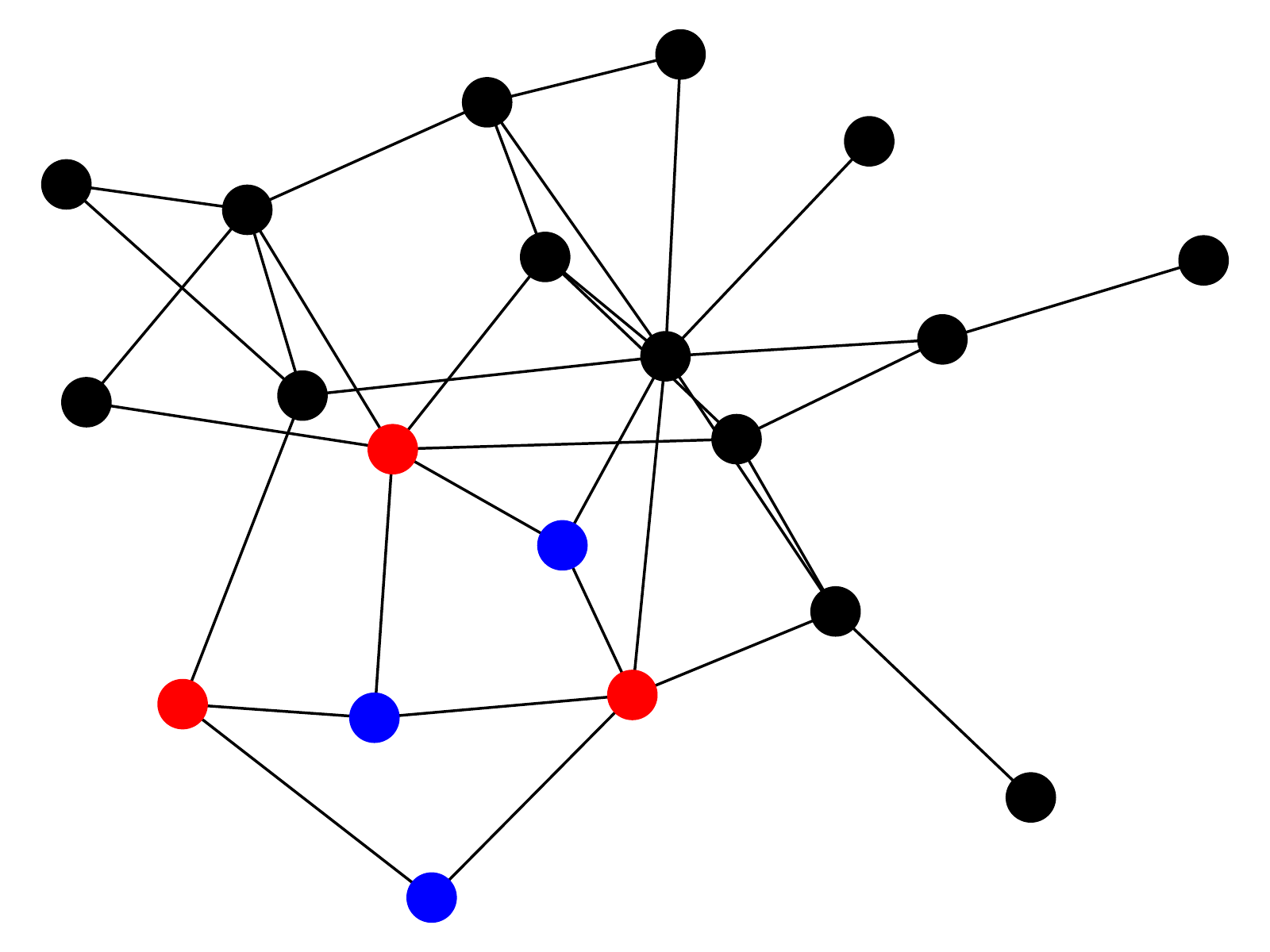} &
        \includegraphics[width=0.2\textwidth]{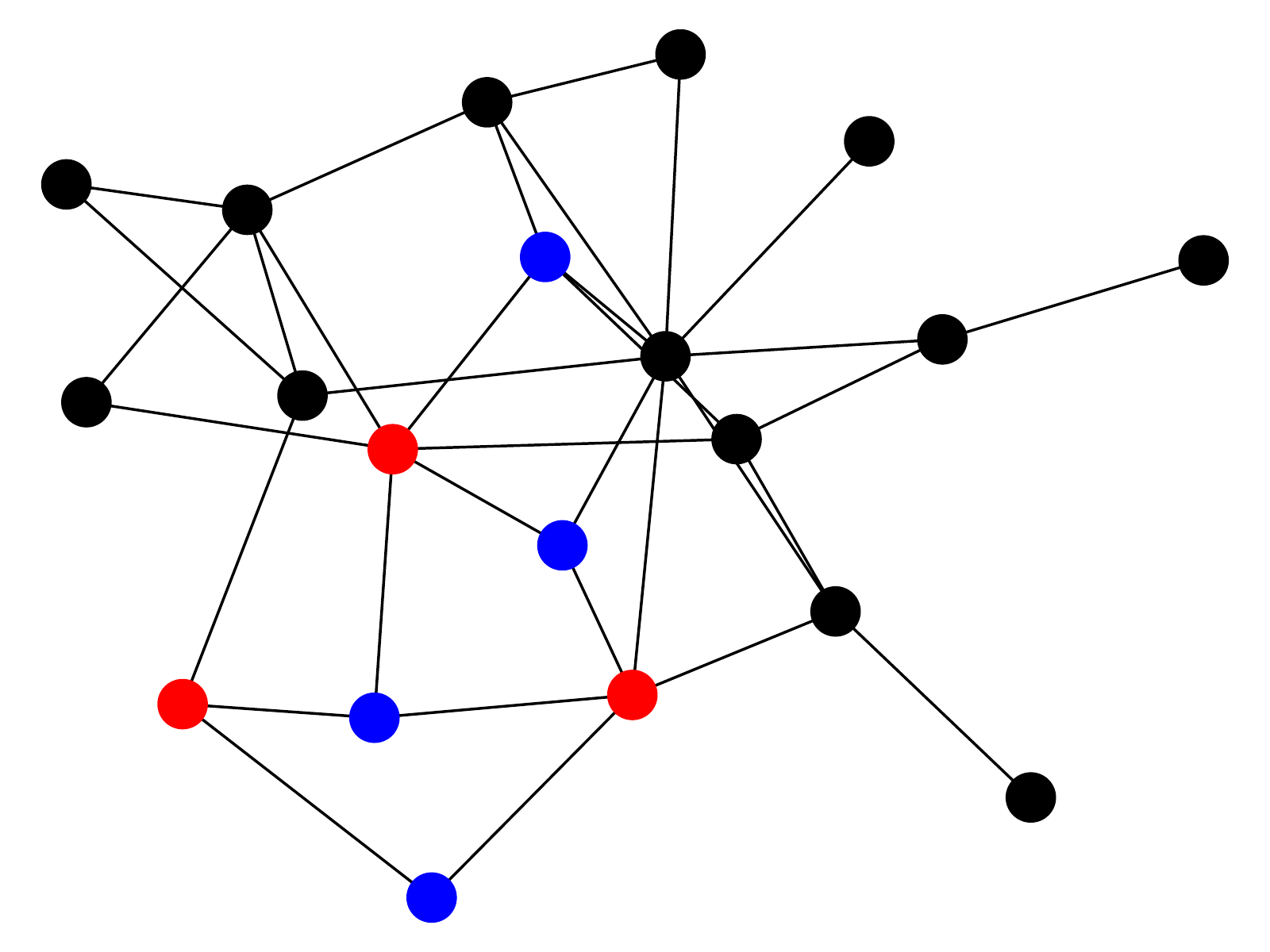} &
        \includegraphics[width=0.2\textwidth]{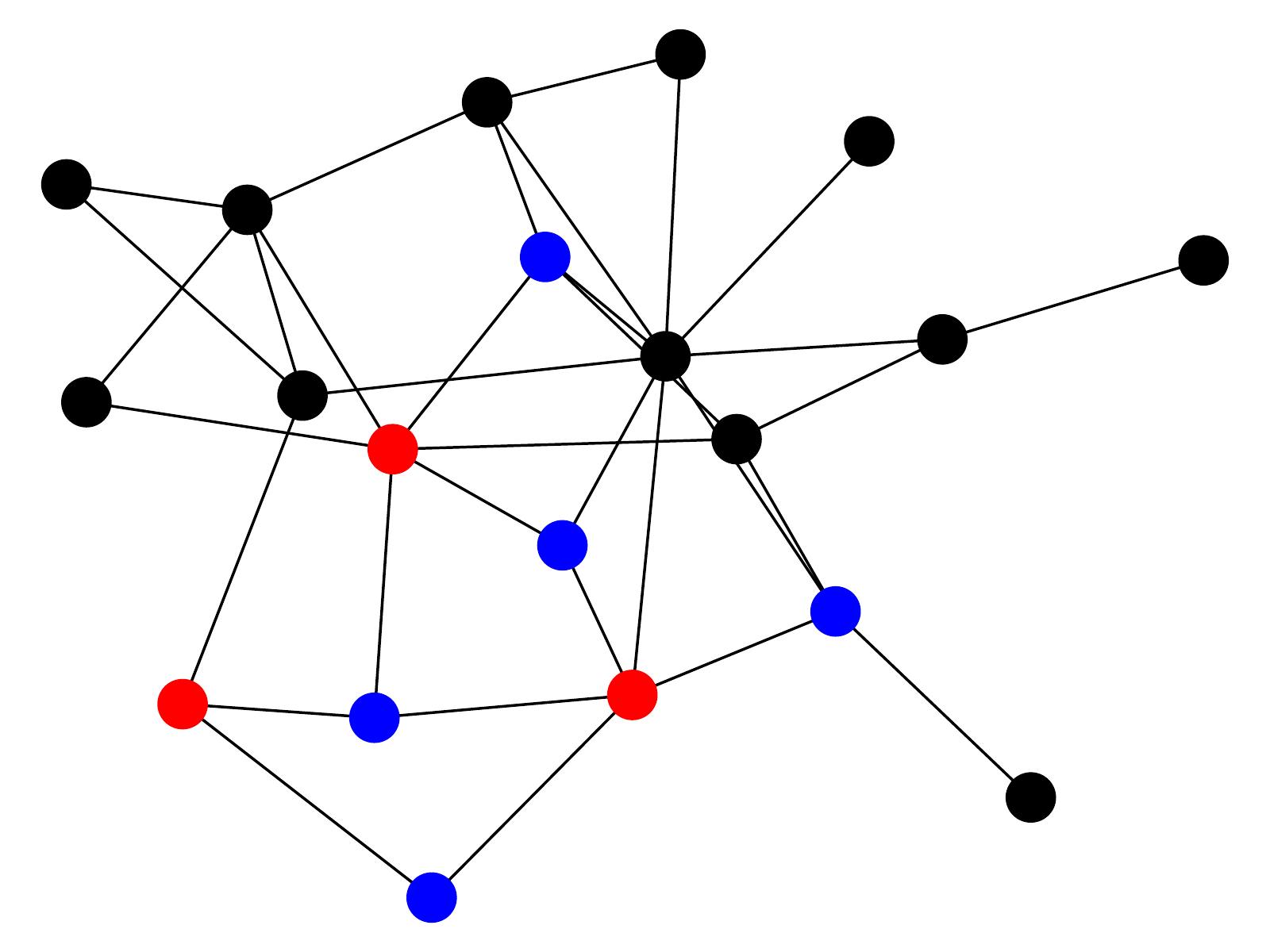} \\
        \includegraphics[width=0.2\textwidth]{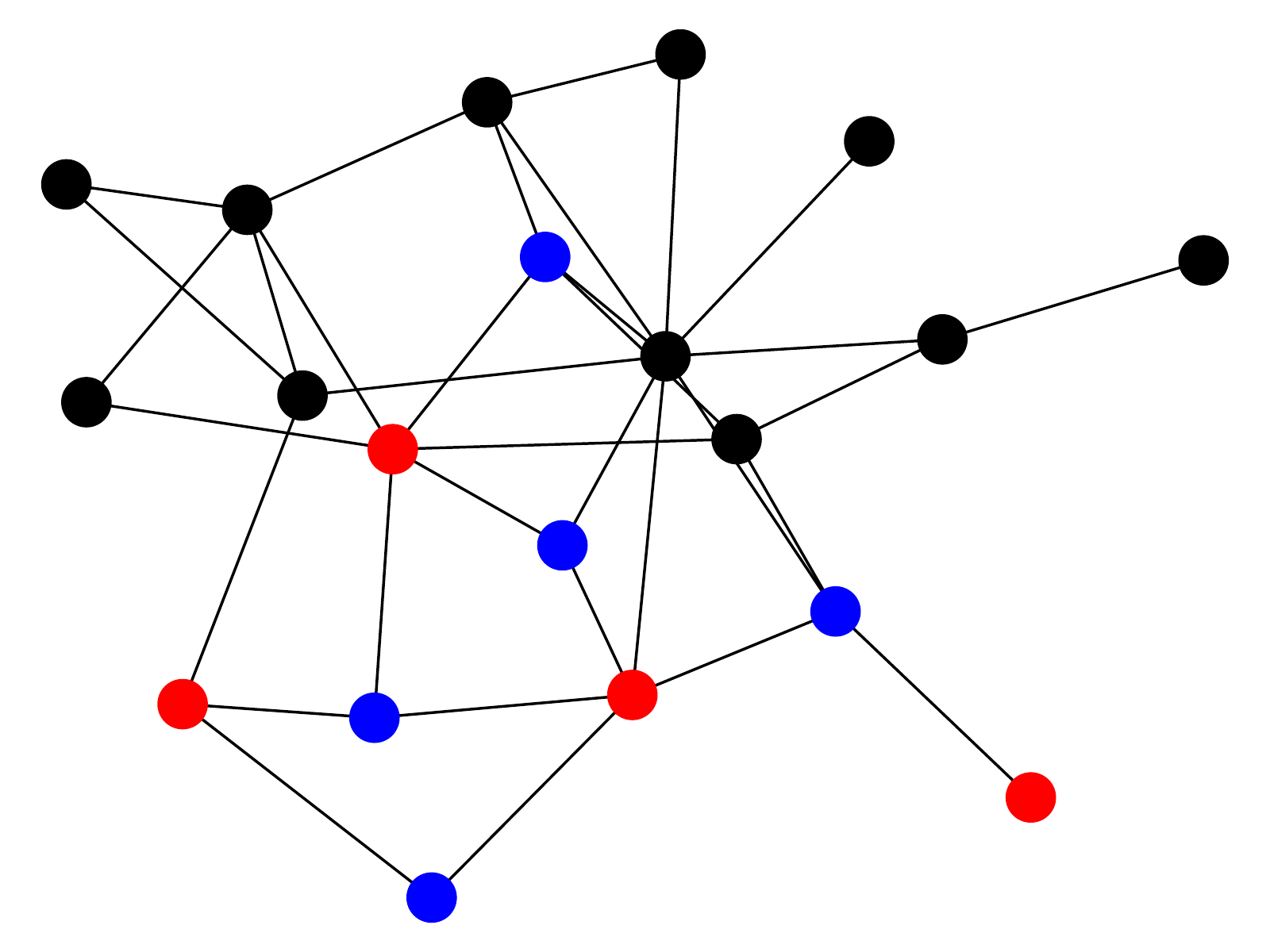} &
        \includegraphics[width=0.2\textwidth]{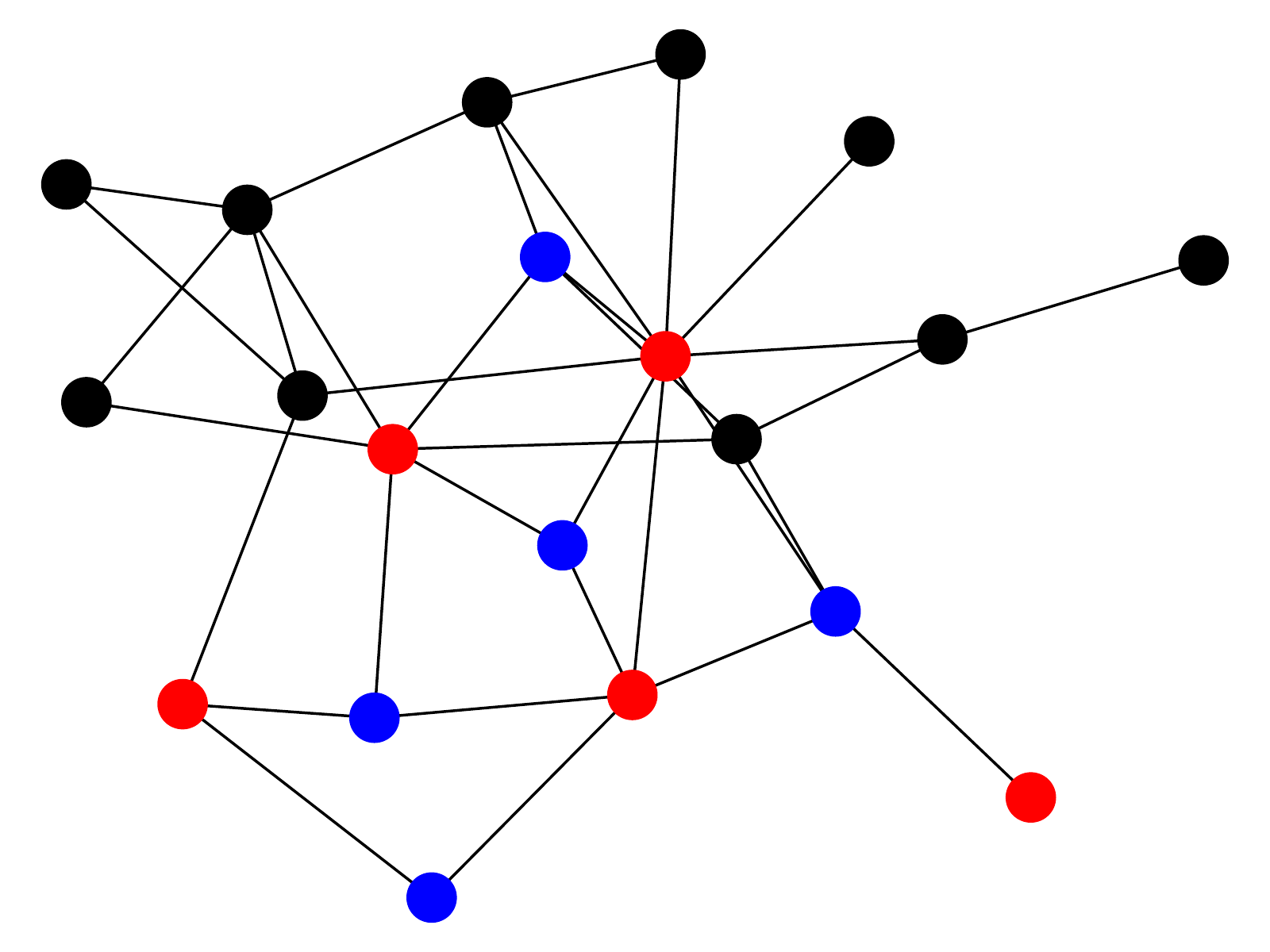} &
        \includegraphics[width=0.2\textwidth]{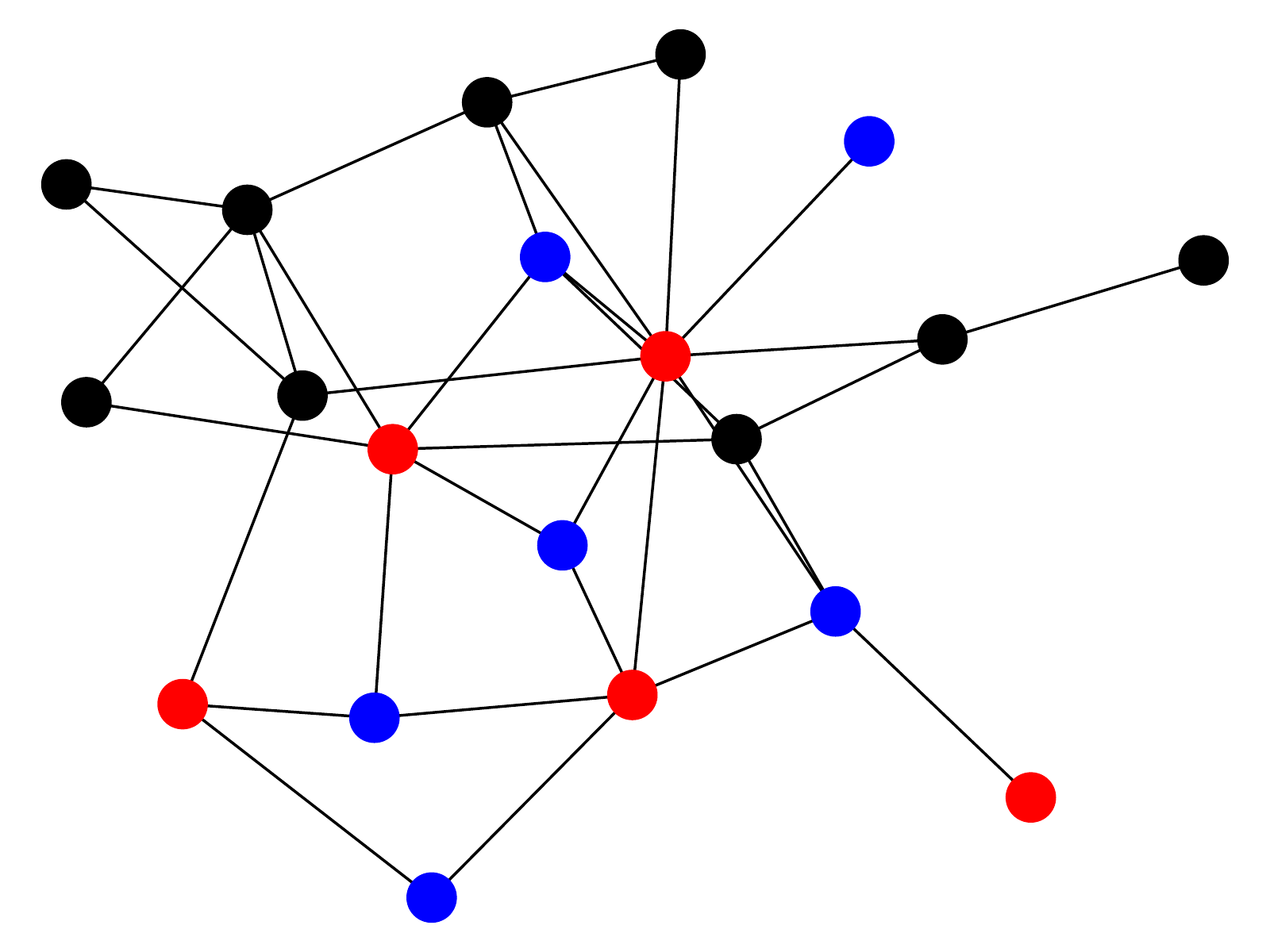} &
        \includegraphics[width=0.2\textwidth]{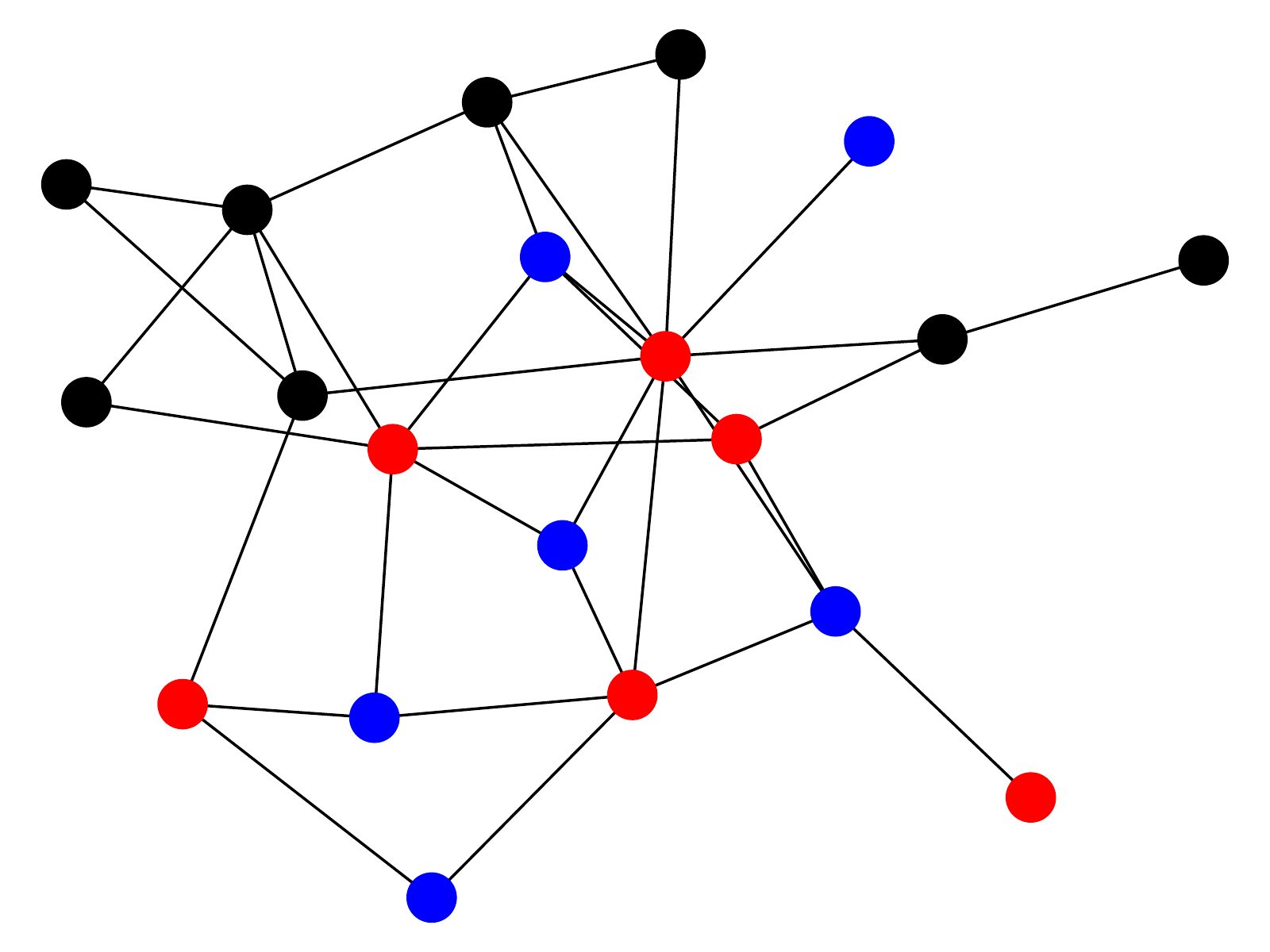} \\
        \includegraphics[width=0.2\textwidth]{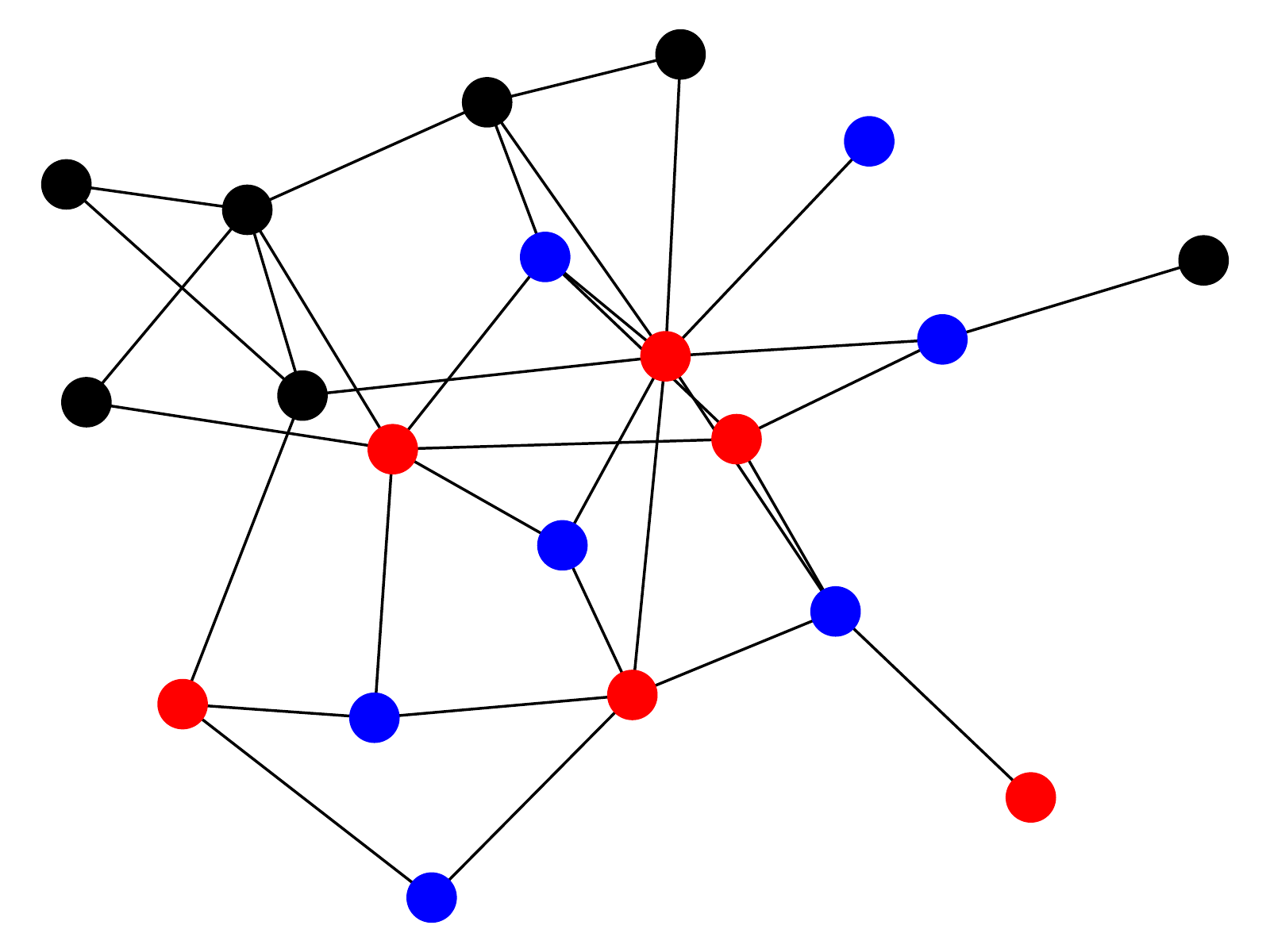} &
        \includegraphics[width=0.2\textwidth]{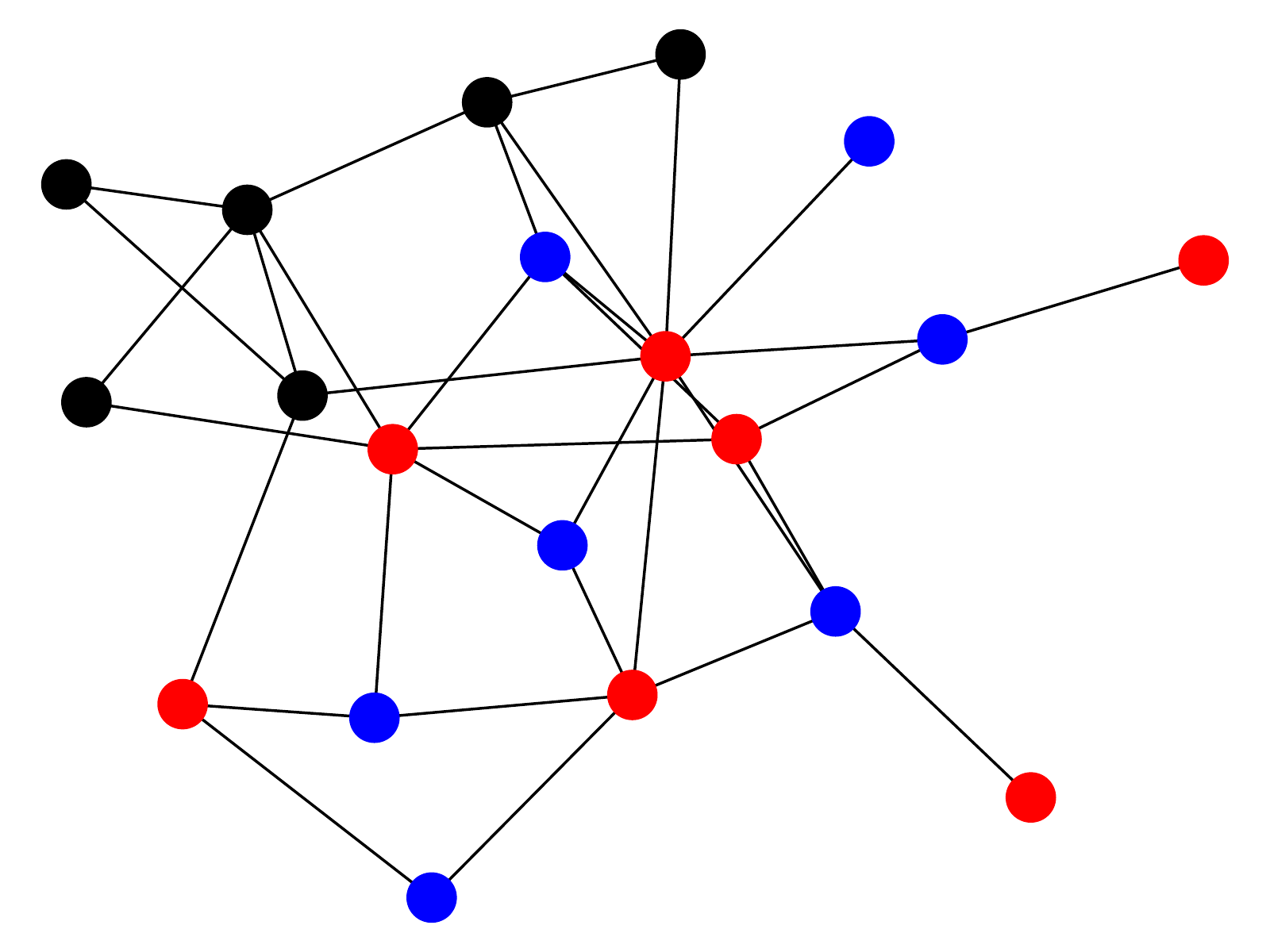} &
        \includegraphics[width=0.2\textwidth]{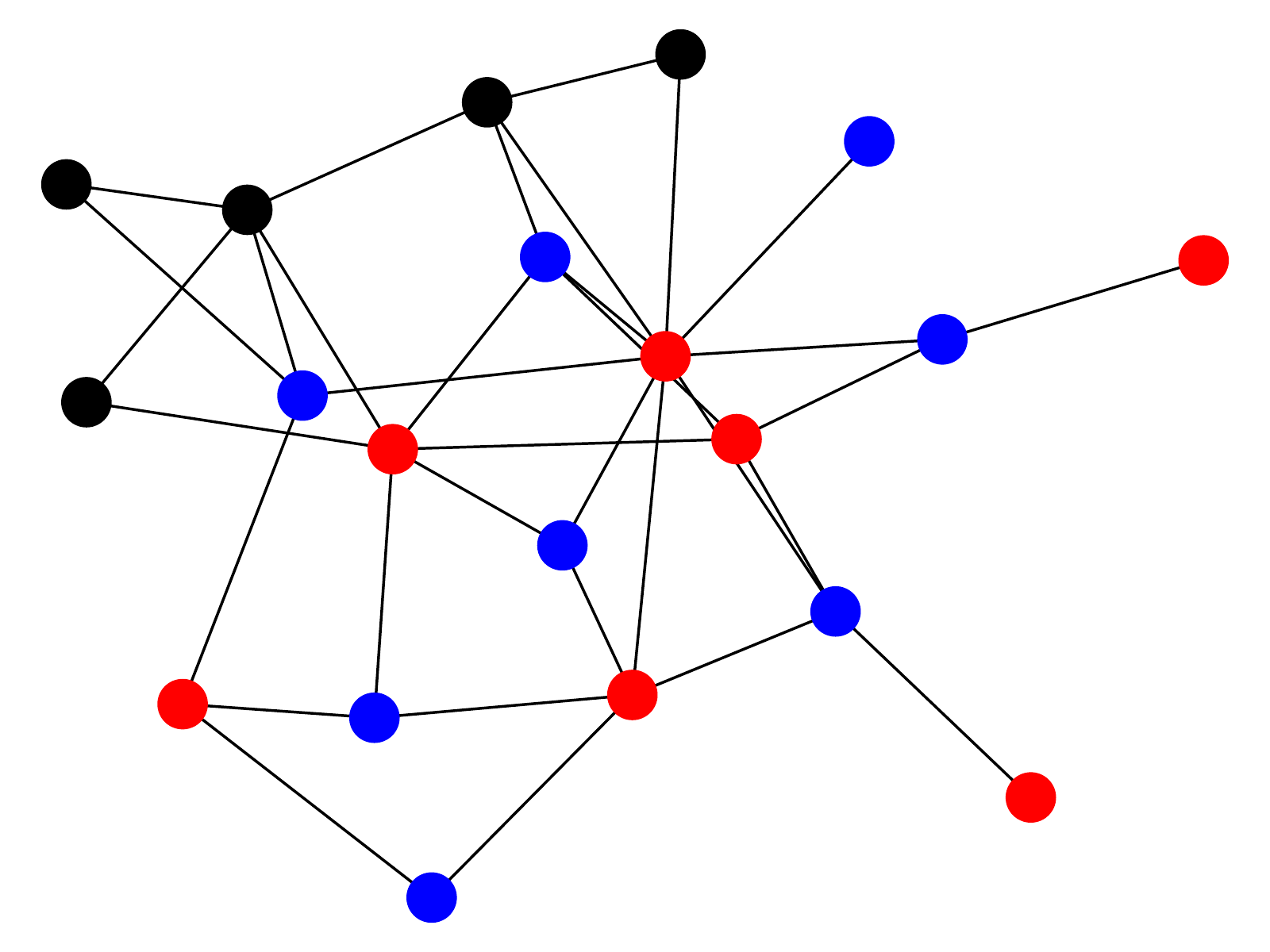} &
        \includegraphics[width=0.2\textwidth]{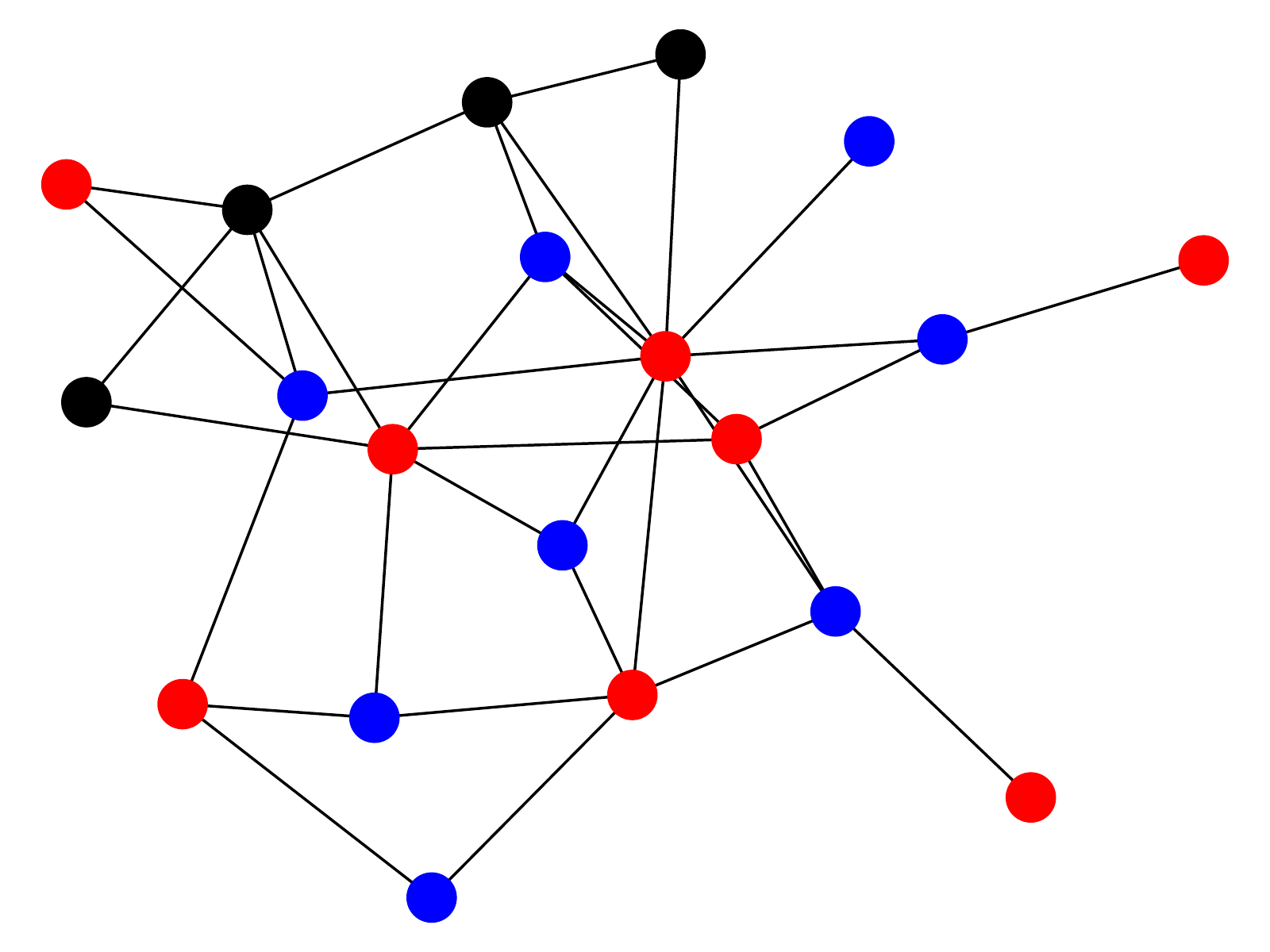} \\
        \includegraphics[width=0.2\textwidth]{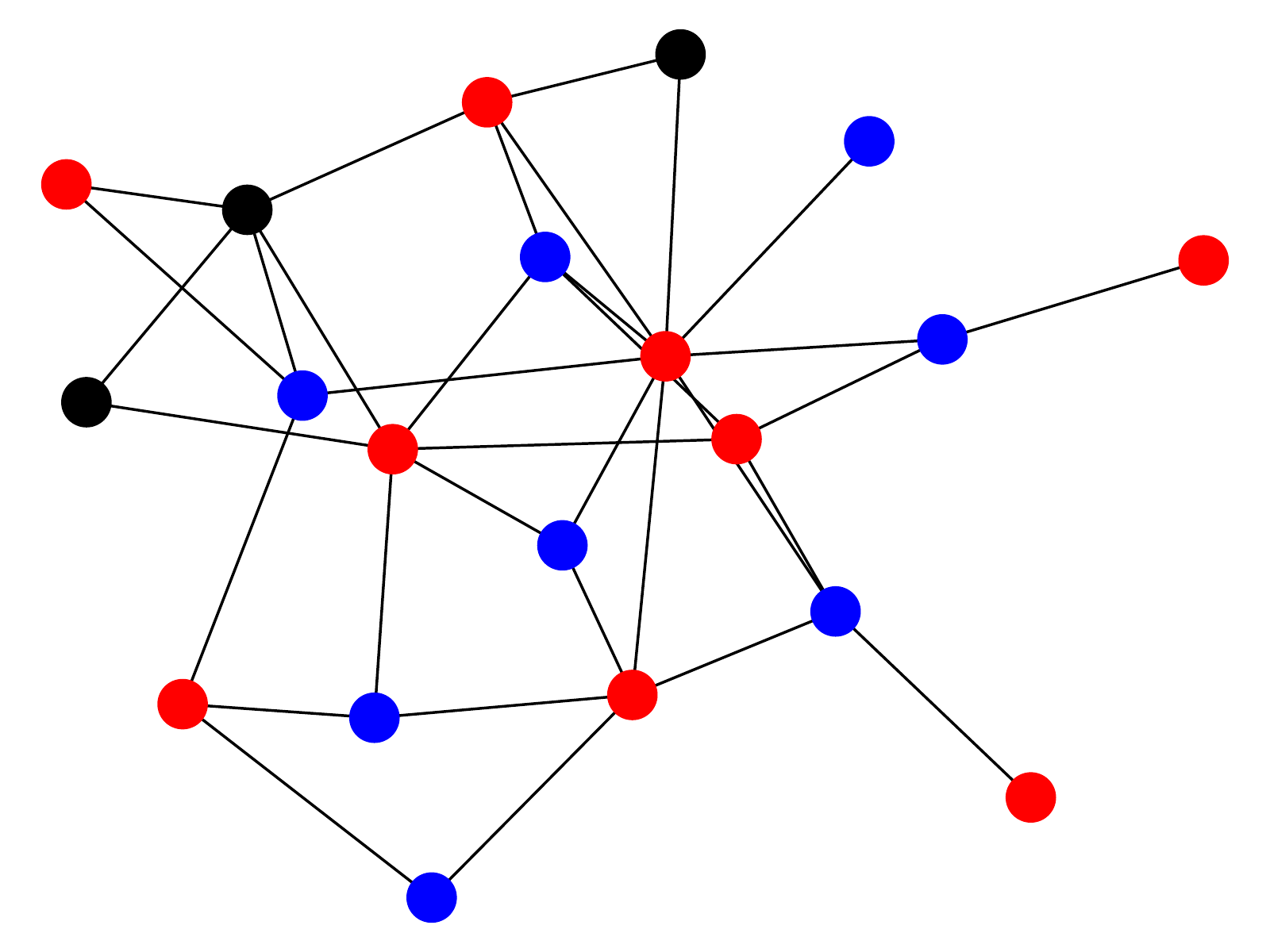} &
        \includegraphics[width=0.2\textwidth]{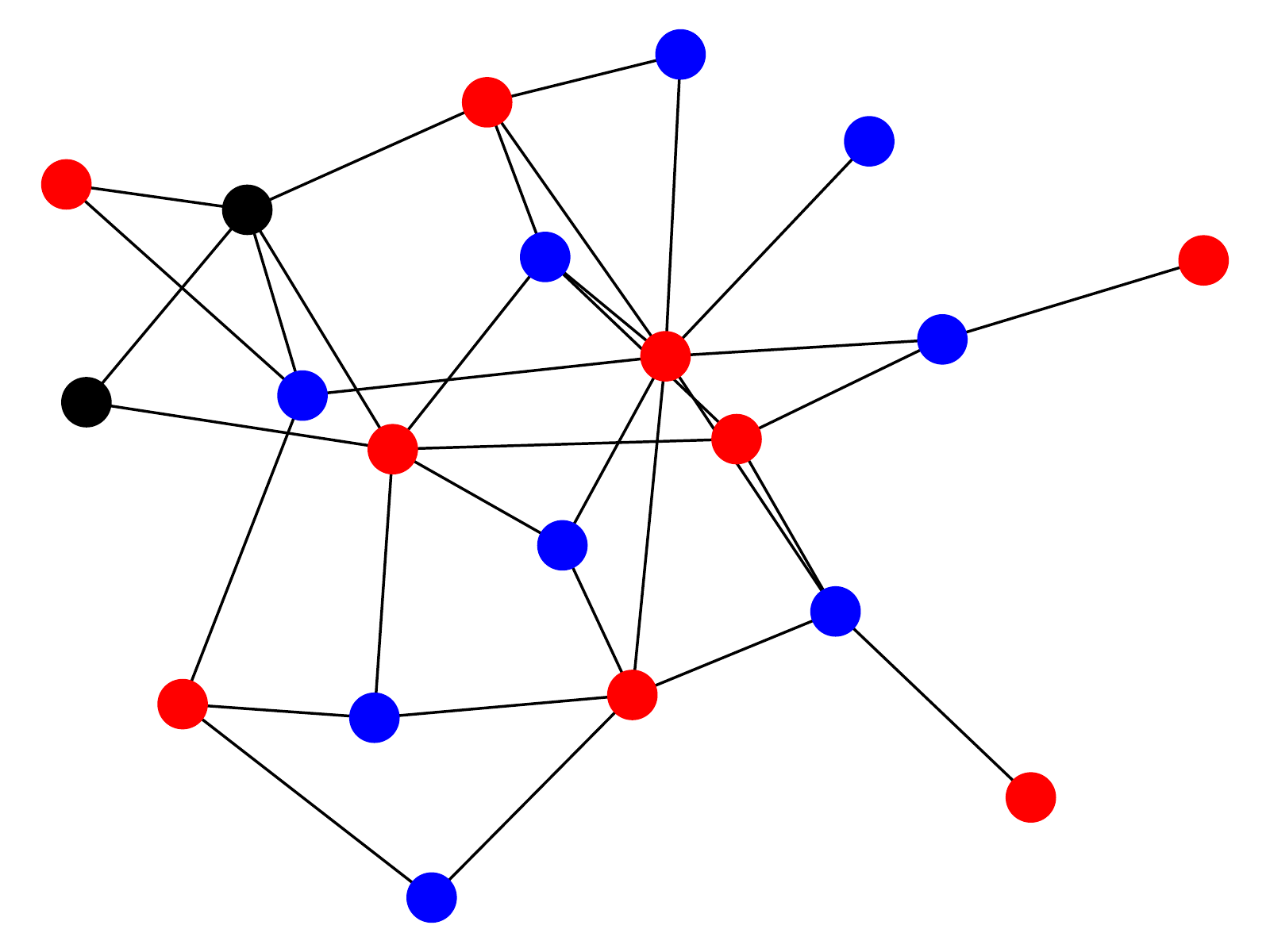} &
        \includegraphics[width=0.2\textwidth]{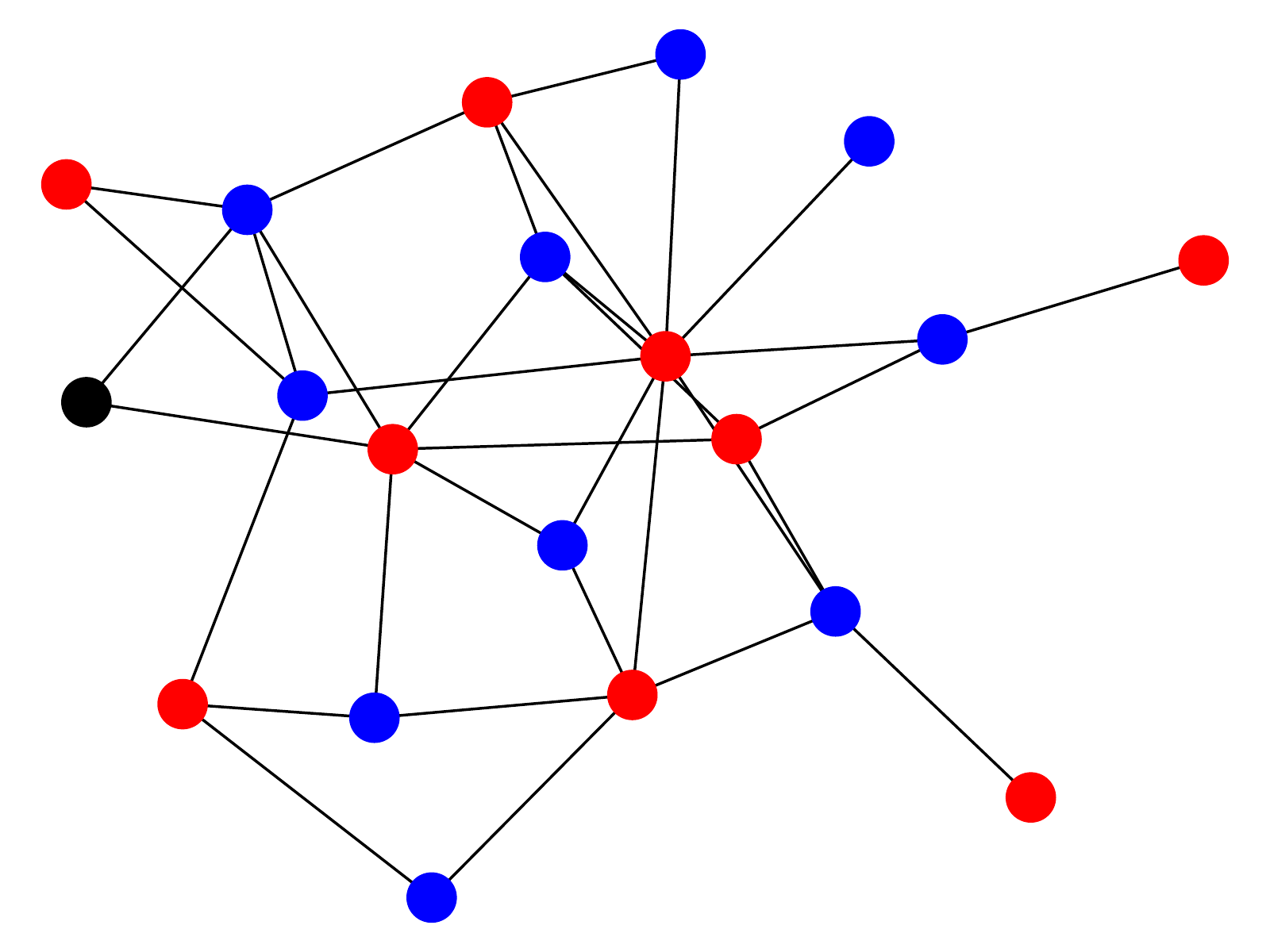} &
        \includegraphics[width=0.2\textwidth]{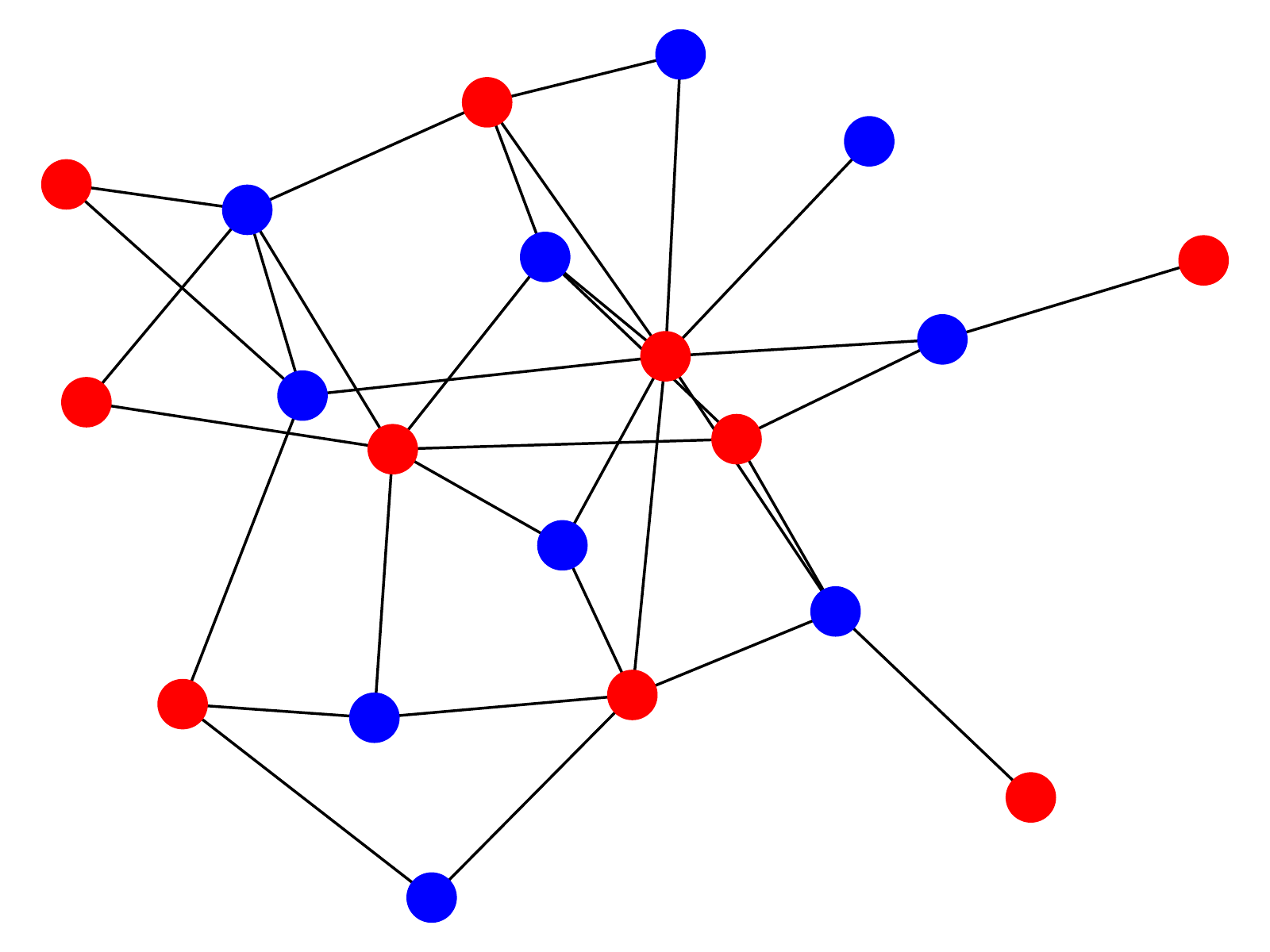} \\
    \end{tabular}
    \caption{\textbf{Order of actions of CombOpt Zero for {\sc MaxCut} on a ER graph.} Although the order of selecting nodes and the coloring is arbitrary, CombOpt Zero learned to color neighbors one by one with the opposite color.}
    \label{fig:maxcut-er-vis}
\end{figure*}

\begin{figure*}[t]
    \centering
    \begin{tabular}{@{}cccc@{}}
        \includegraphics[width=0.2\textwidth]{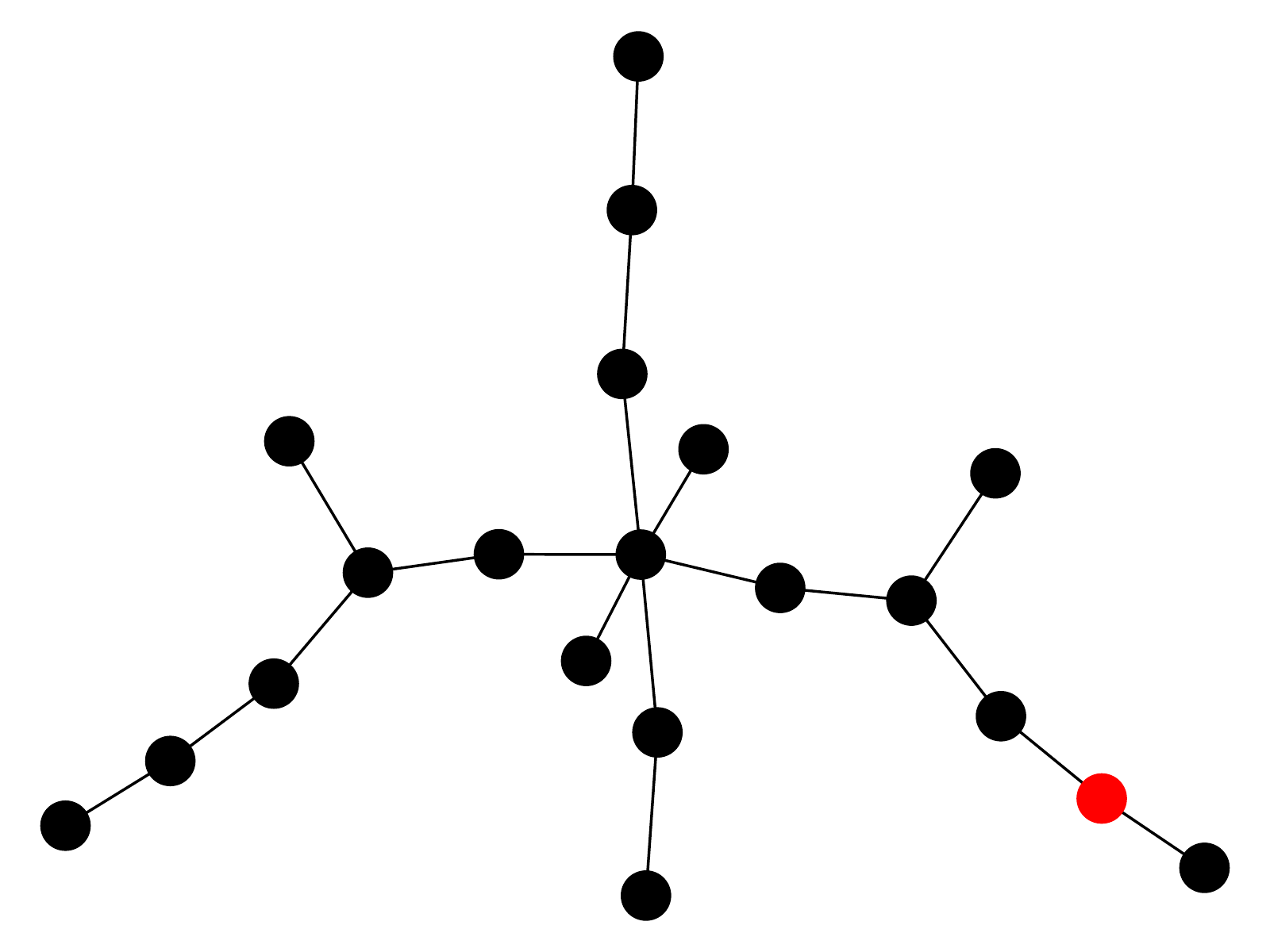} &
        \includegraphics[width=0.2\textwidth]{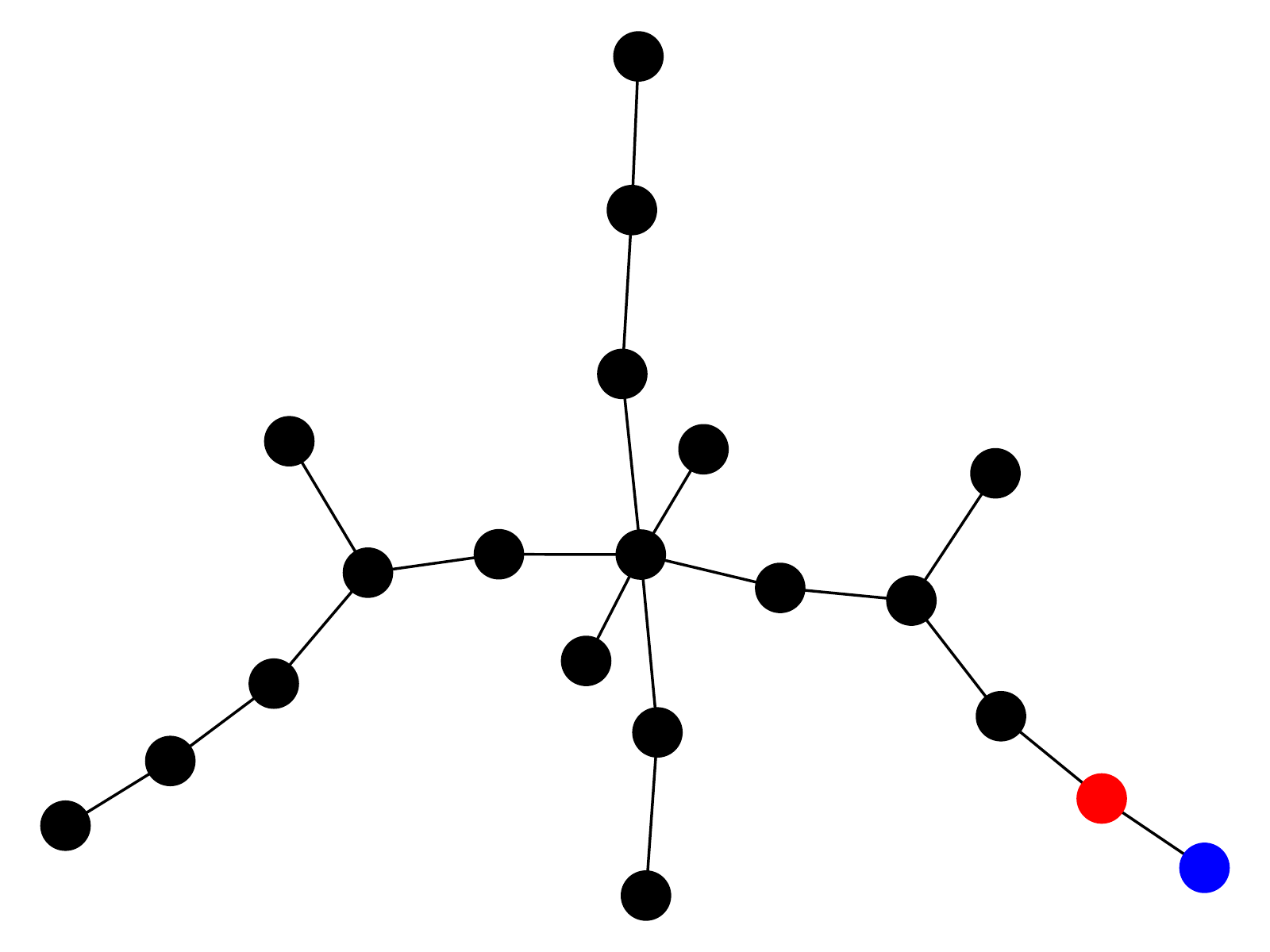} &
        \includegraphics[width=0.2\textwidth]{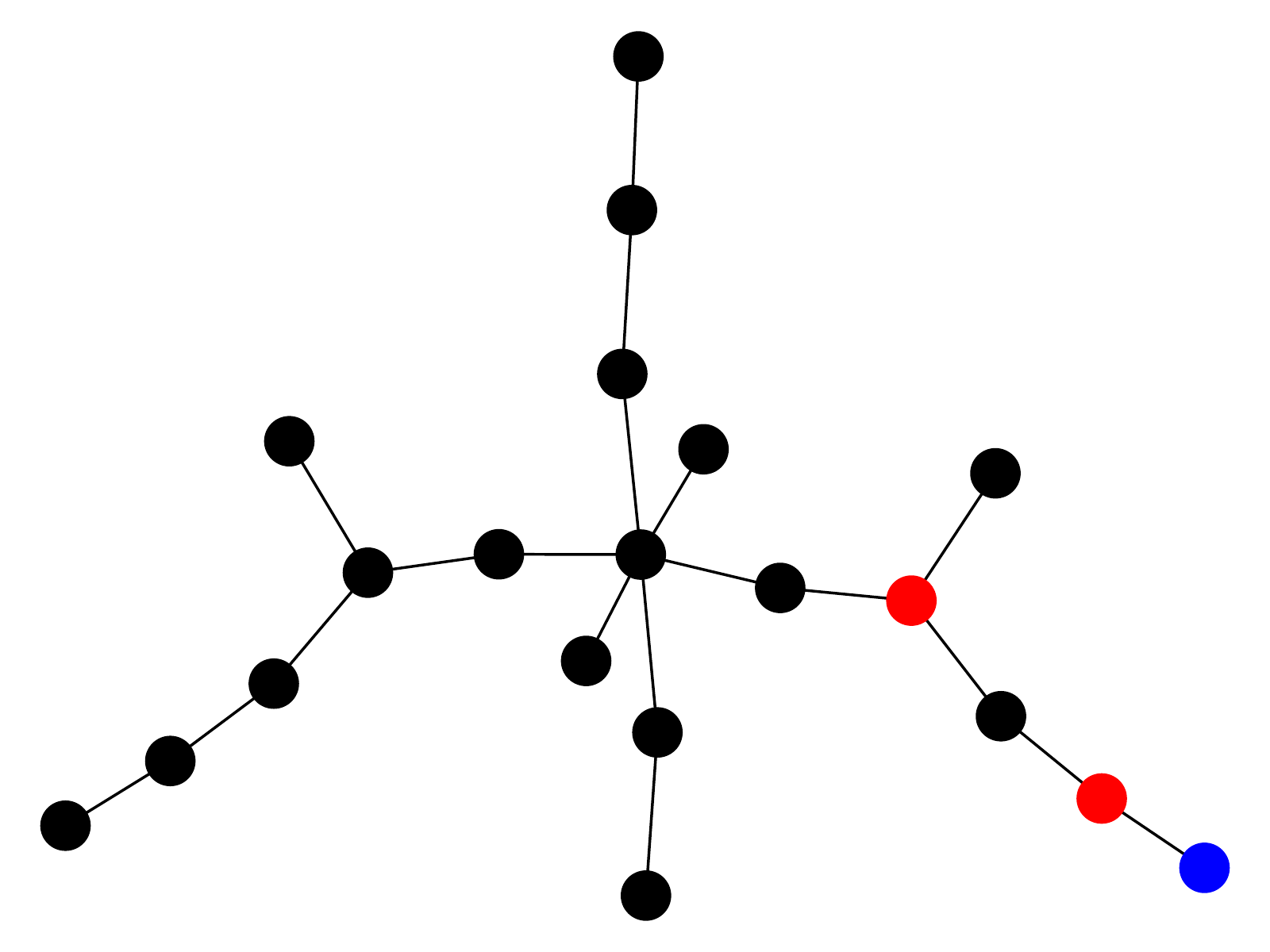} &
        \includegraphics[width=0.2\textwidth]{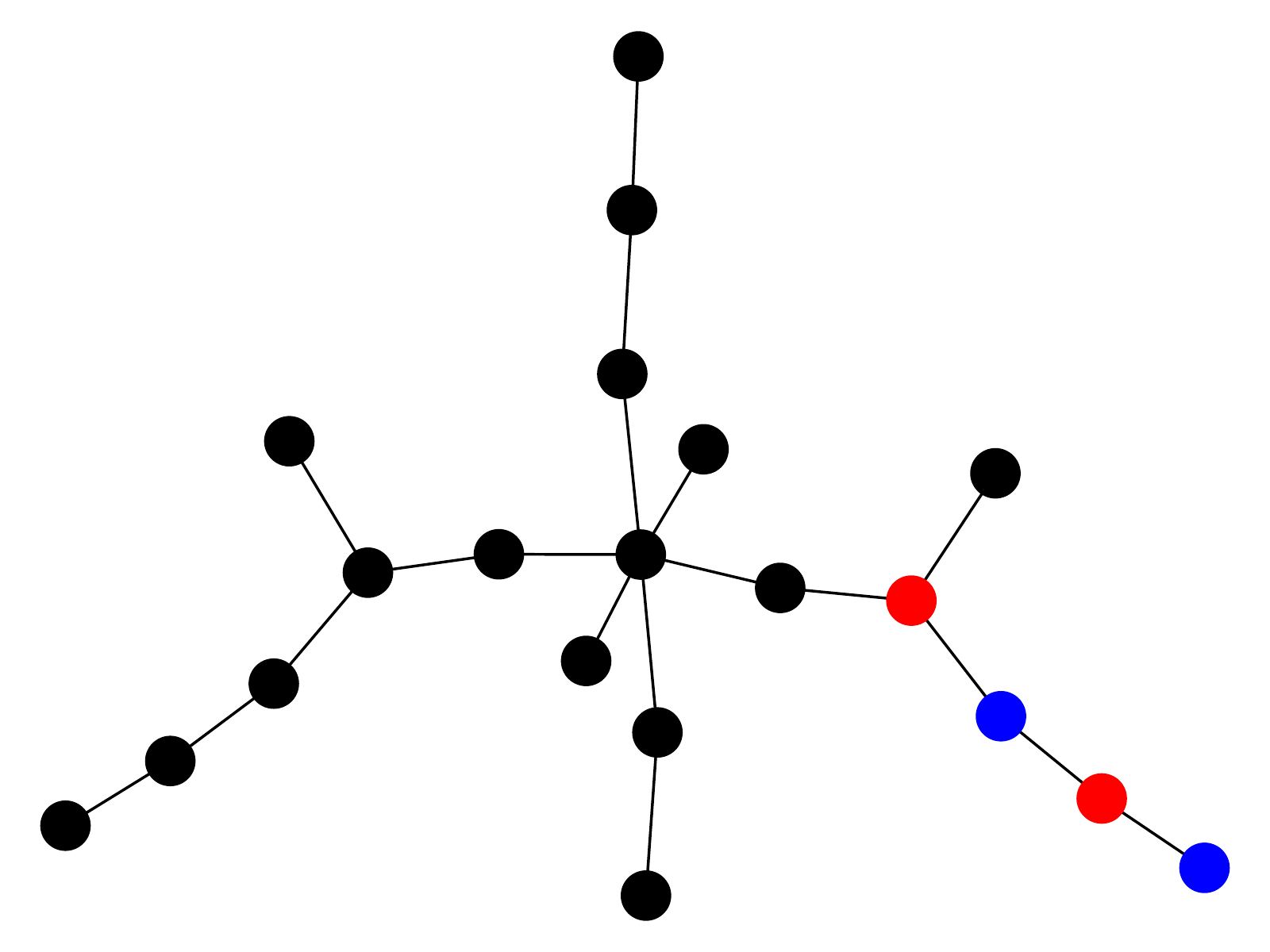} \\
        \includegraphics[width=0.2\textwidth]{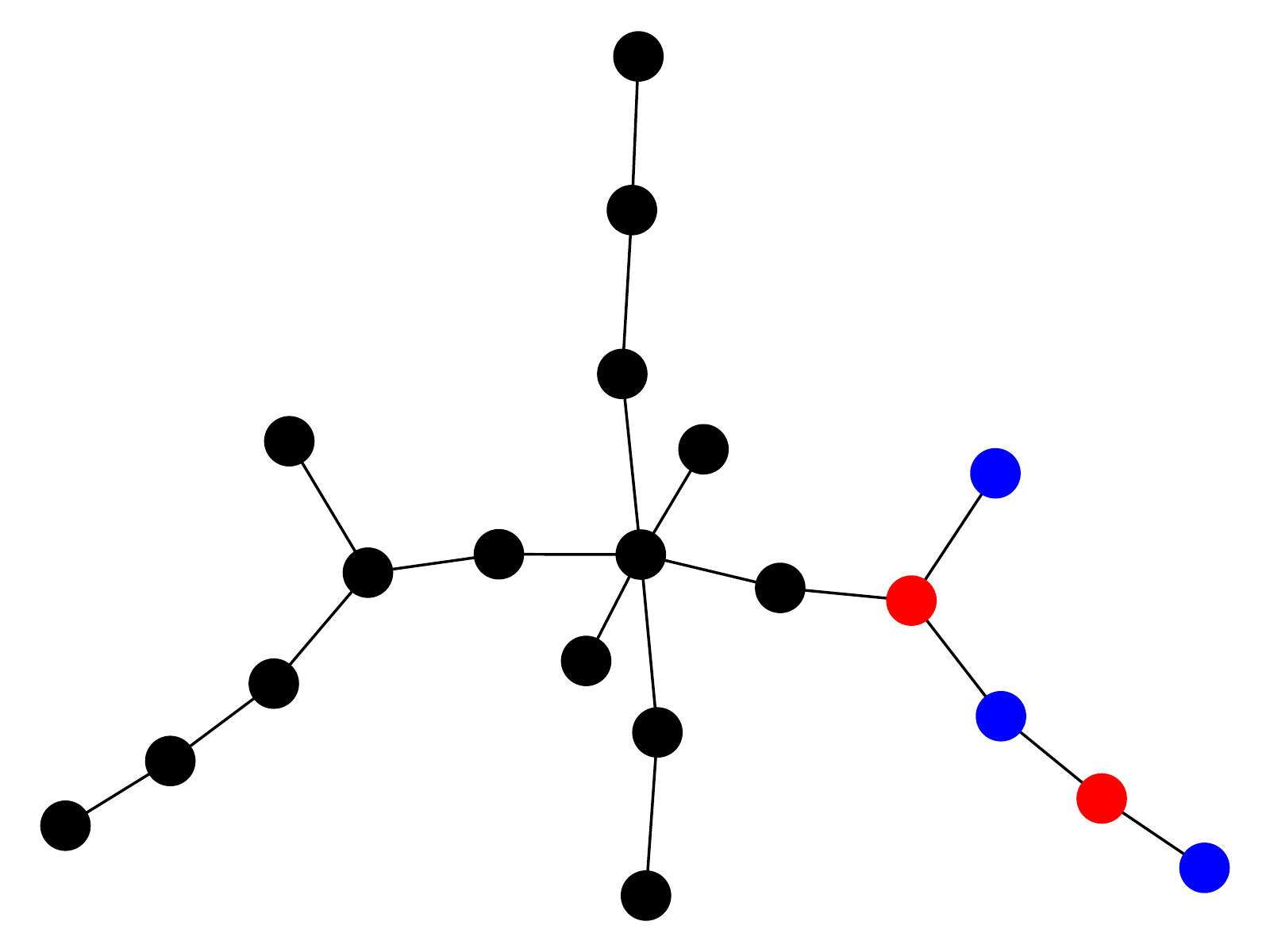} &
        \includegraphics[width=0.2\textwidth]{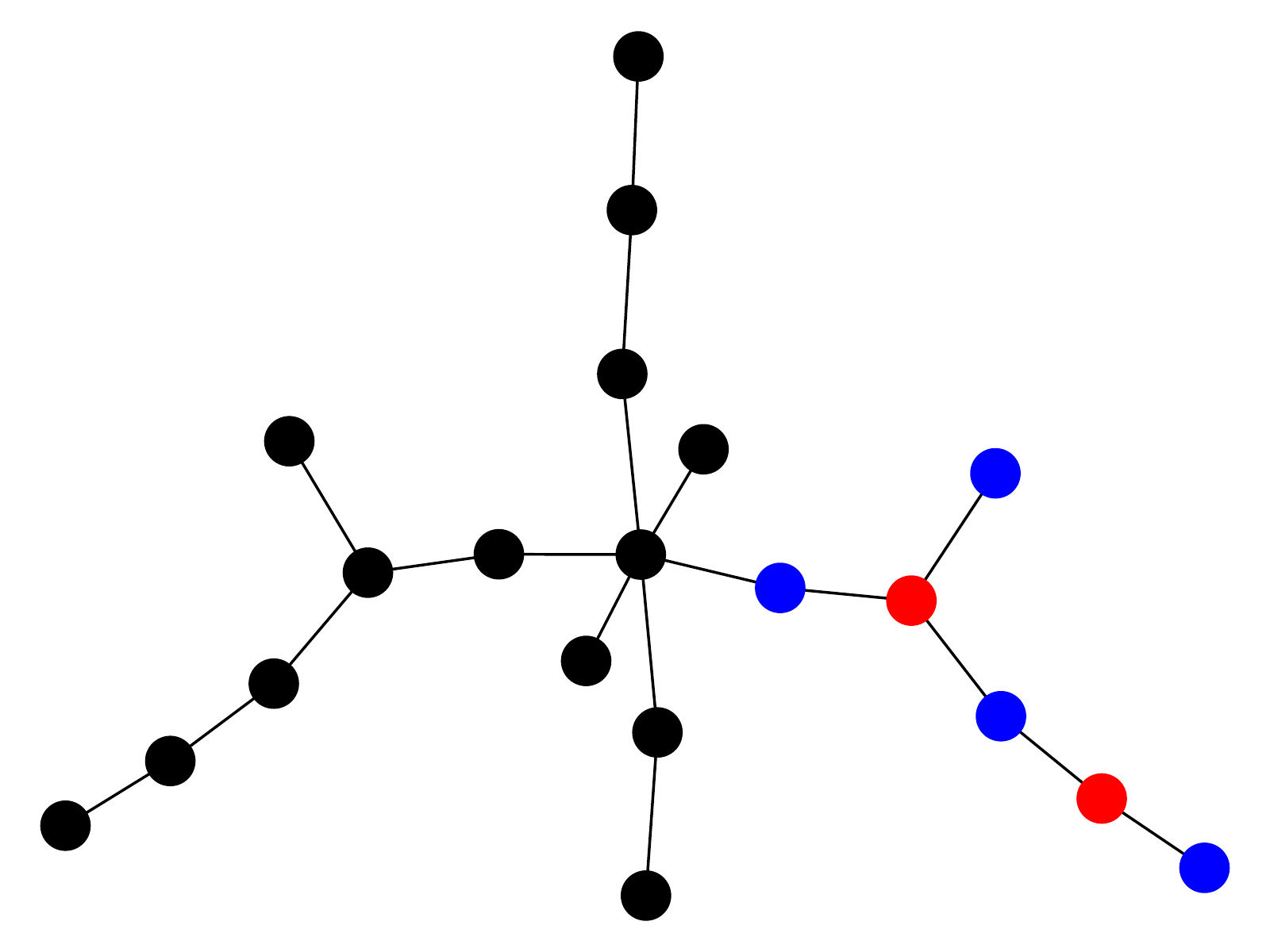} &
        \includegraphics[width=0.2\textwidth]{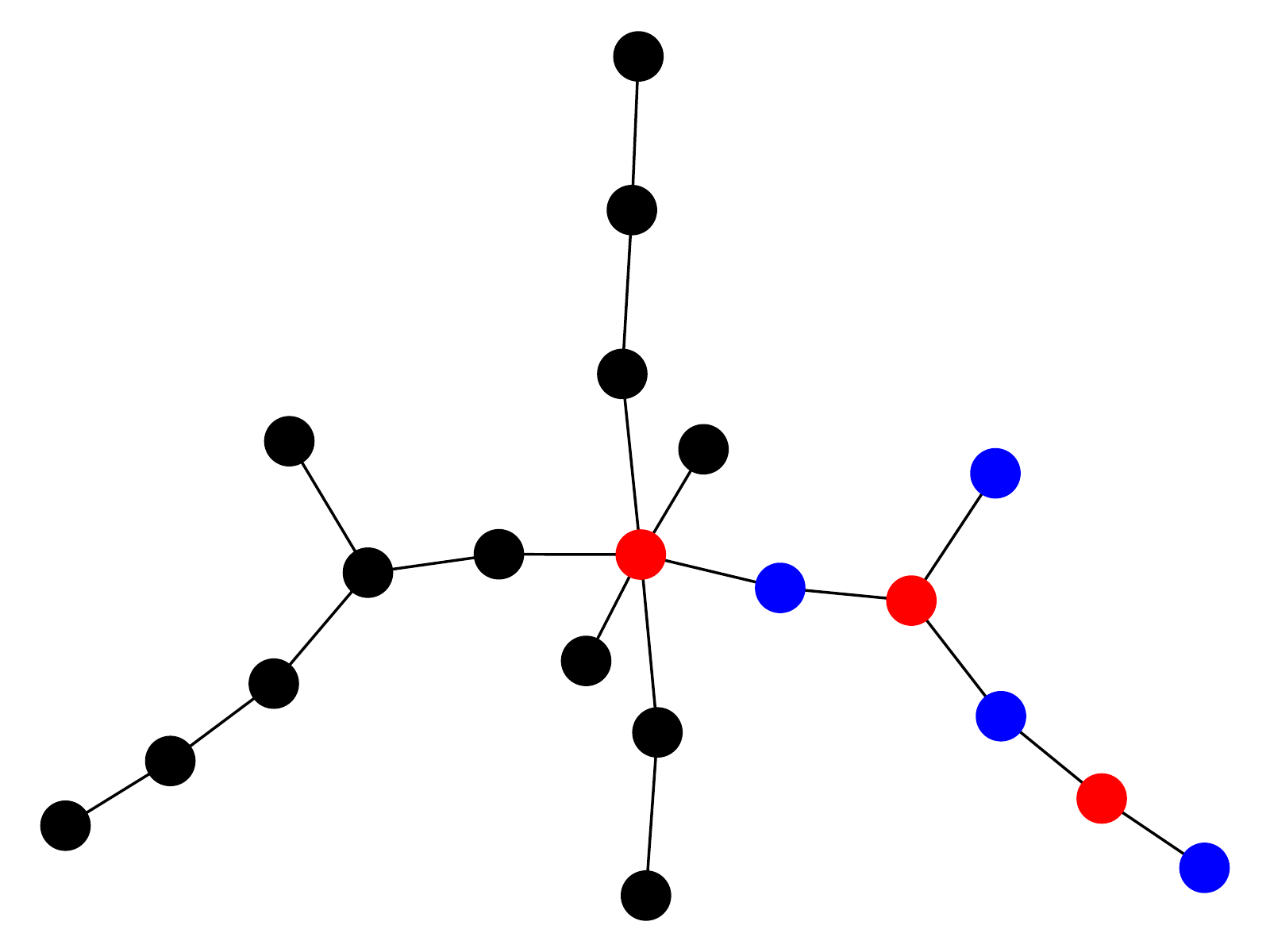} &
        \includegraphics[width=0.2\textwidth]{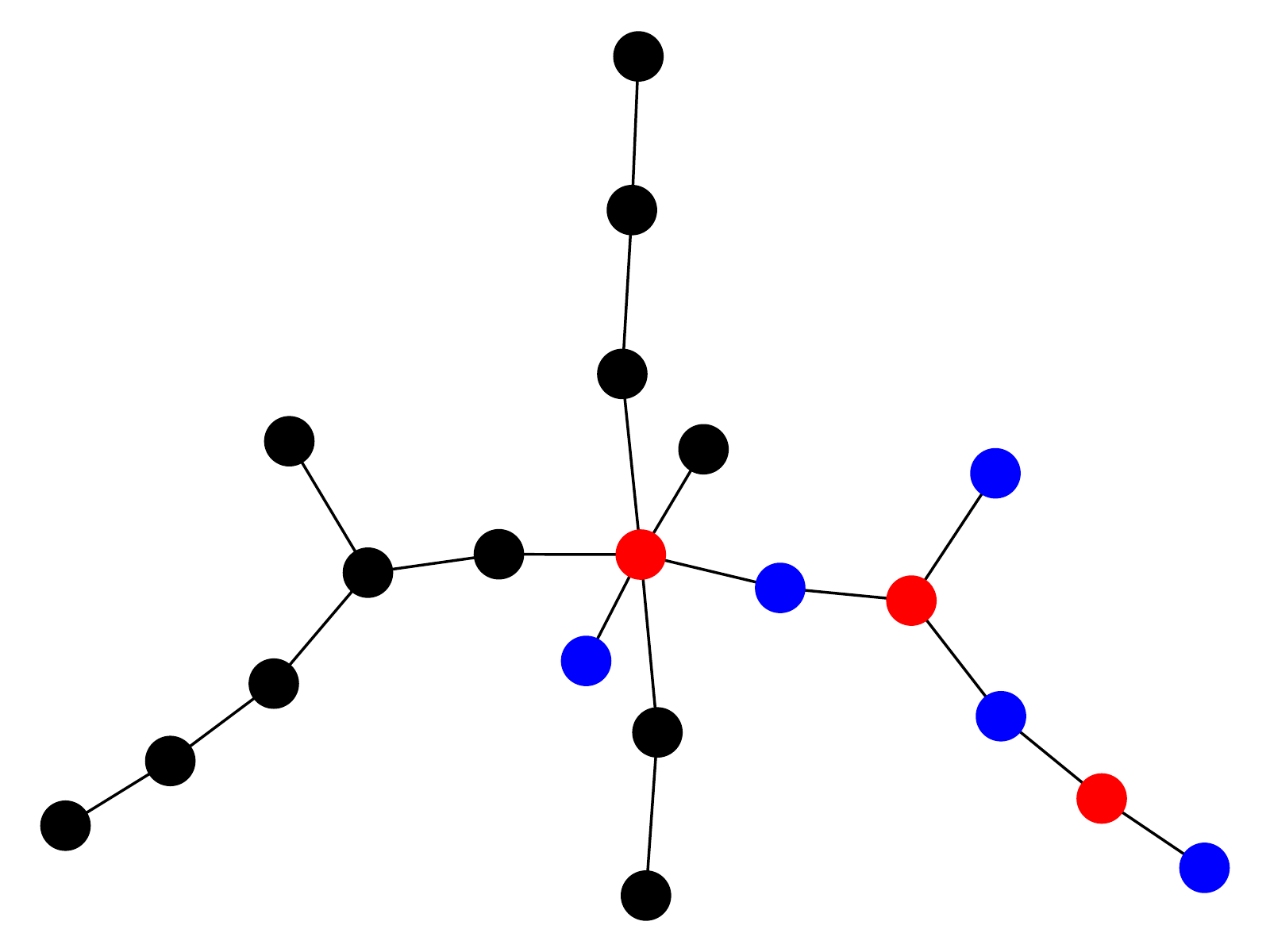} \\
        \includegraphics[width=0.2\textwidth]{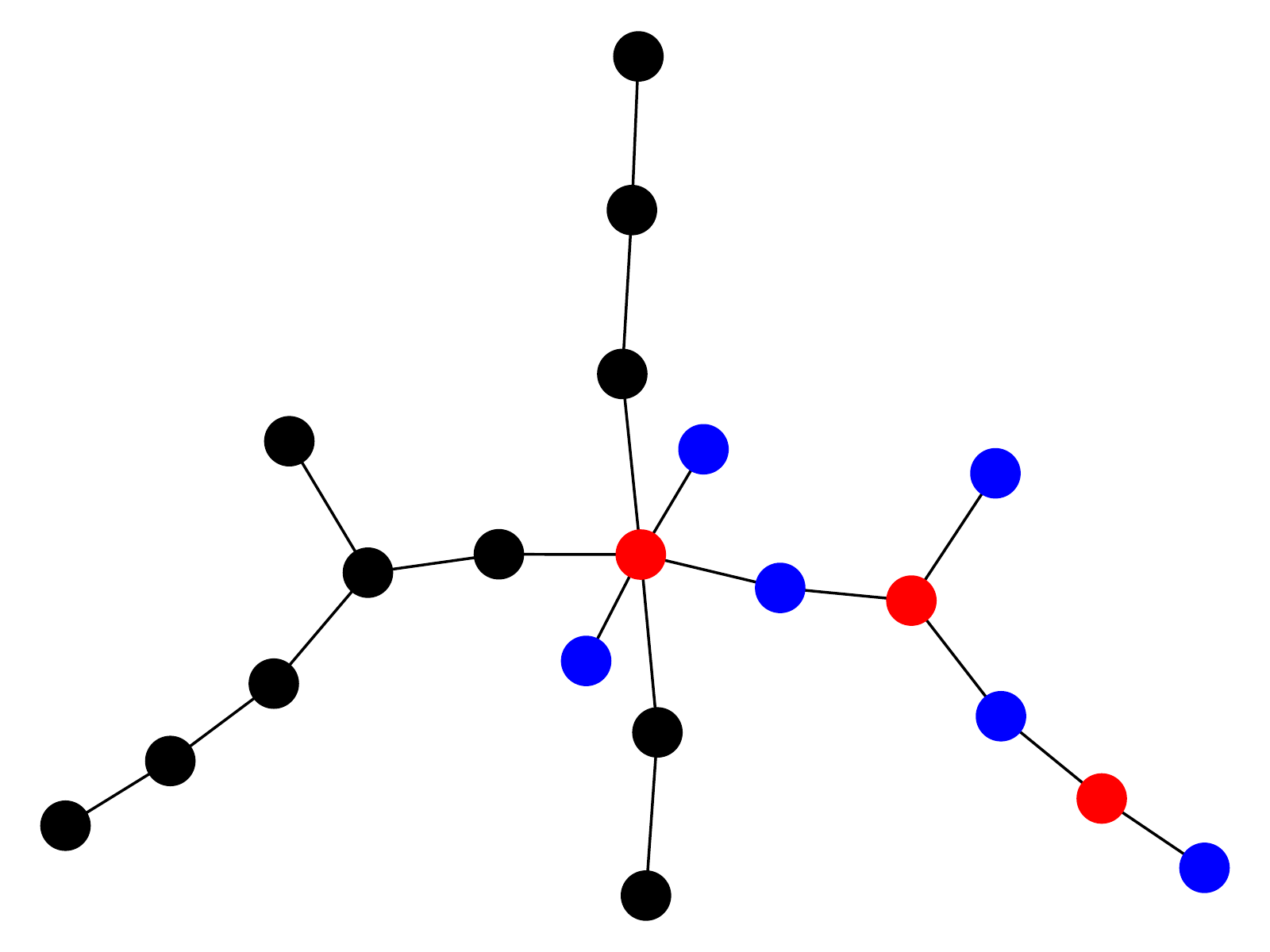} &
        \includegraphics[width=0.2\textwidth]{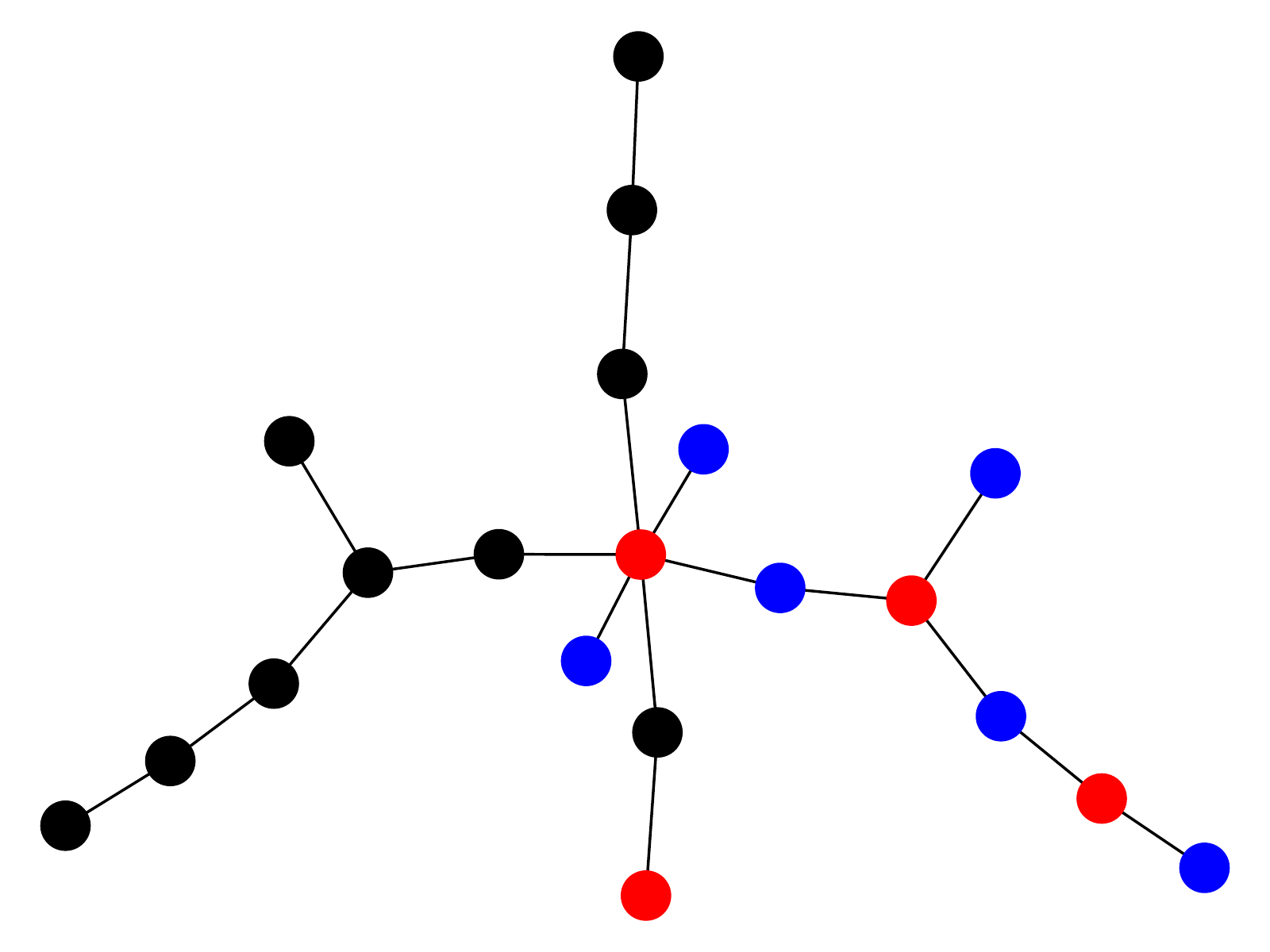} &
        \includegraphics[width=0.2\textwidth]{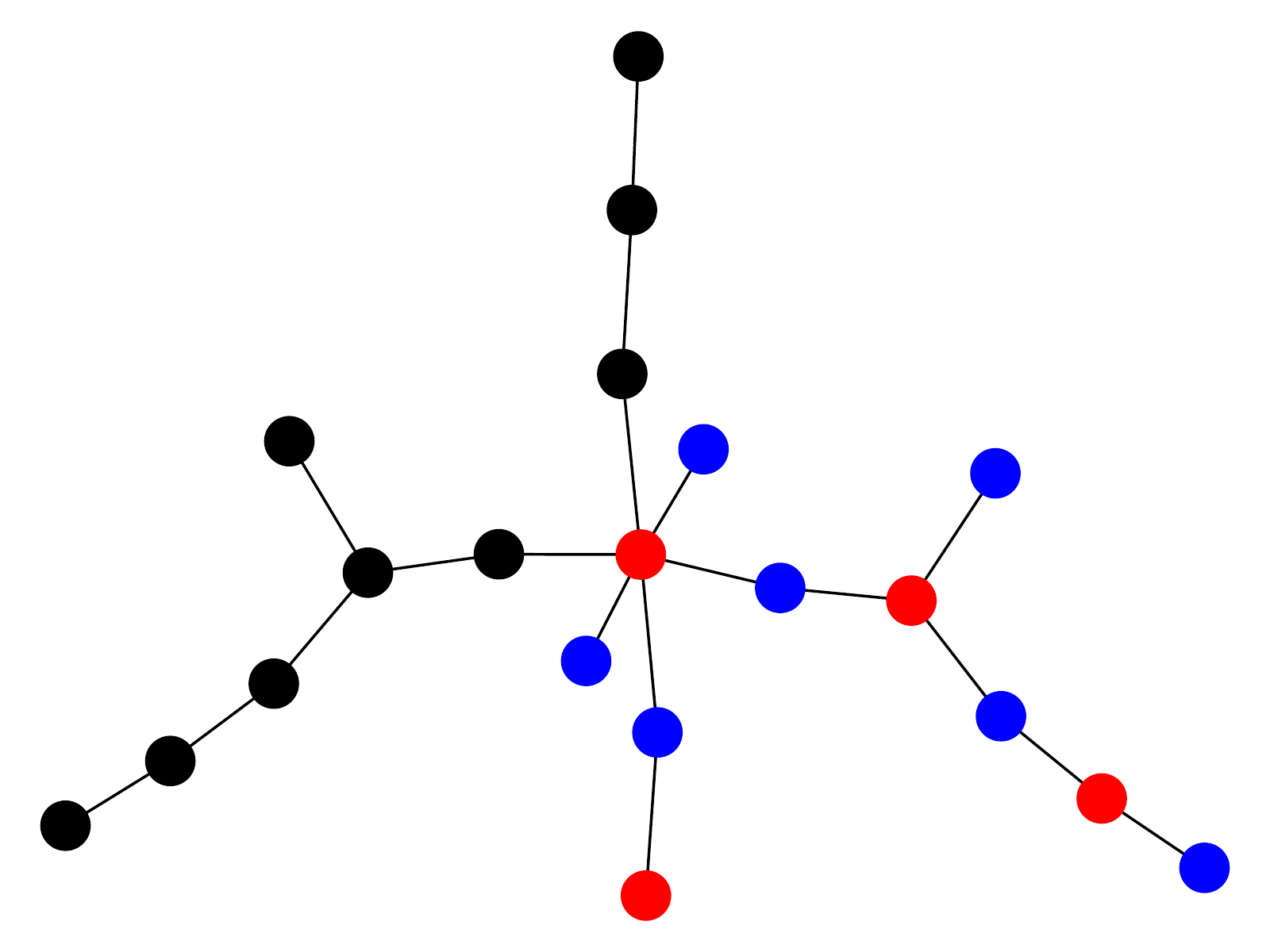} &
        \includegraphics[width=0.2\textwidth]{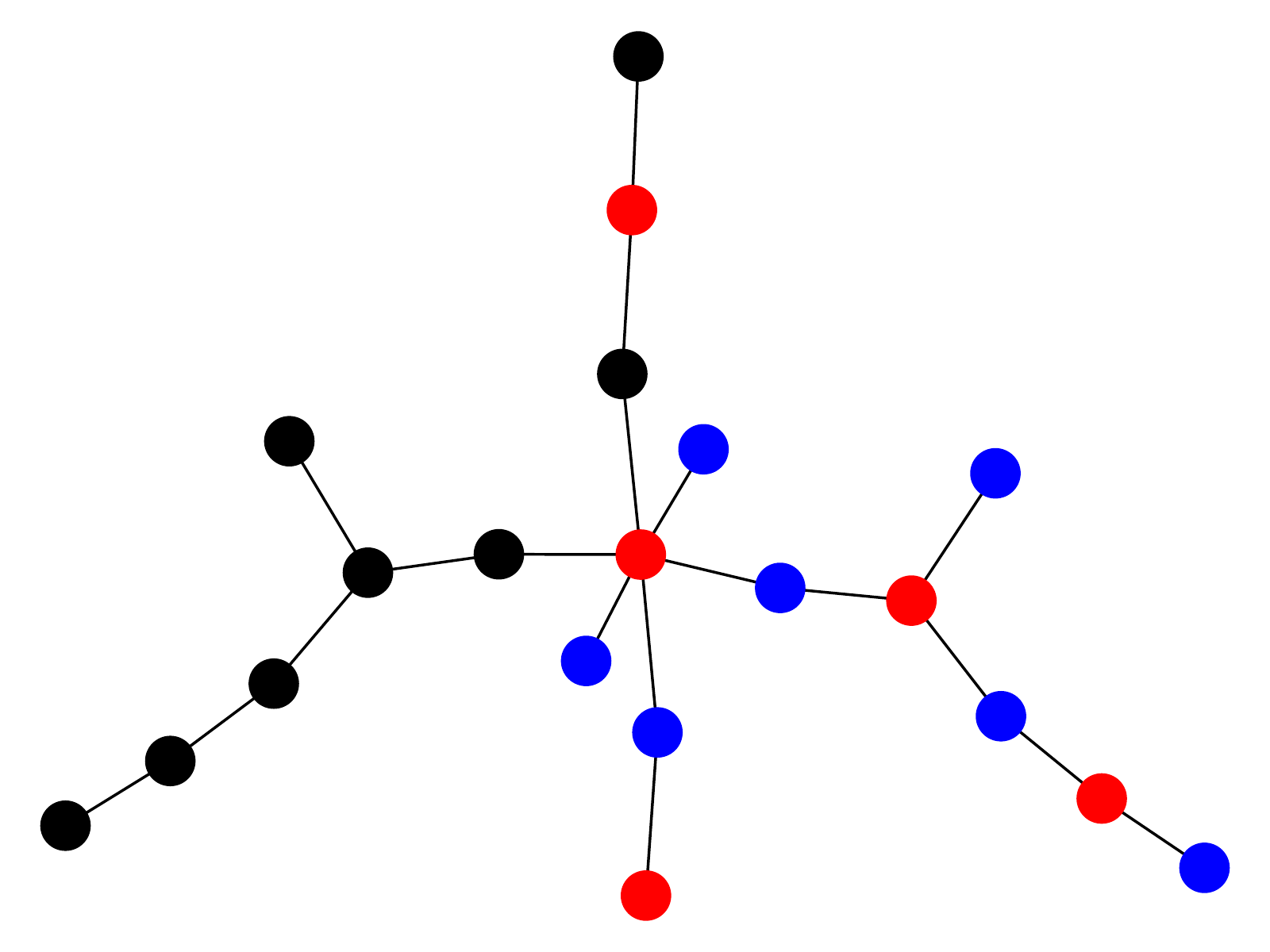} \\
        \includegraphics[width=0.2\textwidth]{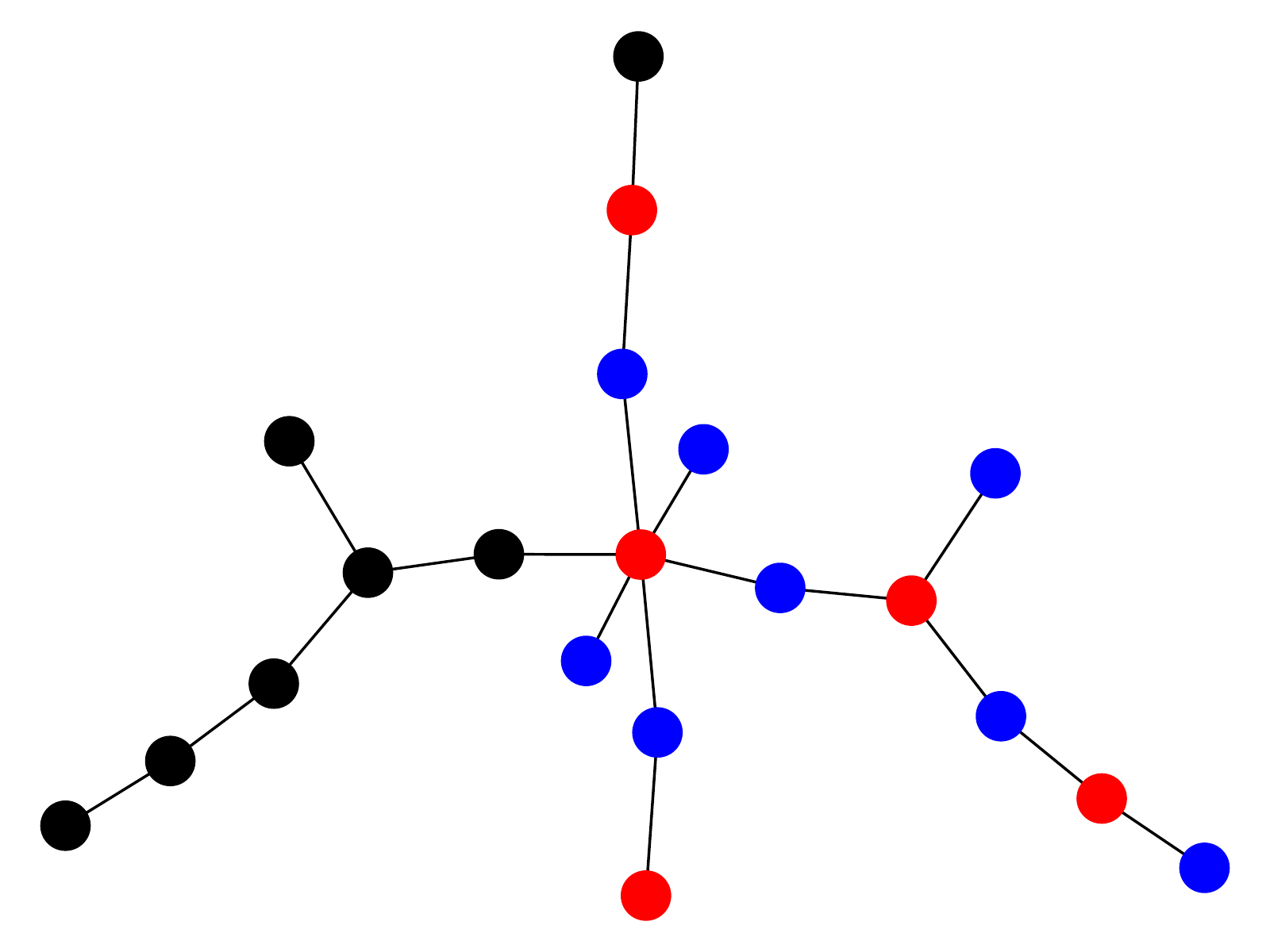} &
        \includegraphics[width=0.2\textwidth]{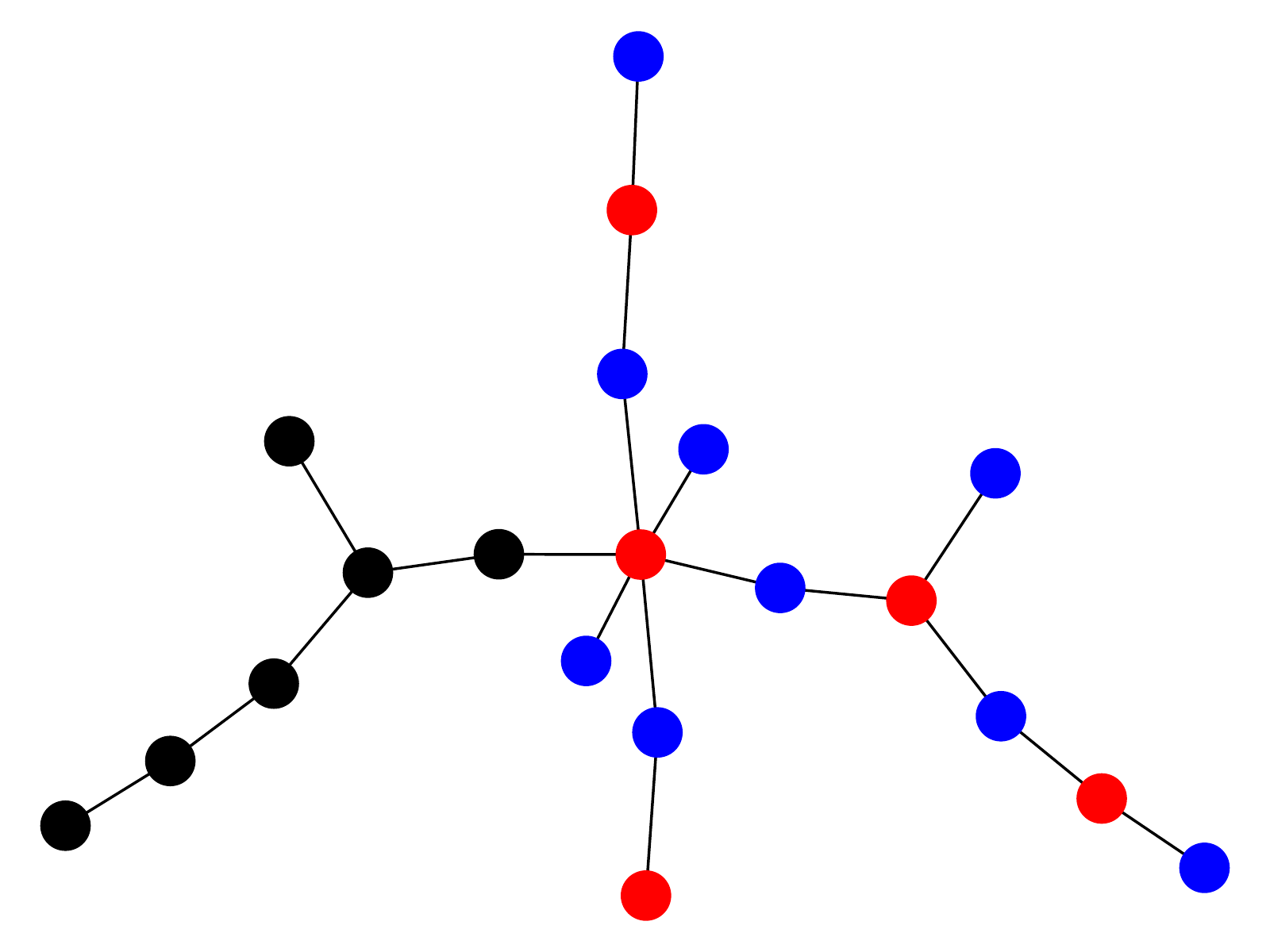} &
        \includegraphics[width=0.2\textwidth]{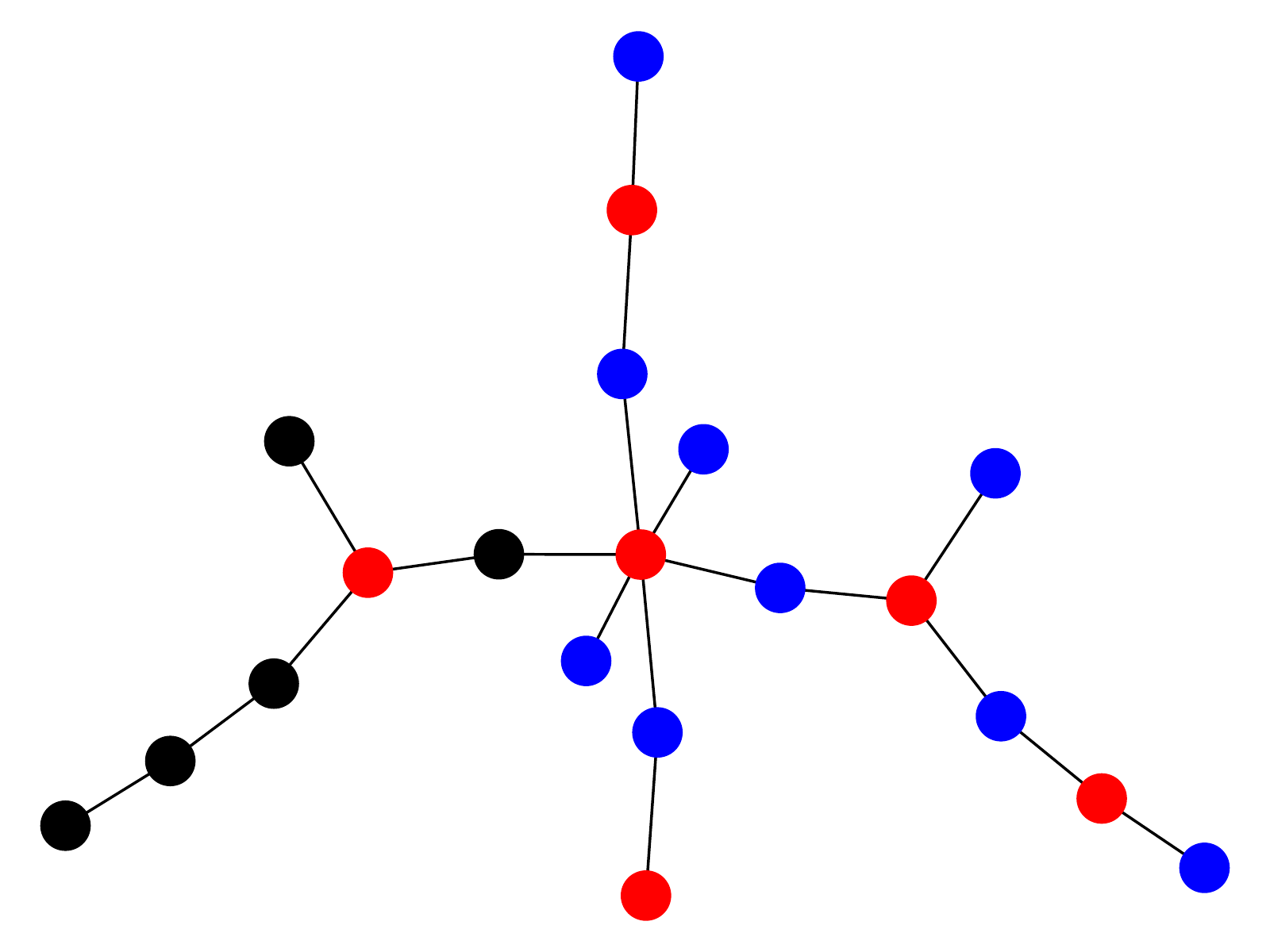} &
        \includegraphics[width=0.2\textwidth]{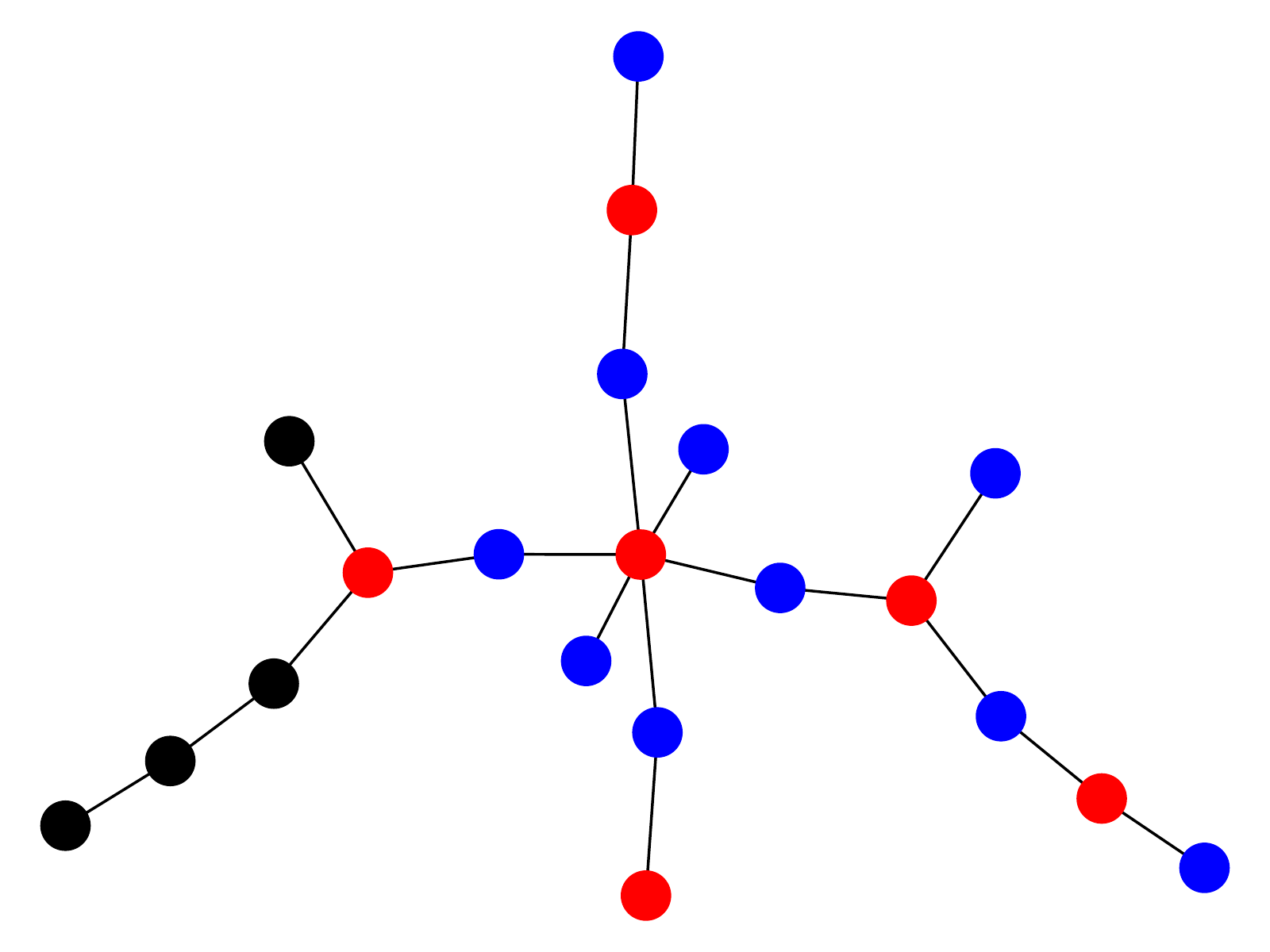} \\
        \includegraphics[width=0.2\textwidth]{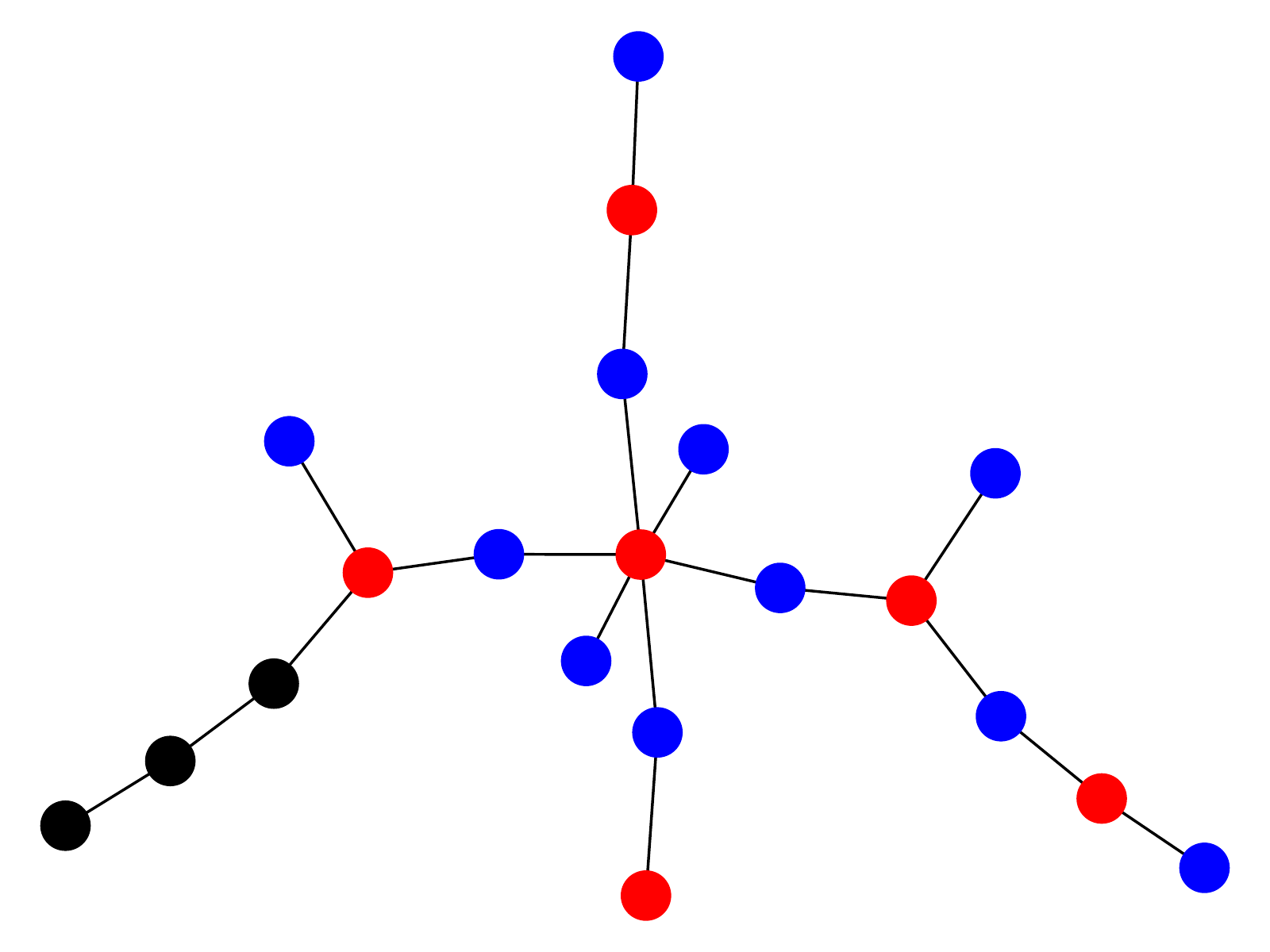} &
        \includegraphics[width=0.2\textwidth]{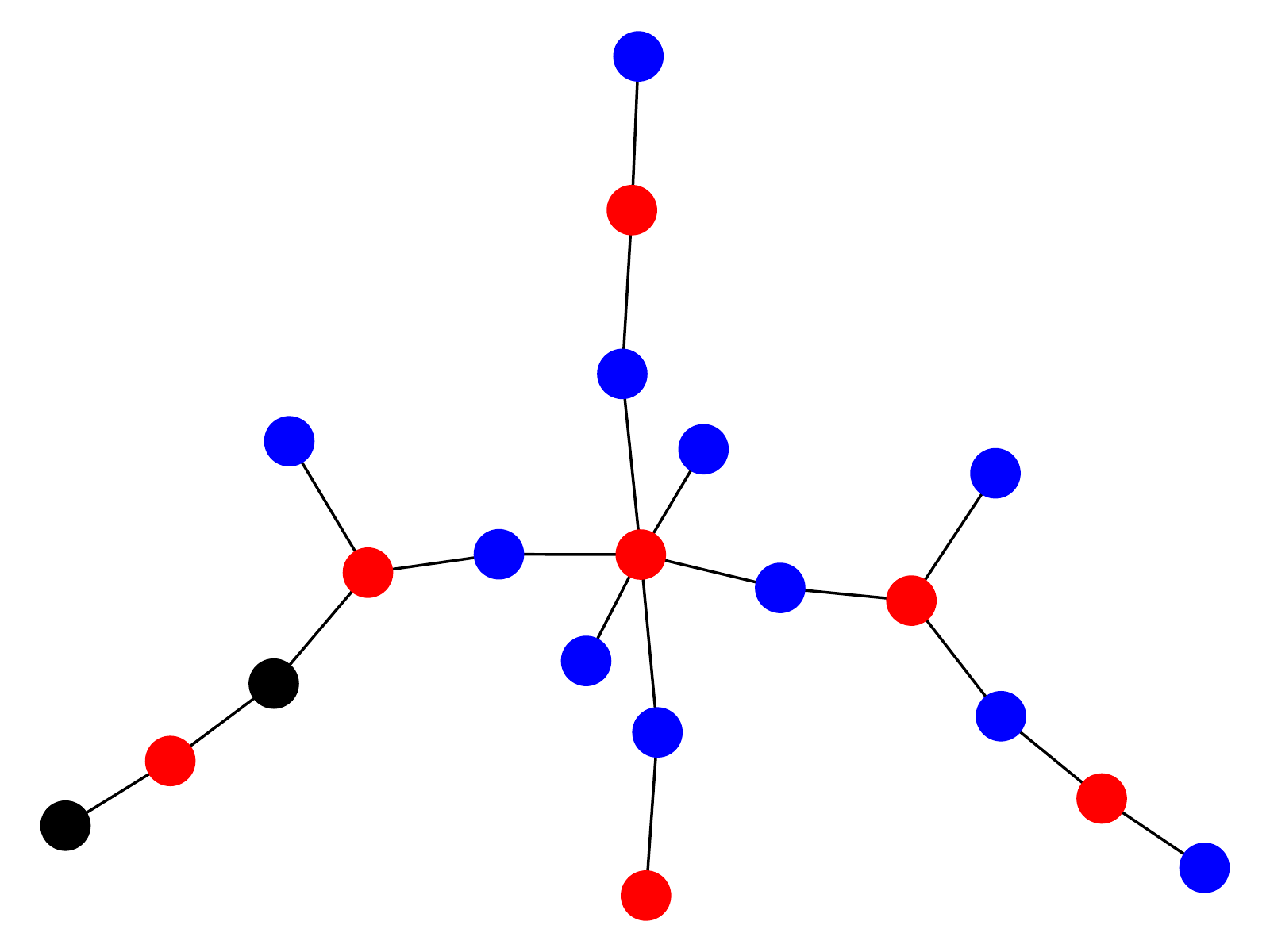} &
        \includegraphics[width=0.2\textwidth]{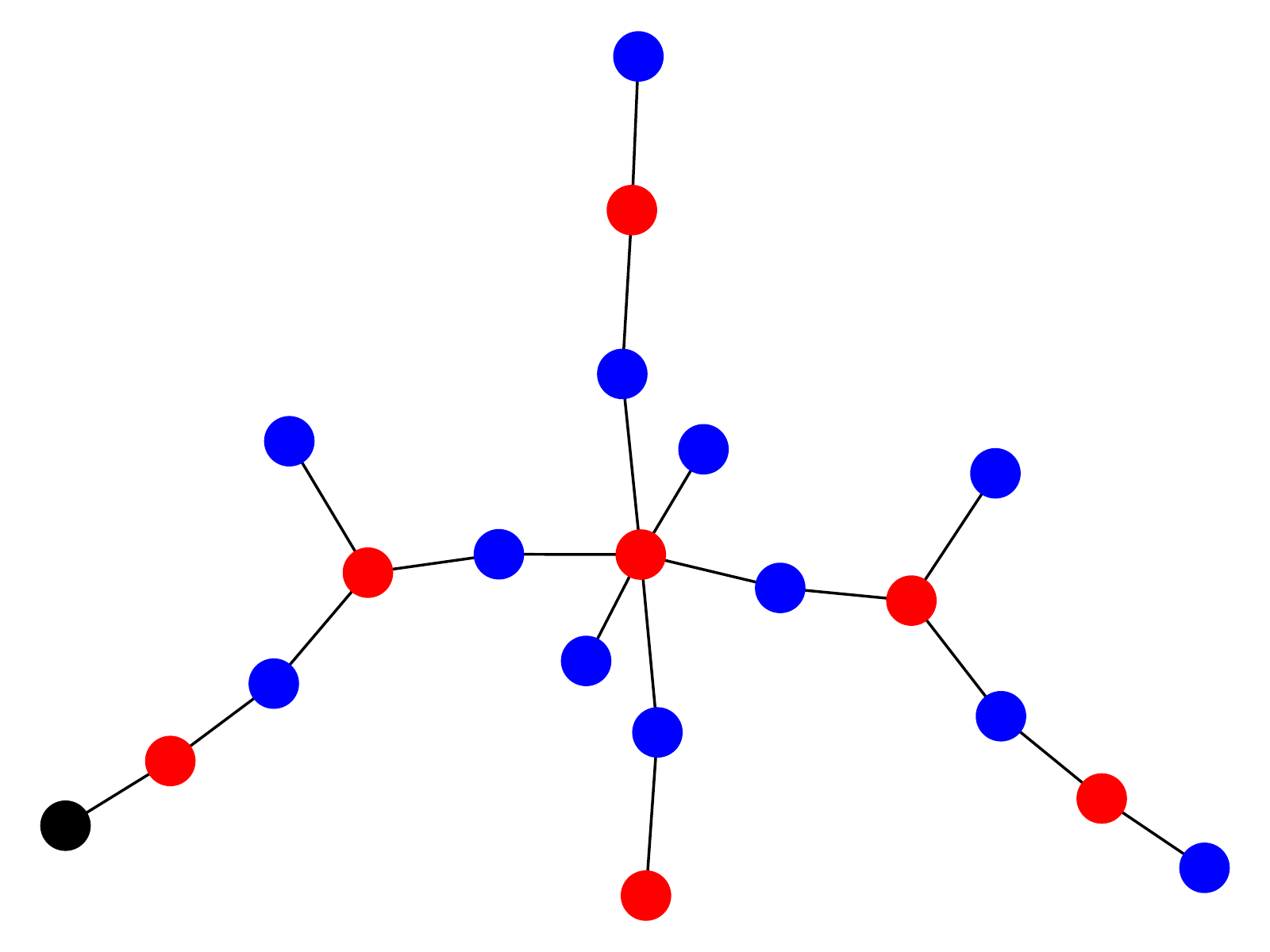} &
        \includegraphics[width=0.2\textwidth]{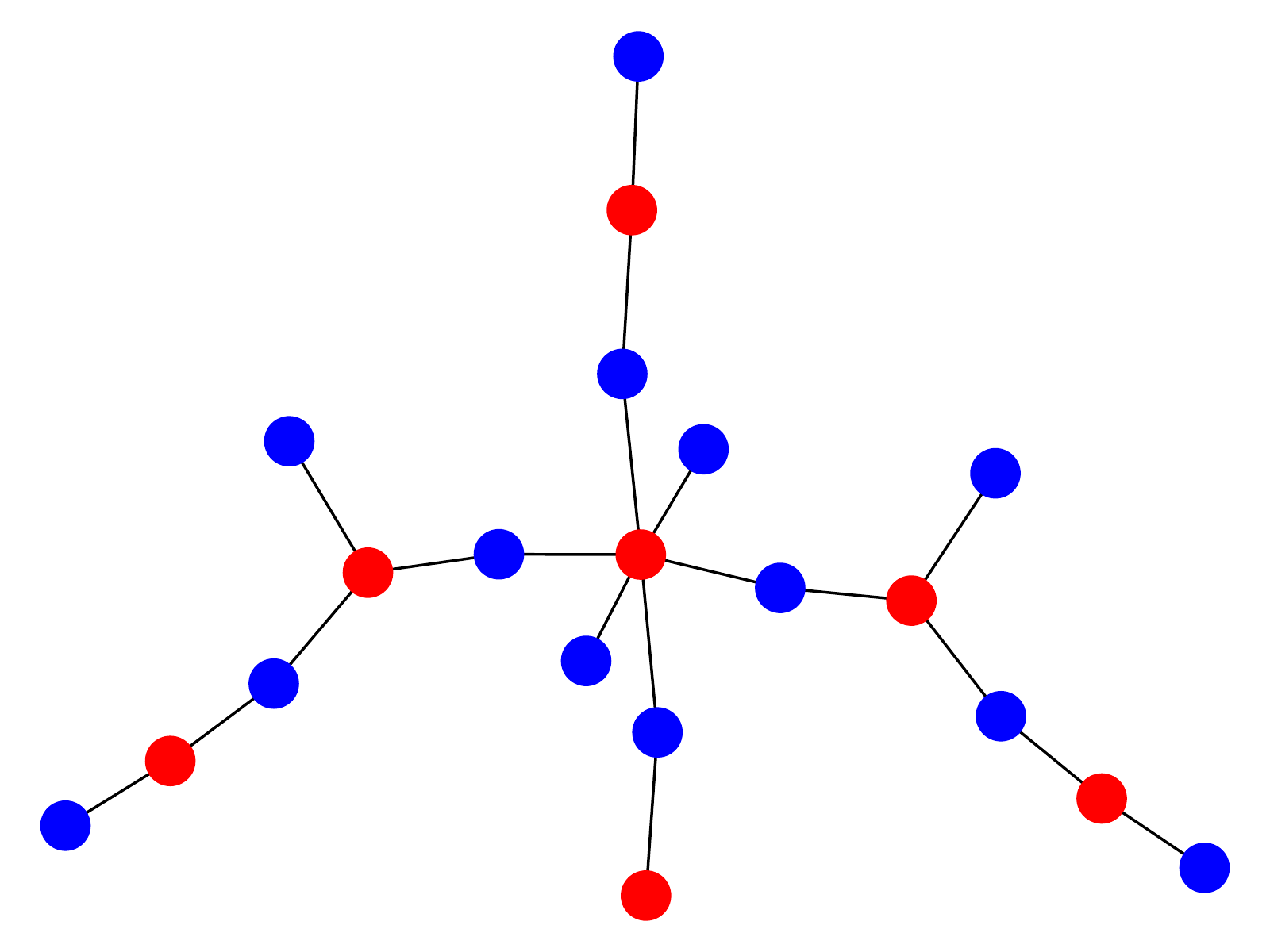} \\
    \end{tabular}
    \caption{\textbf{Order of actions of CombOpt Zero for {\sc MaxCut} on a tree.} It successfully found an optimal solution. The order of actions is similar to the order of visiting nodes in the depth-first-search.}
    \label{fig:maxcut-tree-vis}
\end{figure*}

\end{document}